\documentclass{article}

\usepackage{microtype}
\usepackage{graphicx}
\usepackage{subcaption}
\usepackage{booktabs} % for professional tables

\usepackage{subcaption}
\usepackage{color}
\usepackage{diagbox}
\usepackage{multirow}
\usepackage{float}
\usepackage{kotex}
\usepackage[percent]{overpic}
\usepackage{svg}
\usepackage{amsmath,mathtools}
\usepackage{makecell}
\makeatletter % some generic helpers
\newcommand{\LCASES}[1]{$\m@th\displaystyle{#1}$\hfil}
\newcommand{\CCASES}[1]{\hfil$\m@th\displaystyle{#1}$\hfil}
\newcommand{\RCASES}[1]{\hfil$\m@th\displaystyle{#1}$}
\makeatother
\usepackage{enumitem}
\usepackage{cancel}
\usepackage{needspace}

\newcases{ecases}{\quad}{\CCASES{##}}{\LCASES{##}}{\lbrace}{.}
\newcases{ecases*}{\quad}{\CCASES{##}}{{##}\hfil}{\lbrace}{.}

\usepackage[table]{xcolor}

\definecolor{maxred}{RGB}{255,214,214}   % light red/pink
\definecolor{minblue}{RGB}{205,235,255}  % light blue/cyan

\newcommand{\best}[1]{\textbf{#1}}
\newcommand{\second}[1]{\underline{#1}}
\usepackage{siunitx}

\usepackage{hyperref}

\usepackage[accepted]{icml2026}

\usepackage{amsmath}
\usepackage{amssymb}
\usepackage{mathtools}
\usepackage{amsthm}

\usepackage[capitalize,noabbrev]{cleveref}

\theoremstyle{plain}
\newtheorem{theorem}{Theorem}[section]

\newtheorem{lemma}[theorem]{Lemma}

\theoremstyle{definition}
\newtheorem{definition}[theorem]{Definition}

\theoremstyle{remark}

\usepackage[textsize=tiny]{todonotes}

\icmltitlerunning{Interaction-Breaking Adversarial Learning Framework for Robust Multi-Agent Reinforcement Learning}

\begin{document}

\twocolumn[
  \icmltitle{Interaction-Breaking Adversarial Learning Framework \\for Robust Multi-Agent Reinforcement Learning}

  % It is OKAY to include author information, even for blind submissions: the
  % style file will automatically remove it for you unless you've provided
  % the [accepted] option to the icml2026 package.

  % List of affiliations: The first argument should be a (short) identifier you
  % will use later to specify author affiliations Academic affiliations
  % should list Department, University, City, Region, Country Industry
  % affiliations should list Company, City, Region, Country

  % You can specify symbols, otherwise they are numbered in order. IBALMly, you
  % should not use this facility. Affiliations will be numbered in order of
  % appearance and this is the preferred way.
  \icmlsetsymbol{equal}{*}

  \begin{icmlauthorlist}
    \icmlauthor{Sunwoo Lee}{yyy}
    \icmlauthor{Mingu Kang}{yyy}
    \icmlauthor{Yonghyeon Jo}{yyy}
    \icmlauthor{Seungyul Han}{yyy} \hspace{-0.12in} $^{*}$
    % \icmlauthor{Firstname5 Lastname5}{yyy}
    % \icmlauthor{Firstname6 Lastname6}{sch,yyy,comp}
    % \icmlauthor{Firstname7 Lastname7}{comp}
    %\icmlauthor{}{sch}
    % \icmlauthor{Firstname8 Lastname8}{sch}
    % \icmlauthor{Firstname8 Lastname8}{yyy,comp}
    %\icmlauthor{}{sch}
    %\icmlauthor{}{sch}
  \end{icmlauthorlist}

  \icmlaffiliation{yyy}{Graduate School of Artificial Intelligence, UNIST, Ulsan, South Korea}
  % \icmlaffiliation{comp}{Company Name, Location, Country}
  % \icmlaffiliation{sch}{School of ZZZ, Institute of WWW, Location, Country}

  \icmlcorrespondingauthor{Seungyul Han}{syhan@unist.ac.kr}

  % You may provide any keywords that you find helpful for describing your
  % paper; these are used to populate the "keywords" metadata in the PDF but
  % will not be shown in the document
  \icmlkeywords{Machine Learning, ICML}

  \vskip 0.3in
]

% this must go after the closing bracket ] following \twocolumn[ ...

% This command actually creates the footnote in the first column listing the
% affiliations and the copyright notice. The command takes one argument, which
% is text to display at the start of the footnote. The \icmlEqualContribution
% command is standard text for equal contribution. Remove it (just {}) if you
% do not need this facility.

% Use ONE of the following lines. DO NOT remove the command.
% If you have no special notice, KEEP empty braces:
\printAffiliationsAndNotice{}  % no special notice (required even if empty)
% Or, if applicable, use the standard equal contribution text:
% \printAffiliationsAndNotice{\icmlEqualContribution}

\begin{abstract}

Cooperation is central to multi-agent reinforcement learning (MARL), yet learned coordination can be fragile when external perturbations disrupt inter-agent interactions. Prior robust MARL methods have primarily considered value-oriented attacks, leaving a gap in robustness when interaction structures themselves are corrupted. In this paper, we propose an interaction-breaking adversarial learning (IBAL) framework that takes an information-theoretic view to construct attacks that impede coordination by perturbing agents’ observations and actions, and trains agents to perform reliably under such disruptions. Empirically, our approach improves robustness over existing robust MARL baselines across diverse attack settings and yields stronger performance even under agent-missing scenarios. Our code is available at https://sunwoolee0504.github.io/IBAL.
\end{abstract}

%%%%%%%%%%%%%%%%%%%%%%%%%%%%%%%%%%%%%%%%%%%%%%%%%%%%%%%%%%%%%%%%%%%
\section{Introduction}
Cooperative multi-agent reinforcement learning (MARL) studies sequential decision making with multiple agents that coordinate to optimize a shared objective, with applications such as competitive games, multi-robot control, and resource management \cite{yun2022cooperative, chen2023deep, orr2023multi, jendoubi2023multi}. Since each agent typically observes only a partial view of the environment, the centralized training and decentralized execution (CTDE) framework \cite{oliehoek2008optimal} is widely adopted, learning a centralized value function during training while deploying decentralized policies conditioned on local observations. A central challenge in CTDE is credit assignment, and representative approaches include Value Decomposition Networks (VDN) \cite{sunehag2017value}, QMIX \cite{rashid2020monotonic}, and related extensions \cite{wang2020qplex, jo2026retaining}, which aim to align individual decisions with optimal joint behavior through joint action-value learning. Despite their success, CTDE-based methods can suffer substantial performance degradation under environmental perturbations or adversarial attacks \cite{moos2022robust, guo2022towards}, motivating robustness as a central requirement in cooperative MARL.

%%%%%%%%%%%%%%%%%%%%%%%%%%%%%%%%%%%%%%%%%%%%%%%%%%%%%%%%%%%%%%%%%%%

A growing body of work has investigated robustness in MARL \cite{lin2020robustness, li2019robust, he2023robust}. Existing approaches typically pursue robustness by modeling uncertainty and perturbations during policy learning \cite{zhang2021multi,goodfellow2014explaining}, or by considering adversarial manipulations that degrade agent behavior, for example by biasing agents toward suboptimal actions \cite{bukharin2023robust,yuan2023robust,lee2025wolfpack} or by corrupting communication channels \cite{xue2021mis} used for coordination. While effective under their targeted perturbation models, these methods often fail to capture breakdowns in the interaction structure underlying coordination. As a result, robustness against interaction-level failures can remain limited when agents cannot reliably interact or when coordinated attacks disrupt their dependencies, causing substantial performance degradation in tightly coupled cooperative tasks.

%%%%%%%%%%%%%%%%%%%%%%%%%%%%%%%%%%%%%%%%%%%%%%%%%%%%%%%%%%%%%%%%%%%

Such attacks are particularly critical under CTDE, where effective decision making hinges on coordination rather than independent behaviors. To address this issue, we propose \textbf{Interaction-Breaking Adversarial Learning (IBAL)}, a robust MARL framework that characterizes inter-agent interaction and trains policies to remain effective under interaction-breaking attacks. Specifically, we partition agents into two groups and use mutual information (MI) to quantify cross-group influence, yielding an attacker that occludes cross-group observations and perturbs actions to hinder coordination. IBAL enables policies to learn and act reliably even when agents cannot observe each other or when coordination partially collapses. Empirically, IBAL improves robustness over prior robust MARL methods across diverse perturbations and attacks, and achieves substantially stronger performance under non-parametric perturbations where some agents are entirely absent. 

\Needspace{12\baselineskip} Our contributions are summarized as follows:

\begin{itemize}[leftmargin=*, nosep]
\setlength{\itemsep}{4pt}
\item \textbf{Interaction-Breaking Attack:} We introduce a MI-based adversarial attack that suppresses cross-group influence by jointly occluding cross-group observations and perturbing coordinated actions.
\item \textbf{POMDP Formulation and Adversarial Learning:} We define a new POMDP under the proposed attack and show that it is equivalent, in terms of the value function, to the induced Dec-POMDP with perturbed dynamics, enabling robust policy learning.
\item \textbf{Empirical Evaluation and Analyses:} We show that IBAL outperforms prior methods under diverse attacks and perturbations, including agent-missing settings, and analyze when and why the proposed attack is effective.
\end{itemize}

%%%%%%%%%%%%%%%%%%%%%%%%%%%%%%%%%%%%%%%%%%%%%%%%%%%%%%%%%%%%%%%%%%%
% \vspace{-.5em}
\section{Related Works}
\paragraph{Robust Multi-Agent RL.}
Robust MARL addresses perturbations in multi-agent environments. One common direction improves robustness through regularization-based objectives \cite{lin2020robustness, li2023mir2, wang2023regularization, bukharin2023robust, li2025robust} and distributional RL methods that explicitly model uncertainty \cite{li2020multi, xu2021mmd, du2024robust, geng2024noise}. Another line of work revisits equilibrium concepts, developing robust variants of Nash equilibria tailored to multi-agent systems \cite{zhang2020robust, li2023byzantine}. In addition, max-min robust optimization \cite{chinchuluun2008pareto, han2021max} has been integrated into MARL formulations to learn policies resilient to worst-case perturbations \cite{li2019robust, wang2022data}.

%%%%%%%%%%%%%%%%%%%%%%%%%%%%%%%%%%%%%%%%%%%%%%%%%%%%%%%%%%%%%%%%%%%
% \vspace{-1em}
\paragraph{Adversarial Learning Frameworks.}
A substantial line of research in RL improves robustness \cite{nilim2005robust, moos2022robust, lee2025self} by training agents to withstand adversarial perturbations. In single-agent MDPs, attacks commonly target states or observations \cite{pattanaik2017robust, zhang2020robust_1, zhang2021robust, qiaoben2024understanding}, actions \cite{pinto2017robust, tessler2019action, tan2020robustifying, lee2021query, liu2024robust}, rewards \cite{bouhaddi2023multi, rakhsha2021reward, xu2024reward}, or environment dynamics \cite{chae2022robust}. These threat models extend to multi-agent settings via state or observation uncertainties \cite{han2022solution, he2023robust, zhourobust}, action attacks \cite{yuan2023robust}, and reward perturbations \cite{kardecs2011discounted}. Prior work also studies adversarial effects in value-decomposition frameworks \cite{phan2021resilient}, attacks on critical agents \cite{yuan2023robust, zhou2024adversarial, lee2025wolfpack}, and vulnerabilities in inter-agent communication \cite{xue2021mis, tu2021adversarial, sun2023certifiably}. Adversarially shaped frameworks have also been used to improve zero-shot human-AI coordination \cite{yan2023efficient, kang2026shaping}.
%%%%%%%%%%%%%%%%%%%%%%%%%%%%%%%%%%%%%%%%%%%%%%%%%%%%%%%%%%%%%%%%%%%
% \vspace{-1em}
\paragraph{Information-Theoretic Approaches for MARL.} 
Mutual information has been widely used in MARL to estimate and exploit inter-agent influence \citep{jaques2019social, li2022pmic, ye2023mutual, zhou2024reciprocal, park2026focusinginfluencemechanismmultiagent}. Building on this idea, prior work leverages MI for structured exploration and role diversity under parameter sharing \citep{mahajan2019maven, li2021celebrating, jo2024fox}, and uses influence-based signals to guide and stabilize inter-agent communication \citep{wang2020learning, guan2022efficient, ding2024learning, bae2026llm}. In contrast, we use MI to quantify cross-group influence and design an attacker that disrupts coordination by minimizing this influence, rather than targeting values.

%%%%%%%%%%%%%%%%%%%%%%%%%%%%%%%%%%%%%%%%%%%%%%%%%%%%%%%%%%%%%%%%%%%

\begin{figure*}[t!]
  \centering
  \includegraphics[width=1\linewidth]{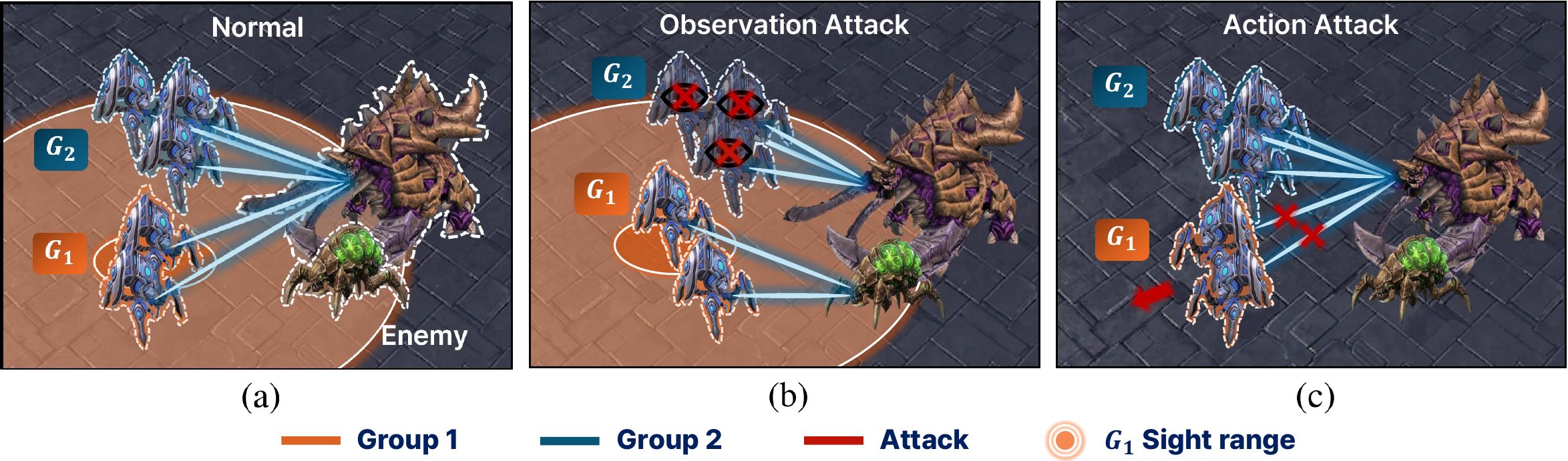}
  % \vspace{0.05em}
  \caption{
Illustration of the proposed interaction-breaking attack in StarCraft II.
(a) Normal setting, where agents in $G_1$ and $G_2$ coordinate their attack on the same enemy.
(b) Observation attack, where red crosses indicate masked observation components.
(c) Action attack, where the red arrow denotes the perturbed action.
}
  \label{fig1}
  \vspace{0.1em}
\end{figure*}

%%%%%%%%%%%%%%%%%%%%%%%%%%%%%%%%%%%%%%%%%%%%%%%%%%%%%%%%%%%%%%%%%%%

\section{Background}

\subsection{Dec-POMDP and Value-based CTDE Setup}
A fully cooperative multi-agent task is modeled as a decentralized partially observable Markov decision process (Dec-POMDP) \cite{oliehoek2016concise},
$\mathcal{M}=\langle \mathcal{N}, \mathcal{S}, \mathcal{A}, P,\Omega, O, R, \gamma\rangle$,
where $\mathcal{N}=\{1,\ldots,n\}$ is the set of agents, $\mathcal{S}$ the global state space, $\mathcal{A}=\prod_{i=1}^n\mathcal{A}^i$ the joint action space, and $P$ the transition probability. At time $t$, agent $i$ receives an observation $o_t^i=O(s_t,i)\in\Omega$ and selects an action $a_t^i\in\mathcal{A}^i$ according to a decentralized policy $\pi^i(\cdot \mid \tau_t^i)$, where $\tau_t^i$ is the local trajectory. The joint action $\boldsymbol{a}_t=\langle a_t^1,\ldots,a_t^n\rangle$ is sampled from the joint policy $\boldsymbol{\pi}:=\prod_{i=1}^n\pi^i
$, leading to a transition $s_{t+1}\sim P(\cdot\mid s_t,\boldsymbol{a}_t)$ and a shared reward $r_t:=R(s_t,\boldsymbol{a}_t,s_{t+1})$. The objective is to maximize the discounted return $\sum_{t=0}^{\infty}\gamma^{t} r_t$. We adopt CTDE framework, in which a centralized critic learns the joint action-value $Q^{tot}(s_t,\boldsymbol{a}_t)$ via temporal-difference (TD) learning and performs credit assignment to obtain per-agent utilities $Q^i(\tau_t^i,a_t^i)$, which make decentralized policies, e.g., $\pi^i:=\arg\max_{a_t^i\in\mathcal{A}^i} Q^i(\tau_t^i,\cdot)$.

%%%%%%%%%%%%%%%%%%%%%%%%%%%%%%%%%%%%%%%%%%%%%%%%%%%%%%%%%%%%%%%%%%%

\subsection{Adversarial Attacks for MARL}
\label{3.2}
To improve robustness against external perturbations, prior work in MARL has considered a range of adversarial attacks. A basic approach perturbs the state or observation by injecting noise, e.g., $\tilde{s}_t = s_t + \epsilon$ or $\boldsymbol{\tilde{o}}_t = \boldsymbol{o}_t + \epsilon$ \cite{lin2020robustness}, where the key design choice is how to construct $\epsilon$. Action attacks have also been widely studied. Under CTDE, a common formulation defines an attacker policy $\boldsymbol{\pi}_{\textrm{adv}}$ that selects perturbed actions to minimize learned utility estimates, e.g., $\tilde{a}_t^i \sim \pi^i_{\textrm{adv}} := \arg\min_{a^i \in \mathcal{A}^i} Q^i(\tau_t^i, a^i)$. For example, EGA \cite{yuan2023robust} diversifies value-minimizing attacks by learning critical timesteps for multiple random seeds, whereas the Wolfpack adversarial attack \cite{lee2025wolfpack} sequentially targets agents to amplify disruption. In this paper, we consider both observation and action perturbations, but adopt an information-theoretic criterion that differs fundamentally from value-based objectives.

%%%%%%%%%%%%%%%%%%%%%%%%%%%%%%%%%%%%%%%%%%%%%%%%%%%%%%%%%%%%%%%%%%%
% \vspace{-0.2em}
\section{Methodology}

\subsection{Motivation for Interaction Breaking}
\label{subsec:motiv}

In MARL, CTDE enables agents to learn coordinated strategies that solve multi-agent tasks efficiently, but this coupling can also make the learned policies brittle: when coordination partially fails, agents may face interaction patterns that were rarely encountered during training, resulting in severe performance degradation. Although several adversarial learning methods in MARL consider robustness under attacks \cite{lin2020robustness,yuan2023robust,lee2025wolfpack}, they largely focus on simple perturbations or value-minimizing objectives. These formulations typically do not explicitly model how agents influence one another, and therefore may fail to capture attacks that intentionally break inter-agent relationships, under which coordination can collapse abruptly.

To study and mitigate this vulnerability, we propose an \textit{interaction-breaking attack} that explicitly targets cross-agent influence. Concretely, we repeatedly partition agents into two groups, $G_1$ and $G_2$, quantify cross-group influence via MI, and construct observation and action attacks that minimize inter-group dependence. Fig.~\ref{fig1} provides an intuitive illustration of our attack. Fig.~\ref{fig1}(a) shows the nominal setting, where $G_1$ and $G_2$ retain visibility and influence, enabling coordinated engagement. Fig.~\ref{fig1}(b) illustrates our observation attack, which masks the components of $G_1$'s observations that are most informative about $G_2$, thereby reducing cross-group visibility. Fig.~\ref{fig1}(c) illustrates our action attack, which perturbs actions to minimize MI and suppress cross-group influence, preventing $G_1$ from effectively coordinating with $G_2$. By applying our attacks across diverse group partitions, we expose policies to a spectrum of interaction failures that capture varied coordination-collapse scenarios. Finally, guided by our theoretical formulation, we aim to learn a robust MARL policy that remains effective even when coordination partially collapses.

%%%%%%%%%%%%%%%%%%%%%%%%%%%%%%%%%%%%%%%%%%%%%%%%%%%%%%%%%%%%%%%%%%%
% \vspace{-3em}
\subsection{Joint-Adversarial Dec-POMDP}
In this section, we formalize our interaction-breaking attack as a joint adversary that perturbs both observations and actions. Specifically, an observation attacker $\boldsymbol{f}_{\mathrm{adv}}$ produces perturbed observations $\tilde{\boldsymbol{o}}_t \sim \boldsymbol{f}_{\mathrm{adv}}(\cdot \mid \boldsymbol{o}_t)$, and an action attacker $\boldsymbol{\pi}_{\mathrm{adv}}$ generates perturbed actions $\tilde{\boldsymbol{a}}_t \sim \boldsymbol{\pi}_{\mathrm{adv}}(s_t, \boldsymbol{a}_t)$, where $\boldsymbol{a}_t$ is sampled from the ego joint policy $\boldsymbol{\pi}$. With such an attacker that simultaneously targets observations and actions, the resulting process can be formulated as a Joint-Adversarial Decentralized POMDP (JA-Dec-POMDP).

\begin{definition}[Joint-Adversarial Dec-POMDP]
Given a Dec-POMDP $\mathcal{M}$ and joint attackers $\boldsymbol{f}_{\mathrm{adv}}$ and $\boldsymbol{\pi}_{\mathrm{adv}}$, we define the \emph{Joint-Adversarial Dec-POMDP} as
$\mathcal{M}^J=\langle \mathcal{N}, \mathcal{S}, \mathcal{A}, P, \Omega, O, R, \gamma, \boldsymbol{f}_{\mathrm{adv}}, \boldsymbol{\pi}_{\mathrm{adv}}\rangle$.
At each time step $t$, the observation attacker perturbs the original joint observation $\boldsymbol{o}_t$ by sampling
$\tilde{\boldsymbol{o}}_t \sim \boldsymbol{f}_{\mathrm{adv}}(\cdot \mid \boldsymbol{o}_t)$.
Given $\tilde{\boldsymbol{o}}_t$, the ego joint policy selects an intermediate joint action
$\hat{\boldsymbol{a}}_t \sim \boldsymbol{\pi}(\cdot \mid \tilde{\boldsymbol{o}}_t)$.
The action attacker then outputs the executed joint action by sampling
$\tilde{\boldsymbol{a}}_t \sim \boldsymbol{\pi}_{\mathrm{adv}}(\cdot \mid s_t, \hat{\boldsymbol{a}}_t)$.
The environment transitions as $s_{t+1}\sim P(\cdot \mid s_t,\tilde{\boldsymbol{a}}_t)$ and yields reward
$r_t = R(s_t,\tilde{\boldsymbol{a}}_t,s_{t+1})$.
\label{def:JA-Dec-POMDP}
\end{definition}

%%%%%%%%%%%%%%%%%%%%%%%%%%%%%%%%%%%%%%%%%%%%%%%%%%%%%%%%%%%%%%%%%%%

In the original Dec-POMDP $\mathcal{M}$, we define the value of a joint policy $\boldsymbol{\pi}$ as
$V_{\boldsymbol{\pi}}(s_t) := \mathbb{E}_{\boldsymbol{\pi}}\!\left[\sum_{l=t}^{\infty}\gamma^{\,l-t} r_l \mid s_t\right]$.
In the JA-Dec-POMDP, the joint attackers induce a perturbed decision process that can be equivalently viewed as the composition
$\boldsymbol{\pi}_{\mathrm{adv}} \circ \boldsymbol{\pi} \circ \boldsymbol{f}_{\mathrm{adv}}$.
Accordingly, we denote its value by
$V_{\boldsymbol{\pi}_{\mathrm{adv}} \circ \boldsymbol{\pi} \circ \boldsymbol{f}_{\mathrm{adv}}}(s_t)$.
As in standard policy optimization, one would like to find a policy that maximizes this value. However, directly optimizing over the composed policy is cumbersome. Instead, we reinterpret the joint attack as an environmental perturbation and perform policy optimization in the induced process, based on the following theorem.

\begin{theorem}
Given a JA-Dec-POMDP $\mathcal{M}^J$ with joint attackers $\boldsymbol{f}_{\mathrm{adv}}$ and $\boldsymbol{\pi}_{\mathrm{adv}}$, there exists an induced Dec-POMDP
$\tilde{\mathcal{M}}=\langle \mathcal{N}, \tilde{\mathcal{S}}, \mathcal{A}, \tilde{P}, \Omega, \tilde{O}, \tilde{R}, \gamma \rangle$,
whose state space, transition probability, observation, and reward functions explicitly account for the joint attack. Specifically,
\begin{align}
\tilde{\boldsymbol{o}}_t:=\tilde{O}(s_t,i) &= \boldsymbol{f}_{\mathrm{adv}}\!\left(\cdot \mid O(s_t,i)\right),~~  
\hat{\boldsymbol{a}}_t \sim \boldsymbol{\pi}(\cdot|\tilde{\boldsymbol{o}}_t), \nonumber\\
\tilde{P}(\tilde{s}_{t+1}\mid \tilde{s}_t,\hat{\boldsymbol{a}}_t) &:= P(s_{t+1}\mid s_t, \hat{\boldsymbol{a}}_t)\cdot\boldsymbol{\pi}_{\mathrm{adv}}(\tilde{\boldsymbol{a}}_t\mid s_t,\hat{\boldsymbol{a}}_t),\nonumber \\
\tilde{R}(\tilde{s}_t,\hat{\boldsymbol{a}}_t,\tilde{s}_{t+1}) &:= R(s_t,\tilde{\boldsymbol{a}}_t,s_{t+1}),\nonumber
\end{align}
where $\tilde{s}_t := (s_t,\tilde{\boldsymbol{a}}_t)\in\tilde{\mathcal{S}}$ with $\tilde{\boldsymbol{a}}_t \sim \boldsymbol{\pi}_{\mathrm{adv}} \circ \boldsymbol{\pi} \circ \boldsymbol{f}_{\mathrm{adv}}$. Then, for any joint policy $\boldsymbol{\pi}$ and all $s_t\in\mathcal{S}$, the state value $\tilde{V}_{\boldsymbol{\pi}}(s_t)$ of $\boldsymbol{\pi}$ in the Dec-POMDP $\tilde{\mathcal{M}}$ is equivalent to the state value $V_{\boldsymbol{\pi}_{\mathrm{adv}}\circ \boldsymbol{\pi}\circ \boldsymbol{f}_{\mathrm{adv}}}(s_t)$ in the JA-Dec-POMDP $\mathcal{M}^J$.
\label{theorem1}
\end{theorem}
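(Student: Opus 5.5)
The plan is to show that the two processes generate the same distribution over reward sequences, and then to identify the two values through their Bellman equations.

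\textbf{Step 1: one-step kernels.} In $\mathcal{M}^J$, starting from $s_t$, one step is generated by a fixed chain of samples:
\begin{align*}
\tilde{\boldsymbol{o}}_t &\sim \boldsymbol{f}_{\mathrm{adv}}(\cdot\mid O(s_t,\cdot)), \\
\hat{\boldsymbol{a}}_t &\sim \boldsymbol{\pi}(\cdot\mid\tilde{\boldsymbol{o}}_t), \\
\tilde{\boldsymbol{a}}_t &\sim \boldsymbol{\pi}_{\mathrm{adv}}(\cdot\mid s_t,\hat{\boldsymbol{a}}_t), \\
s_{t+1} &\sim P(\cdot\mid s_t,\tilde{\boldsymbol{a}}_t),
\end{align*}
with reward $R(s_t,\tilde{\boldsymbol{a}}_t,s_{t+1})$. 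In $\tilde{\mathcal{M}}$, the observation $\tilde{O}$ is by definition the pushforward of $O$ through $\boldsymbol{f}_{\mathrm{adv}}$. So, conditional on $s_t$, the agents' inputs, and hence $\hat{\boldsymbol{a}}_t\sim\boldsymbol{\pi}$, have the same law in both processes.

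The augmented kernel $\tilde{P}$ produces the pair $(\tilde{\boldsymbol{a}}_t, s_{t+1})$. Its factor $\boldsymbol{\pi}_{\mathrm{adv}}(\tilde{\boldsymbol{a}}_t\mid s_t,\hat{\boldsymbol{a}}_t)$ samples the executed action, and its environment factor must be read as $P(s_{t+1}\mid s_t,\tilde{\boldsymbol{a}}_t)$, i.e.\ conditioned on the \emph{executed} action. I will fix this interpretation explicitly: the augmented state $\tilde{s}$ stores the executed action so that the reward $\tilde{R}$ can depend on it. Marginalizing over the augmented component then gives identical joint laws of $(\hat{\boldsymbol{a}}_t,\tilde{\boldsymbol{a}}_t,s_{t+1},r_t)$ given $s_t$ in both processes.

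\textbf{Step 2: Bellman equations.} For $\tilde{\mathcal{M}}$, write the policy-evaluation equation
\[
\tilde V_{\boldsymbol{\pi}}(s_t)=\mathbb{E}\big[\tilde R+\gamma\tilde V_{\boldsymbol{\pi}}(s_{t+1})\big].
\]
Here I use that the value depends on $\tilde{s}_{t+1}$ only through $s_{t+1}$, since the stored action does not affect future dynamics. Under the law from Step 1, this expectation expands to
\[
\sum_{\hat{\boldsymbol a}}\;\sum_{\tilde{\boldsymbol a}}\;\sum_{s'}\;
\boldsymbol{f}_{\mathrm{adv}}\,\boldsymbol{\pi}\,\boldsymbol{\pi}_{\mathrm{adv}}\,P\,
\big[R+\gamma \tilde V_{\boldsymbol{\pi}}(s')\big],
\]
which is exactly the Bellman operator $\mathcal{T}$ of the composed policy $\boldsymbol{\pi}_{\mathrm{adv}}\circ\boldsymbol{\pi}\circ\boldsymbol{f}_{\mathrm{adv}}$ in $\mathcal{M}^J$. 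Since $\mathcal{T}$ is a $\gamma$-contraction in sup norm (assuming bounded rewards), both value functions are its unique fixed point and therefore coincide.

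An alternative route is to unroll both processes as trajectory measures. By induction on $t$, the finite-horizon marginals agree, and dominated convergence then gives equality of the discounted sums.

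\textbf{Main obstacle: partial observability.} Policies condition on trajectories $\tau^i_t$, not only on the current observation, so a value written purely as a function of $s_t$ is not a Markov fixed point. I would handle this by running the argument at the level of the history-augmented process. Either take the state to be $(s_t$, joint history$)$, or use the trajectory-measure route, which is cleaner here. In the trajectory-measure route, induction shows that the joint law of $(s_{0:t},\tilde{\boldsymbol{o}}_{0:t},\hat{\boldsymbol{a}}_{0:t},\tilde{\boldsymbol{a}}_{0:t})$ is identical in the two processes, because each conditional factor matches. The induced laws of $r_t$ then agree for every $t$, and the values, taken as conditional expectations given $s_t$ (and the same history distribution), are equal.
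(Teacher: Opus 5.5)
Your proposal is correct and follows the paper's own route: expand the one-step Bellman equation of $\tilde{\mathcal{M}}$, show that it coincides with the Bellman equation of the composed policy $\boldsymbol{\pi}_{\mathrm{adv}}\circ\boldsymbol{\pi}\circ\boldsymbol{f}_{\mathrm{adv}}$ in $\mathcal{M}^J$ (the paper's auxiliary lemma), and conclude by uniqueness of the fixed point. You read the environment factor as $P(s_{t+1}\mid s_t,\tilde{\boldsymbol{a}}_t)$, which is exactly what the paper's final equality uses without comment (its step (a) still writes $P(\cdot\mid s_t,\hat{\boldsymbol{a}}_t)$, under which the equality fails in general); your trajectory-measure variant also covers history-dependent policies, whereas the paper's state-indexed Bellman argument is valid only because its JA-Dec-POMDP uses reactive policies $\boldsymbol{\pi}(\cdot\mid\tilde{\boldsymbol{o}}_t)$.
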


\begin{proof}
    \vspace{-0.5em}
    Proof of Theorem~\ref{theorem1} is provided in Appendix~\ref{appendix_A.1}.
    \vspace{-1em}
\end{proof}

By Theorem~\ref{theorem1}, the value in the JA-Dec-POMDP is equivalent to the value of the induced Dec-POMDP $\tilde{\mathcal{M}}$ with perturbed dynamics. Therefore, we can optimize the joint policy $\boldsymbol{\pi}$ in $\tilde{\mathcal{M}}$ using standard MARL algorithms under the perturbed transition dynamics, and the resulting policy preserves its performance under the JA-Dec-POMDP.

%%%%%%%%%%%%%%%%%%%%%%%%%%%%%%%%%%%%%%%%%%%%%%%%%%%%%%%%%%%%%%%%%%%

\subsection{Interaction-Breaking Attack}
\label{4.3}

As discussed in Section~\ref{subsec:motiv}, we quantify cross-group influence using conditional MI and design an interaction-breaking attack based on it. We begin by partitioning the agent set $\mathcal{N}$ into two disjoint groups, $G_1=\{i_1,\ldots,i_K\}$ and $G_2=\mathcal{N}\setminus G_1$, where $K$ is the number of agents in $G_1$. For any per-agent variable $x^i$, we denote its restriction to $G_1$ by $\boldsymbol{x}^{G_1}:=\{x^{i_1},\ldots,x^{i_K}\}$. We then define the influence of $G_2$'s joint action on $G_1$ given the shared history $\boldsymbol{\tau}_t$ via conditional MI as follows.

{\vspace{-1.5em}\small\begin{align}
\mathcal{I}\bigl(
  \boldsymbol{o}_{t+1}^{G_1}, \boldsymbol{a}_t^{G_1};
  \boldsymbol{a}_t^{G_2}
  \,\big|\,
  \boldsymbol{\tau}_t
\bigr)
=&\underbrace{
\mathcal{I}\bigl(
  \boldsymbol{o}_{t+1}^{G_1};
  \boldsymbol{a}_t^{G_2}
  \,\big|\,
  \boldsymbol{a}_t^{G_1}, \boldsymbol{\tau}_t
\bigr) 
}_{\textit{\footnotesize observation-level MI}}
 \nonumber\\ 
&+
\underbrace{
\mathcal{I}\bigl(
  \boldsymbol{a}_t^{G_1};
  \boldsymbol{a}_t^{G_2}
  \,\big|\,
  \boldsymbol{\tau}_t
\bigr)
}_{\textit{\footnotesize action-level MI}}
, \label{eq:total_mi}
\end{align} \vspace{-1em}}

where $\mathcal{I}(X;Y|Z):=H(X| Z)-H(X| Y,Z)$ denotes conditional MI, which quantifies how much information $Y$ provides about $X$ given $Z$, and $H(X):=\mathbb{E}[-\log p(X)]$ denotes entropy. Here, the action- and observation-level MI terms follow from the chain rule of MI. The \emph{observation-level MI} measures how sensitive the observations of agents in $G_1$ are to the actions of $G_2$. For example, when agents in $G_2$ enter or leave the field of view of $G_1$, leading to large changes in $\boldsymbol{o}_{t+1}^{G_1}$. In contrast, the \emph{action-level MI} measures how tightly the actions of $G_1$ are coupled with those of $G_2$; it becomes large when the two groups coordinate, making $\boldsymbol{a}_t^{G_1}$ highly informative about $\boldsymbol{a}_t^{G_2}$. Since these two terms quantify cross-group influence, we design a joint attacker that breaks interaction by minimizing them separately: $\boldsymbol{f}_{\mathrm{adv}}$ targets the observation-level MI, while $\boldsymbol{\pi}_{\mathrm{adv}}$ targets the action-level MI, as defined below.

%%%%%%%%%%%%%%%%%%%%%%%%%%%%%%%%%%%%%%%%%%%%%%%%%%%%%%%%%%%%%%%%%%%

\paragraph{Observation attacker.}
The observation attacker aims to reduce the observation-level MI term by masking a subset of dimensions in the observations of agents in $G_1$. We define a masking attacker
$\tilde{\boldsymbol{o}}_t=\boldsymbol{f}_{\mathrm{adv}}(\boldsymbol{o}_t):=\{\boldsymbol{m}^{G_1}(\boldsymbol{o}_t^{G_1};\boldsymbol{D}^{G_1}),\boldsymbol{o}_t^{G_2}\}$, where the agent ordering is assumed to be aligned. Here, $m^i(o_t^i;D^i)$ applies a zero-forcing mask to the dimensions indexed by $D^i$: for the $d$-th component $o_{d,t}^i$ of $o_t^i$, it outputs $\tilde{o}_{d,t}^i=0$ if $d\in D^i$ and $\tilde{o}_{d,t}^i=o_{d,t}^i$ otherwise. We control the masking strength by fixing the per-agent budget $|D^i|=L$ for each $i\in G_1$. This design is motivated by the data-processing inequality, $\mathcal{I}(f(X);Y| Z)\le \mathcal{I}(X;Y| Z)$, which implies that applying any mapping $f$ cannot increase conditional MI; in particular, a deterministic mask typically decreases it by removing variation along the masked dimensions.

Ideally, we would select $\boldsymbol{D}^{G_1}$ to maximally reduce the resulting observation-level MI. However, evaluating MI for all possible masks is computationally prohibitive, and the cost further increases when the partition changes. We therefore adopt an efficient alternative based on the following lemma, which upper bounds the group-wise MI by a sum of dimension-wise terms.
\begin{lemma}
Given a masking attacker $\boldsymbol{m}^{G_1}$, the group-wise observation-level MI can be upper-bounded as
{\vspace{-.5em}\small\begin{align}
\mathcal{I}\!\bigl(
&\tilde{\boldsymbol{o}}_{t+1}^{G_1}; \boldsymbol{a}_t^{G_2}
\,\big|\,
\boldsymbol{a}_t^{G_1}, \boldsymbol{\tau}_t
\bigr)
\nonumber\\
\leq&
\sum_{i\in G_1}\sum_{d\notin D^i}\underbrace{\sum_{j\in G_2}
\mathcal{I}\!\bigl(
o_{d,t+1}^{i}; \boldsymbol{a}_t^{j}
\,\big|\,
\boldsymbol{a}_t^i, \boldsymbol{\tau}_t
\bigr)}_{\textit{\footnotesize dimension-wise MI}}+\mathcal{R}(G_1;G_2),
\label{eq:mi_decomp_bound}
\end{align}}
where $d_o$ denotes the observation dimension and $\mathcal{R}(G_1;G_2)$ denotes a group redundancy term.
\label{lemma1}
\end{lemma}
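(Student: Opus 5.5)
The plan is to expand the left-hand side with the chain rule for conditional MI, isolate per-dimension, per-agent terms, and collect every leftover difference into the redundancy term $\mathcal{R}(G_1;G_2)$.

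\textbf{Step 1 (masked coordinates vanish).} For $d\in D^i$ the masked coordinate $\tilde{o}^i_{d,t+1}$ is the constant $0$. A constant carries zero conditional MI about anything, so only the unmasked coordinates $\{o^i_{d,t+1}: i\in G_1,\ d\notin D^i\}$ contribute. Hence
\begin{equation*}
\mathcal{I}\bigl(\tilde{\boldsymbol{o}}_{t+1}^{G_1};\boldsymbol{a}_t^{G_2}\,\big|\,\boldsymbol{a}_t^{G_1},\boldsymbol{\tau}_t\bigr)
=\mathcal{I}\bigl(\{o^i_{d,t+1}\}_{i\in G_1,d\notin D^i};\boldsymbol{a}_t^{G_2}\,\big|\,\boldsymbol{a}_t^{G_1},\boldsymbol{\tau}_t\bigr).
\end{equation*}

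\textbf{Step 2 (split over observation coordinates).} Order the pairs $(i,d)$ and apply the chain rule in the first argument. This writes the quantity as a sum over $(i,d)$ of $\mathcal{I}(o^i_{d,t+1};\boldsymbol{a}_t^{G_2}\mid \boldsymbol{a}_t^{G_1},\boldsymbol{\tau}_t, \text{previous coordinates})$.

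\textbf{Step 3 (split over $G_2$ agents and drop extra conditioning).} In each summand, apply the chain rule in the second argument over $j\in G_2$. We then want to replace the conditioning set $(\boldsymbol{a}_t^{G_1},\boldsymbol{\tau}_t,\text{previous coordinates},\boldsymbol{a}_t^{j'<j})$ by the smaller set $(a_t^i,\boldsymbol{\tau}_t)$, which yields the dimension-wise term $\mathcal{I}(o^i_{d,t+1};a_t^j\mid a_t^i,\boldsymbol{\tau}_t)$. Conditioning can raise or lower MI, so this replacement is not a pure inequality. The exact error is an interaction-information difference,
\begin{equation*}
\mathcal{I}(X;Y\mid Z,W)-\mathcal{I}(X;Y\mid Z)=\mathcal{I}(X;Y;W\mid Z)\cdot(-1),
\end{equation*}
with $X=o^i_{d,t+1}$, $Y=a^j_t$, $Z=(a_t^i,\boldsymbol{\tau}_t)$, and $W$ the extra conditioning variables.

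\textbf{Step 4 (define the redundancy term).} Set $\mathcal{R}(G_1;G_2)$ to the sum of the positive parts of these differences over all $(i,d,j)$, i.e. the synergistic excess. With this choice the stated bound holds, and it is an equality if we instead keep the signed differences. A cleaner alternative is to define $\mathcal{R}$ directly as the total correlation gap between the joint and the sum of pairwise terms. This makes $\mathcal{R}$ nonnegative whenever the coordinates are conditionally independent given $(a^i_t,\boldsymbol{\tau}_t)$, and it vanishes in the factorized case, which supports calling it a redundancy term.

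\textbf{Main obstacle.} The difficulty is Step 3. Conditional MI is not monotone in the conditioning set, so no clean inequality removes the extra conditioning variables. The statement must therefore rely on the freedom to define $\mathcal{R}$. I would first aim to show that under mild assumptions $\mathcal{R}$ is small or nonpositive: for example, if $o^i_{d,t+1}$ depends on the other actions only through $a^j_t$ and $\boldsymbol{\tau}_t$, the dropped conditioning only removes redundant information. This is what justifies ranking dimensions by the dimension-wise MI when choosing $D^i$.
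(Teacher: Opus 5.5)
Your argument is correct for the statement as posed: the statement leaves $\mathcal{R}(G_1;G_2)$ unspecified, and the paper also defines it as the leftover of an expansion. Your route differs from the paper's at exactly the delicate step.

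The paper does not use the chain rule with growing conditioning sets. It proceeds in three steps.
\begin{enumerate}
\item It applies a M\"obius (inclusion--exclusion) expansion at the fixed conditioning $(\boldsymbol{a}_t^{G_1},\boldsymbol{\tau}_t)$. This writes the left-hand side as $\sum_{i,j,d}\mathcal{I}(\tilde{o}^i_{d,t+1};a^j_t\mid\boldsymbol{a}_t^{G_1},\boldsymbol{\tau}_t)$ plus all higher-order interaction-information terms, which it collects into $\mathcal{R}$.
\item It shrinks the conditioning from $\boldsymbol{a}_t^{G_1}$ to $a_t^i$ using a claimed ``conditioning relaxation'' $\mathcal{I}(X;Y\mid Z_1,Z_2)\le\mathcal{I}(X;Y\mid Z_1)$.
\item Only then does it discard the masked coordinates as constants.
\end{enumerate}

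Your caution about step 2 is justified, because that inequality is false in general. Take $Z_1$ trivial, $X,Y$ independent fair bits and $Z_2=X\oplus Y$; the inequality becomes $1\le 0$. Moreover, when $a^j_t$ is conditionally independent of $\boldsymbol{a}_t^{G_1\setminus\{i\}}$ given $(a^i_t,\boldsymbol{\tau}_t)$, as when actions are drawn independently given $\boldsymbol{\tau}_t$, the inequality in fact reverses. A sufficient condition for the paper's direction is $o^i_{d,t+1}\perp\boldsymbol{a}_t^{G_1\setminus\{i\}}\mid a^j_t,a^i_t,\boldsymbol{\tau}_t$.

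Your bookkeeping absorbs this conditioning error into $\mathcal{R}$, so your bound holds unconditionally. The price is that your signed $\mathcal{R}$ equals the paper's $\mathcal{R}$ plus exactly the quantity the relaxation step claims is nonpositive. It is therefore no longer a pure higher-order interaction term, and the paper's heuristic that such terms decay quickly does not directly cover it. On the other hand, the paper's redundancy analysis estimates group-wise MI minus the dimension-wise terms conditioned on $a^i_t$. That is precisely your signed $\mathcal{R}$, so the empirical negligibility claim attaches to your version rather than to the proof-level one.

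Two of your side remarks need tightening, though the bound does not depend on them.
\begin{itemize}
\item In Step 3 the dropped set $W$ also contains earlier observation coordinates. These can raise MI even when $o^i_{d,t+1}$ depends on the actions only through $a^j_t$: take $o^i_{d,t+1}=a^j_t\oplus N$ and $o^i_{d',t+1}=N$ with $N$ independent noise. A sufficient condition is $o^i_{d,t+1}\perp W\mid a^j_t,a^i_t,\boldsymbol{\tau}_t$ for all of $W$.
\item Conditional independence of the coordinates alone does not make the total gap nonnegative. If $a^{j_1}_t=a^{j_2}_t$ and $o^i_{d,t+1}=a^{j_1}_t$, the pairwise sum double-counts and the gap is negative.
\end{itemize}
Note also that the positive-part $\mathcal{R}$ depends on your chain-rule ordering, whereas the signed one does not.
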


\begin{proof}
\vspace{-.5em}
A proof of Lemma~\ref{lemma1} is provided in Appendix~\ref{appendix_A.2}.
\vspace{-2em}
\end{proof}

By Lemma~\ref{lemma1}, the group-wise MI decomposes into the dimension-wise MI and a group-redundancy term. This redundancy term is known to decay much faster than the MI term \cite{kubkowski2021gain}, similar to higher-order residuals in a Taylor expansion. We also confirm this empirically in Appendix~\ref{redundancy_analysis}, where the redundancy term is consistently negligible relative to the MI term. Accordingly, we approximate the group-wise MI using the dimension-wise MI in practice. Under this approximation, for each agent $i$, minimizing $\sum_{d\notin D^i}\sum_{j\in G_2}\mathcal{I}\bigl(o_{d,t+1}^{i}; a_t^j \mid a_t^i, \boldsymbol{\tau}_t\bigr)$ is equivalent to masking the $L$ dimensions with the largest dimension-wise MI. We therefore define the interaction-breaking observation attacker as follows.

\begin{definition}[Interaction-Breaking Observation Attacker]
An \emph{interaction-breaking observation attacker} $\boldsymbol{f}_{\mathrm{adv}}^{\mathrm{IB}}$ selects, for each $i\in G_1$, a set $D^{i,*}$ of $L$ observation dimensions that maximize the dimension-wise observation-level MI, i.e.,
\begin{align*}
D^{i,*}
&=
\operatorname*{arg\,max}_{D^i:\,|D^i|=L}
\;\sum_{d\in D^i}\sum_{j \in G_2}
\mathcal{I}\!\left(
  o_{d,t+1}^{i}\,;\,
  a_t^j
  \,\big|\,
  a_t^i,\,\boldsymbol{\tau}_t
\right).
\end{align*}
It then masks $\boldsymbol{D}^{G_1,*}:=\{D^{i,*}\}_{i\in G_1}$ via
\begin{align*}
\tilde{\boldsymbol{o}}_{t+1}
=
\boldsymbol{f}_{\mathrm{adv}}^{\mathrm{IB}}(\boldsymbol{o}_{t+1})
:=
\left\{
\boldsymbol{m}^{G_1}(\boldsymbol{o}_{t+1}^{G_1};\boldsymbol{D}^{G_1,*}),
\;
\boldsymbol{o}_{t+1}^{G_2}
\right\}.
\end{align*}
\label{def2}
\vspace{-2em}
\end{definition}

This definition only requires computing dimension-wise MI scores once per agent and observation dimension. When the grouping changes, the attacker is instantiated by aggregating the precomputed scores, avoiding repeated MI estimation and substantially reducing computational cost.

\begin{figure}[t!]
  \begin{center}
    \includegraphics[width=1\linewidth]{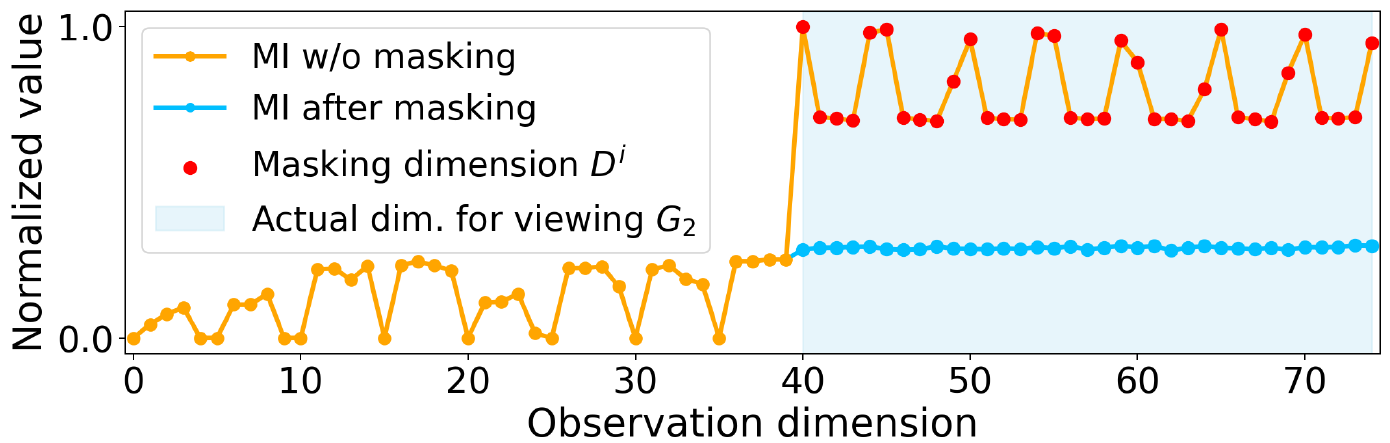}
    \caption{Dimension-wise MI for the $G_1$ agent in the StarCraft II scenario. MI values are normalized to $[0,1]$. Here, $|G_1|=1$ and $|G_2|=7$, and we set $L=5\times|G_2|$ to match the number of $G_1$ observation dimensions used to observe $G_2$ agents.
    }
    \label{fig2}
    \vspace{-0.1in}
  \end{center}
\end{figure}

To illustrate the effectiveness of the proposed attacker, Fig.~\ref{fig2} reports the dimension-wise MI of a representative agent $i\in G_1$ before and after masking, for a fixed $G_2$, together with the selected index set $D^{i,*}$ and the ground-truth indices corresponding to dimensions that encode the presence of $G_2$ in $G_1$'s observation. The result shows that high-MI dimensions coincide with those that actually observe $G_2$, indicating that our criterion identifies influential observation components. Moreover, masking sets the MI of the selected dimensions to zero, substantially reducing the overall MI. Operationally, this prevents agents in $G_1$ from observing $G_2$, enabling efficient interaction breaking.

\vspace{-0.1in}
\paragraph{Action attacker.} We design an action attacker that directly minimizes the action-level MI. Given the perturbed observations $\tilde{\boldsymbol{o}}_t$, the ego joint policy first samples an intermediate joint action
$\hat{\boldsymbol{a}}_t \sim \boldsymbol{\pi}(\cdot \mid \tilde{\boldsymbol{o}}_t)$.
The attacker then perturbs this action as defined below.
\begin{definition}[Interaction-Breaking Action Attacker]
Given an intermediate joint action $\hat{\boldsymbol{a}}_t$, let $\tilde{\boldsymbol{a}}_t^{\mathrm{min},G_1}$ denote the joint action of agents in $G_1$ that minimizes the action-level MI, defined as $
\tilde{\boldsymbol{a}}_t^{\mathrm{min},G_1}
:=
\operatorname*{arg\,min}_{\boldsymbol{a}_t^{G_1}}
\mathcal{I}\bigl(
  \boldsymbol{a}_t^{G_1};
  \hat{\boldsymbol{a}}_t^{G_2}
  \,\big|\,
  \boldsymbol{\tau}_t
\bigr).
$  An \emph{interaction-breaking action attacker} $\boldsymbol{\pi}_{\mathrm{adv}}^{\mathrm{IB}}$ selects the joint action $\tilde{\boldsymbol{a}}_t$ as
{\vspace{-.5em}
\[
\tilde{\boldsymbol{a}}_t=
\begin{cases}
\langle \tilde{\boldsymbol{a}}_t^{\mathrm{min},G_1},\, \hat{\boldsymbol{a}}_t^{G_2}\rangle, & \text{with probability } P_{\mathrm{act}},\\
\hat{\boldsymbol{a}}_t, & \text{with probability } 1-P_{\mathrm{act}},
\end{cases}\]}
where we assume the agent ordering is aligned, and $P_{\mathrm{act}}$ denotes the probability of applying the action attack.
\label{def3}
\end{definition}
As a result, the proposed action attack suppresses $G_1$ actions that would otherwise influence $G_2$, thereby disrupting cross-group coordination and influence.

Finally, implementing the proposed joint attackers $\boldsymbol{f}_{\mathrm{adv}}^{\mathrm{IB}}$ and $\boldsymbol{\pi}_{\mathrm{adv}}^{\mathrm{IB}}$ requires estimating the associated MI terms. Following prior work \cite{jaques2019social}, we adopt a model-based approach. For the observation-level MI, we adopt CLUB \cite{cheng2020club} to estimate dimension-wise MI, enabling efficient scoring over observation dimensions. For the action-level MI, we formulate the objective via the Kullback--Leibler divergence and directly estimate group-wise MI with a shared model, without dimension-wise decomposition. Details of MI estimation are provided in Appendix~\ref{Implementation Details}.

%%%%%%%%%%%%%%%%%%%%%%%%%%%%%%%%%%%%%%%%%%%%%%%%%%%%%%%%%%%%%%%%%%%
\subsection{IBAL Framework and Implementation}

With the interaction-breaking attackers, we can instantiate the JA-Dec-POMDP $\mathcal{M}^J$ and its induced Dec-POMDP $\tilde{\mathcal{M}}$. Based on Theorem~\ref{theorem1}, we propose the \emph{interaction-breaking adversarial learning} (IBAL) framework, which learns a MARL policy under CTDE on $\tilde{\mathcal{M}}$ with perturbed dynamics and yields a policy that remains robust in $\mathcal{M}^J$.

In implementation, to expose the learner to attacks from diverse partitions, we randomly sample the groups at the beginning of each episode. Specifically, we sample $k\sim \mathrm{Unif}(\{0,\ldots,K\})$ with $K\le n/2$, draw $G_1 \sim \mathrm{Unif}(\{G\subset \mathcal{N}:\lvert G\rvert=k\})$, and set $G_2=\mathcal{N}\setminus G_1$. This keeps $G_1$ smaller than $G_2$ to avoid overly strong attacks. For the observation attack, although the formulation targets occluding $G_1$'s view of $G_2$, we apply the masking symmetrically so that both groups are mutually occluded. For the action attack, instead of a fixed attack probability, we use an adaptive schedule that gradually increases attack strength. Concretely, we sample
$P_{\mathrm{act}}\sim \mathrm{Unif}\!\left(\frac{1}{K},\,P_{\mathrm{act}}^{\max}\right)$
and update $P_{\mathrm{act}}^{\max}$ by
\begin{align*}
P_{\mathrm{act}}^{\max} \leftarrow
\begin{cases}
\min\!\left(1, \alpha P_{\mathrm{act}}^{\max}\right), & \text{if } \bar{\sigma} \ge \eta,\\
P_{\mathrm{act}}^{\max}, & \text{otherwise,}
\end{cases}
\vspace{-.5em}
\end{align*}
where $\bar{\sigma}$ denotes the average success rate, $\alpha>1$ controls the growth rate, and $\eta$ is a success rate threshold.
If success rates are unavailable, we use the average episodic return. Here, we set the minimum attack probability to $1/K$ to ensure a nontrivial level of attack. While IBAL is broadly applicable to CTDE algorithms, we primarily instantiate it with QMIX \cite{rashid2020monotonic}. The overall IBAL framework is illustrated in Fig.~\ref{fig3} and summarized in Algorithm~\ref{alg1}, with additional implementation details provided in Appendix~\ref{Implementation Details}.

%%%%%%%%%%%%%%%%%%%%%%%%%%%%%%%%%%%%%%%%%%%%%%%%%%%%%%%%%%%%%%%%%%%
\begin{figure}[t!]
  \begin{center}
    \includegraphics[width=0.9\linewidth]{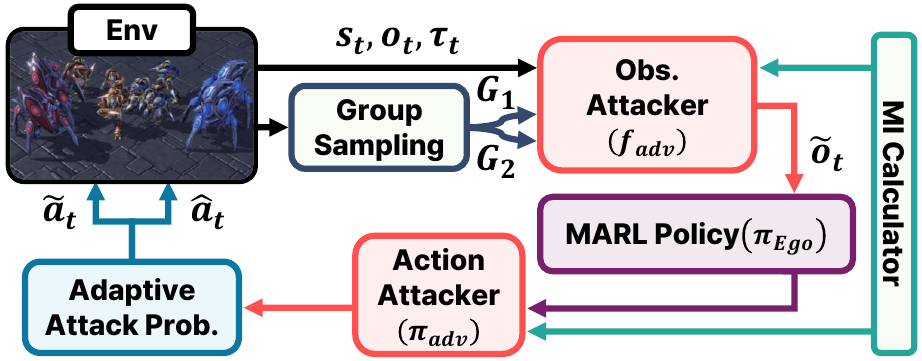}
    \caption{
      Illustration of the proposed IBAL framework.
    }
    \label{fig3}
    \vspace{-0.1in}
  \end{center}
\end{figure}

\begin{figure*}[h!]
    \centering
    \begin{subfigure}[t]{0.15\linewidth}
        \centering
        \includegraphics[width=\linewidth]{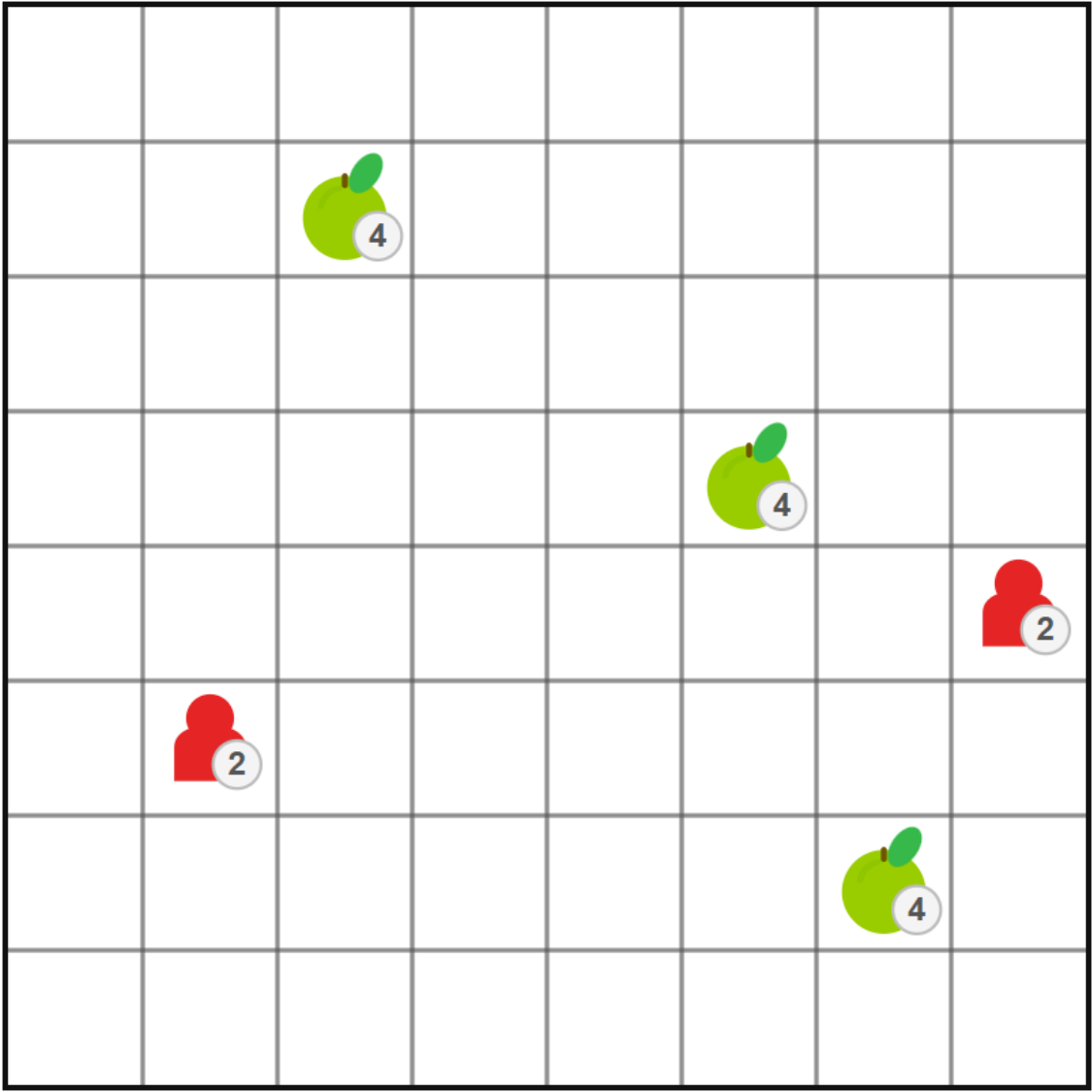}
        \caption{LBF}
    \end{subfigure}
    \begin{subfigure}[t]{0.26\linewidth}
        \centering
        \includegraphics[width=\linewidth]{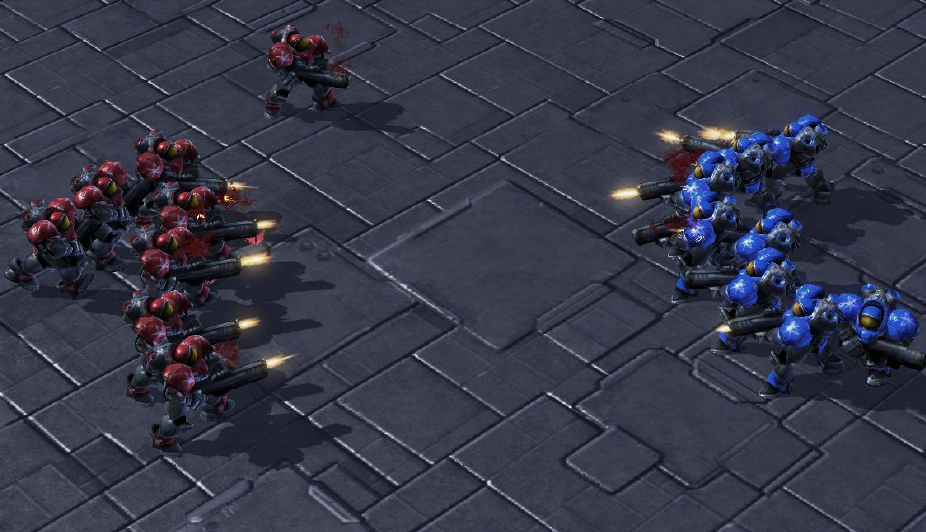}
        \caption{\texttt{8m}}
    \end{subfigure}
    \begin{subfigure}[t]{0.26\linewidth}
        \centering
        \includegraphics[width=\linewidth]{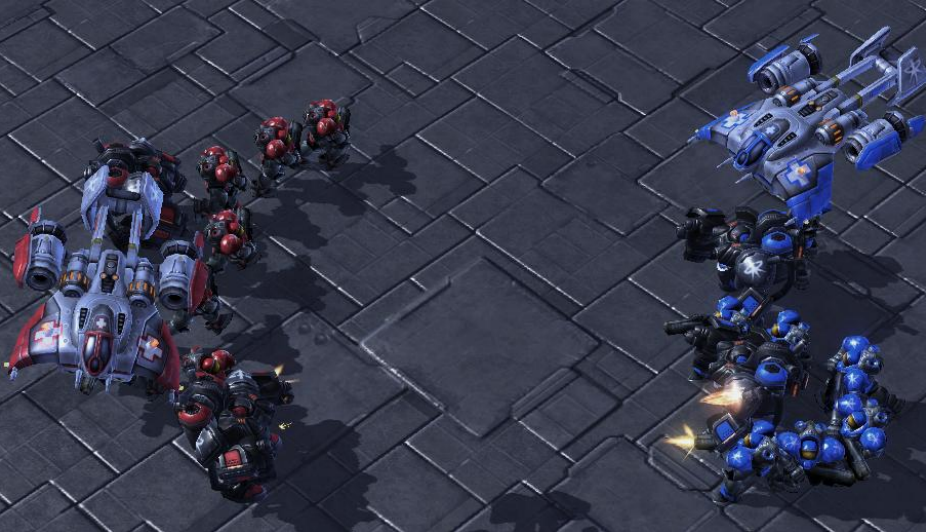}
        \caption{\texttt{MMM}}
    \end{subfigure}
    \begin{subfigure}[t]{0.26\linewidth}
        \centering
        \includegraphics[width=\linewidth]{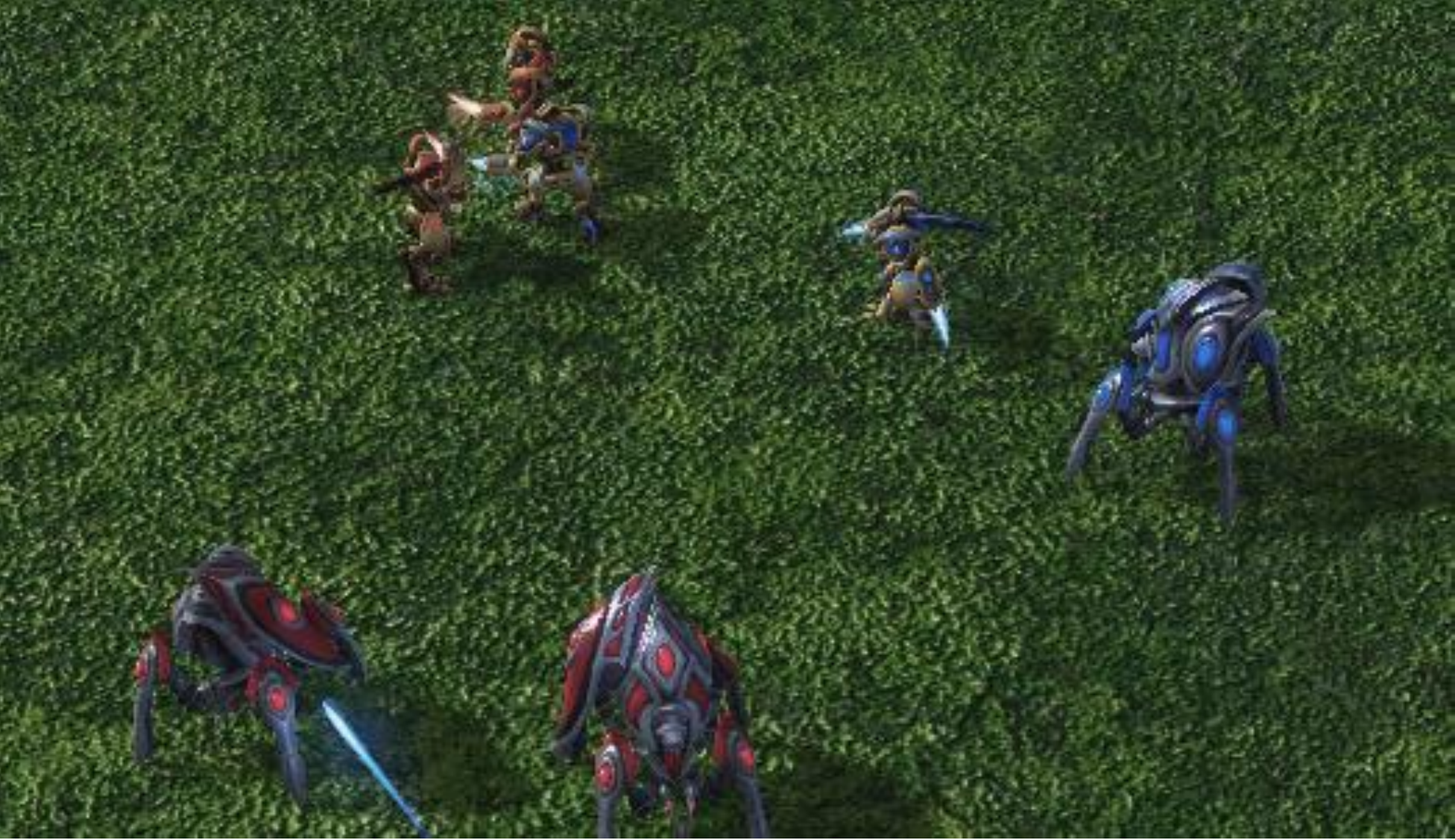}
        \caption{\texttt{protoss\_5\_vs\_5}}
    \end{subfigure}
    \caption{Visualization of MARL benchmarks: 
(a) Level-Based Foraging (LBF), (b) and (c) SMAC, and (d) SMACv2.}
    \label{fig:SMAC}
    % \vspace{-0.2in}
\end{figure*}

\begin{algorithm}[H]
\caption{IBAL framework}
\begin{algorithmic}
\STATE \textbf{Initialize:} $Q^{\mathrm{tot}}$, MI estimators, $P_{\mathrm{act}}$
\FOR{each training iteration}
    \STATE Randomly partition agents into $G=\{G_1,G_2\}$
    \FOR{each environment step $t$}
        \STATE Apply observation attack: $\tilde{\boldsymbol{o}}_t \sim \boldsymbol{f}^{G}_{\mathrm{adv}}(\cdot \mid \boldsymbol{o}_t)$, 
        \STATE Sample ego joint action: $\hat{\boldsymbol{a}}_t \sim \boldsymbol{\pi}(\cdot \mid \tilde{\boldsymbol{o}}_t)$
        \STATE Apply action attack with probability $P_{\mathrm{act}}$: \\$\tilde{\boldsymbol{a}}_t \sim \boldsymbol{\pi}_{\mathrm{adv}}(\cdot \mid s_t, \hat{\boldsymbol{a}}_t)$.
        \STATE Execute $\tilde{\boldsymbol{a}}_t$ and store the transition
    \ENDFOR
    \STATE Update $Q^{\mathrm{tot}}$ using a CTDE algorithm
    \STATE Update the MI estimators
    \STATE Update the action attack probability $P_{\mathrm{act}}$
\ENDFOR
\end{algorithmic}
\label{alg1}
\end{algorithm}

%%%%%%%%%%%%%%%%%%%%%%%%%%%%%%%%%%%%%%%%%%%%%%%%%%%%%%%%%%%%%%%%%%%

\begin{figure*}[h!]
    \begin{center}
% \vspace{-0.3em}
    \includegraphics[width=1\linewidth]{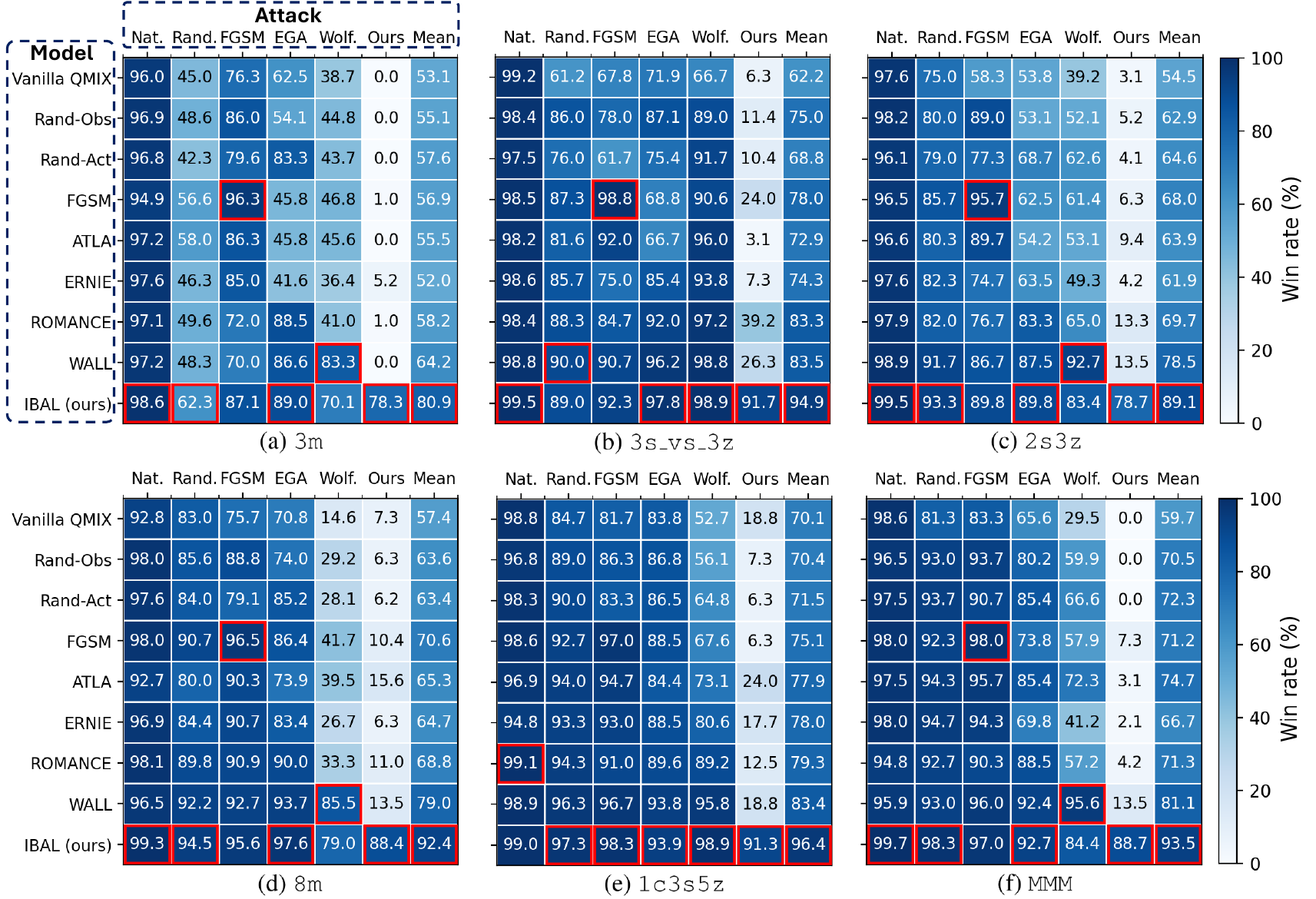}
    % \vspace{-0.05in}
    \caption{Average test win rate under various adversarial attacks.}
    \label{fig:performance_param}
    % \vspace{-0.2in}
    \end{center}
\end{figure*}

\section{Experiments}
\subsection{Experimental Setup}
In this section, we compare the robustness of the proposed IBAL and prior robust MARL methods on the StarCraft II Multi-Agent Challenge (SMAC) \cite{samvelyan19smac}, which requires cooperative decision-making among ally units to defeat enemy units. As illustrated in Fig.~\ref{fig:SMAC}, we consider six scenarios with diverse unit compositions and formations: \texttt{3m}, \texttt{3s\_vs\_3z}, \texttt{2s3z}, \texttt{8m}, \texttt{1c3s5z}, and \texttt{MMM}. We also conduct additional experiments on Level-Based Foraging (LBF) \cite{papoudakis2020benchmarking}, a simpler cooperative environment, and SMACv2 \cite{ellis2023smacv2}, a more stochastic extension of SMAC. Further details, including task descriptions and reward configurations, are provided in Appendix~\ref{Environment Details}. All performance graphs report the mean over 5 random seeds (solid line) with standard deviation (shaded region). Our code is available at https://sunwoolee0504.github.io/IBAL.

%%%%%%%%%%%%%%%%%%%%%%%%%%%%%%%%%%%%%%%%%%%%%%%%%%%%%%%%%%%%%%%%%%%

For evaluation, we consider a suite of adversarial attacks from prior works and non-parametric perturbations that naturally arise in SMAC, and compare MARL baselines accordingly, as summarized below:

\vspace{-1em}
\paragraph{Attacker Baselines.}
We consider adversarial attacks that perturb observations and actions: \textbf{Natural (Nat.)}, the no-attack setting; \textbf{Rand.}, which applies random perturbations to observations and actions; and \textbf{FGSM}~\cite{goodfellow2014explaining}, which considers a gradient-based observation attack. We also include two value-minimizing action attacks, \textbf{EGA}~\cite{yuan2023robust} and \textbf{Wolfpack Attack (Wolf.)}~\cite{lee2025wolfpack}, introduced in Section~\ref{3.2}, as well as \textbf{Ours}, the proposed interaction-breaking attack.

\vspace{-1em}
\paragraph{Non-parametric Perturbations.}
We further evaluate robustness under natural non-parametric perturbations. We consider \textbf{Dis-$\ell$}, which disables $\ell$ randomly selected ally units, and \textbf{HP-$h$}, which reduces ally units' initial health by $h\%$. \textbf{Dis-$\ell$} tests coordination under ally-unit absence, while \textbf{HP-$h$} evaluates robustness under reduced initial health.

\vspace{-1em}
\paragraph{Robust MARL baselines.} 
We compare QMIX-based robust MARL methods trained under different attacks: \textbf{Vanilla QMIX}~\citep{rashid2020monotonic} without adversarial training;

\textbf{Rand-Obs} and \textbf{Rand-Act}, trained under random observation and action attacks; \textbf{FGSM}~\citep{goodfellow2014explaining}, trained under FGSM-based observation attacks; \textbf{ATLA}~\citep{zhang2021robust}, trained against an RL-learned observation attacker; \textbf{ERNIE}~\citep{bukharin2023robust} with regularization-based robust training; \textbf{ROMANCE}~\citep{yuan2023robust}, trained against EGA; \textbf{WALL}~\citep{lee2025wolfpack}, trained against Wolfpack attack; and \textbf{IBAL (Ours)}, trained against the proposed interaction-breaking attack. All policies are trained for 10M timesteps, initialized from a QMIX model pretrained for 1M timesteps. Robustness is evaluated under unseen attacks generated with different random seeds than those used in training. For IBAL, the maximum group size $K$ is selected via hyperparameter search during training. Detailed descriptions of all baselines and additional experimental settings are provided in Appendix~\ref{Experimental Details}.

% \vspace{-0.5em}

\subsection{Performance Comparison}

We report the mean test win rate over 5 seeds under adversarial attacks and non-parametric perturbations in Fig.~\ref{fig:performance_param} and Fig.~\ref{fig:performance_nonparam}, respectively. Full result tables with standard deviations and learning curves for the considered algorithms are provided in Appendix~\ref{Additional_Experiments}. Under adversarial attacks, IBAL consistently outperforms prior robust MARL methods, indicating improved robustness across a broader range of attacks. Notably, methods such as FGSM and WALL often perform well against their own attacks, yet their performance drops sharply under our interaction-breaking attack. In contrast, IBAL not only defends effectively against the proposed attack, but also generalizes well to other observation and action attacks. In particular, on \texttt{1c3s5z}, IBAL attains high win rates across all tested attacks. These results suggest that mutual-information-driven interaction breaking is a critical failure mode for cooperation, and training against it yields policies with stronger robustness and generalization. 

For non-parametric perturbations, we consider Dis-$\ell$ with $\ell\in\{1,2\}$, and HP-$h$ with $h\in\{10,15,20\}$. We report results for $h=15$ in the main text, and provide remaining results in Appendix~\ref{appendix:performance_nonparam}. Under these perturbations, IBAL shows a clear advantage over all baselines. In the Dis-$\ell$ setting, most methods degrade sharply when ally units are disabled, revealing brittle coordination, whereas IBAL remains effective, as training over diverse group partitions exposes the policy to partial interaction failures and encourages adaptive coordination. A similar trend holds for HP-$h$: while other methods retain some performance, IBAL achieves substantially higher win rates. Overall, these results suggest that IBAL is robust to non-parametric perturbations by learning to operate under disrupted group interactions.
\begin{figure*}[t!]
    \begin{center}
    % \vspace{0.1in}
    \includegraphics[width=1\linewidth]{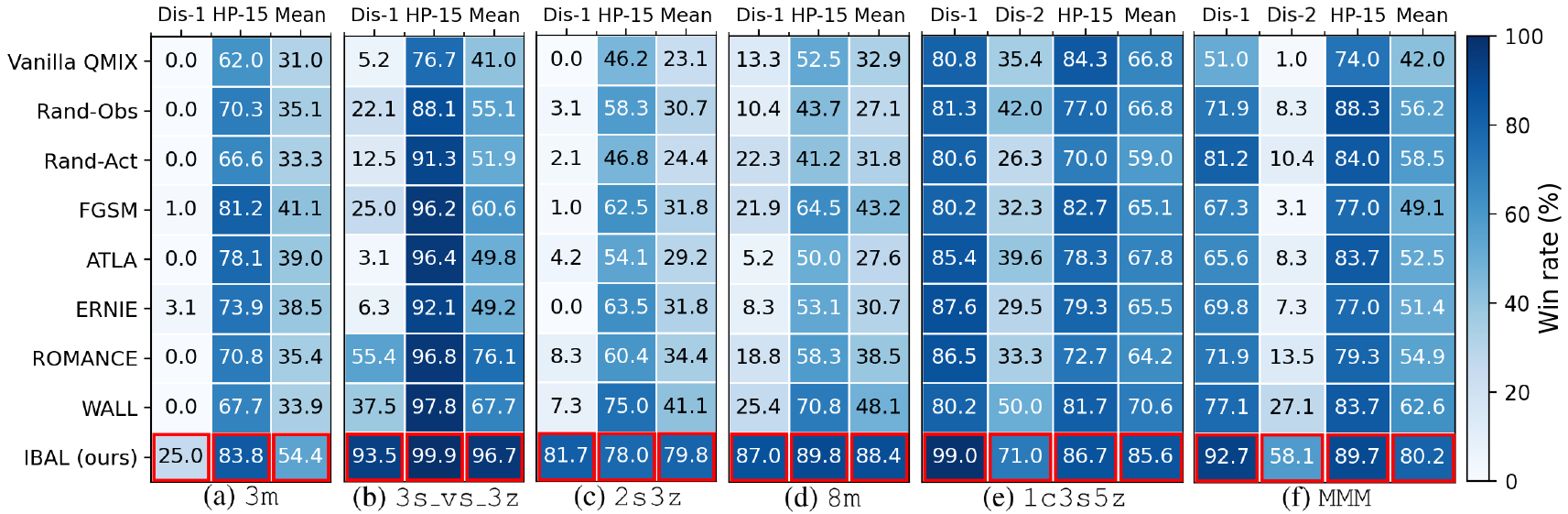}
    % \vspace{-0.05in}
    \caption{Average test win rate under various non-parametric perturbations.}
    \label{fig:performance_nonparam}
    % \vspace{-0.2in}
    \end{center}
\end{figure*}

\begin{figure*}[t!]
    \begin{center}
    \includegraphics[width=1.0\linewidth]{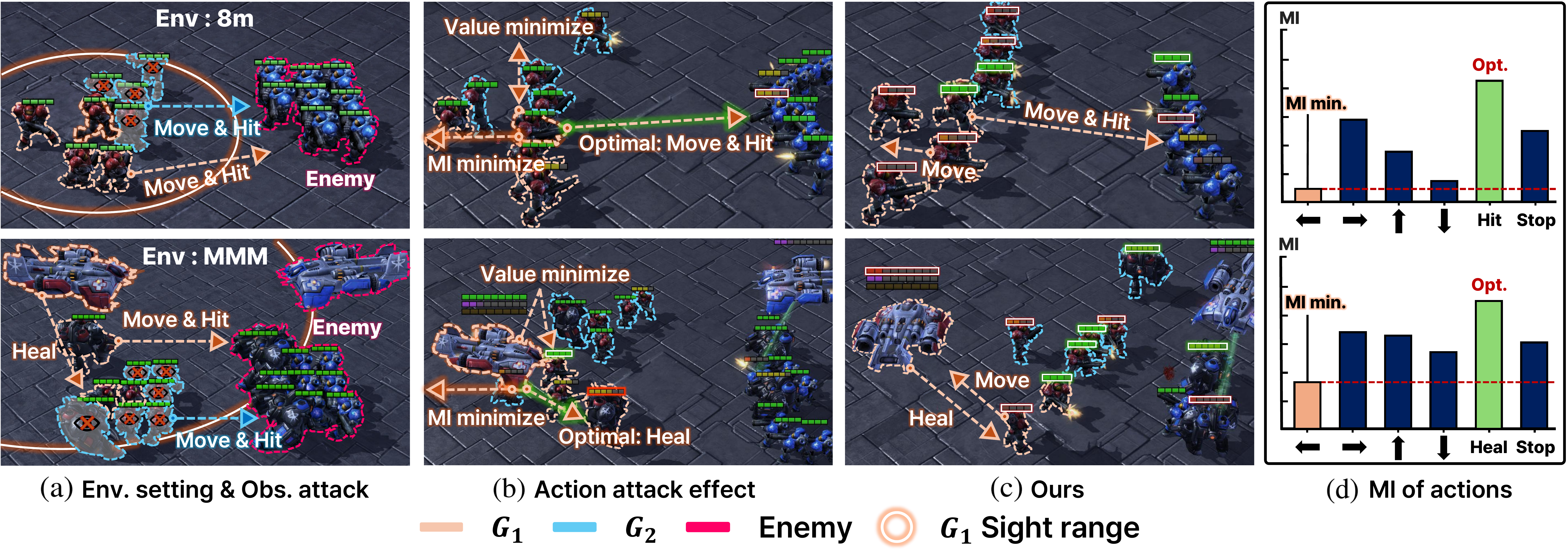}
    % \vspace{-0.1in}
    \caption{Trajectory analysis of the interaction-breaking attack and IBAL policies in \texttt{8m} and \texttt{MMM} tasks.}
    \label{fig:visualization}
    % \vspace{-0.1in}
    \end{center}
\end{figure*}
We further assess the generality of IBAL on additional benchmarks and another CTDE backbone. On LBF and SMACv2, the proposed interaction-breaking attack is more damaging than conventional attacks because it directly disrupts cooperation, while IBAL remains robust not only under the proposed attack but also across existing attack settings. In particular, on SMACv2, despite the increased stochasticity that lowers natural performance compared to the original SMAC scenarios, IBAL achieves stronger natural performance than the baselines, suggesting robustness to stochasticity as well as explicit attacks. Moreover, because IBAL does not rely on value information or the IGM property, it can be extended beyond value-based methods; results with MAPPO show that IBAL also improves robustness for policy-gradient-based CTDE methods. Detailed results are provided in Appendix~\ref{appendix:additional_benchmark_results} and Appendix~\ref{appendix:other_ctde_algorithm_results}.

%%%%%%%%%%%%%%%%%%%%%%%%%%%%%%%%%%%%%%%%%%%%%%%%%%%%%%%%%%%%%%%%%%%

\subsection{Further Analysis of IBAL}

To analyze the behavior of the proposed attacker and how IBAL responds, Fig.~\ref{fig:visualization} presents trajectory analyses in \texttt{8m} and \texttt{MMM}: \textbf{(a)} the group partition and observation attack, \textbf{(b)} a comparison of the optimal action with value-minimizing and MI-minimizing actions, \textbf{(c)} behaviors learned by IBAL, and \textbf{(d)} MI scores over $G_1$ agents' actions. In \textbf{(a)}, the attacker masks observation components corresponding to $G_2$, effectively removing cross-group visibility, as in Fig.~\ref{fig2}. In \textbf{(b)}, the optimal behavior in \texttt{8m} is for marines to move forward and attack, while in \texttt{MMM} the medivac heals low-health allies. Guided by the action-wise MI in \textbf{(d)}, our attacker selects MI-minimizing actions that retreat to reduce influence on $G_2$ in both scenarios. In contrast, value-minimizing attacks often induce oscillatory movements due to QMIX’s monotonic mixing, which can yield unreliable value estimates for non-optimal actions \cite{rashid2020weighted}. In comparison, the proposed MI-based attacker is agnostic to the QMIX structure and directly minimizes an influence objective, yielding a more consistent strategy that effectively disrupts cooperation. Consequently, IBAL remains robust to diverse value-minimizing attacks, whereas baselines trained only against them struggle under interaction breaking.

Finally, \textbf{(c)} highlights how IBAL adapts under repeated exposure to diverse partitions and interaction-breaking attacks. In \texttt{8m}, when an ally becomes vulnerable under attack, a healthier ally advances to hold the front line. In \texttt{MMM}, when the medivac is driven away and stops healing, low-health allies move toward it to receive heals and then return to the fight. These behaviors rarely emerge under standard training but arise naturally when agents must operate under frequent coordination failures. Overall, IBAL learns effective group-wise coordination under interaction disruption, resulting in stronger robustness under diverse attacks and perturbations. Additional analysis on how IBAL responds to Dis-$\ell$ is provided in Appendix~\ref{appendix:IBAL_disabled}.

\subsection{Ablation Study}

To assess the contribution of each component, we conduct several ablations. In the main text, we report component-wise evaluations and analyze sensitivity to the maximum group size $K$. In Appendix~\ref{appendix:ablation}, we provide additional analyses on the masking budget $L$, the minimum attack probability, and computational complexity.
\vspace{-0.1in}

%%%%%%%%%%%%%%%%%%%%%%%%%%%%%%%%%%%%%%%%%%%%%%%%%%%%%%%%%%%%%%%%%%%
\paragraph{Component Evaluation.}
To quantify each component’s contribution, we evaluate ablations on \texttt{8m} under Dis-1. We compare \textbf{IBAL (Ours)} with four variants: \textbf{IBAL w/o adaptive prob.}, which fixes the action-attack probability to $P_{\mathrm{act}}=1/K$; \textbf{IBAL with random masking}, which masks $L$ dimensions uniformly at random instead of using the MI criterion; \textbf{IBAL w/o obs. attack}, which uses only the MI-minimizing action attack; and \textbf{IBAL w/o action attack}, which uses only observation masking. Fig.~\ref{fig:Component} reports test win rates. The result shows that removing any component degrades performance, indicating that each contributes to robustness. In particular, omitting either the observation or action attack causes a substantial drop, suggesting that both are necessary to capture interaction breaking. In addition, random masking and a fixed attack probability still yield gains, but MI-guided masking and the adaptive schedule are essential for better robustness.

\begin{figure}[t!]
    \centering
    \includegraphics[width=0.9\linewidth]{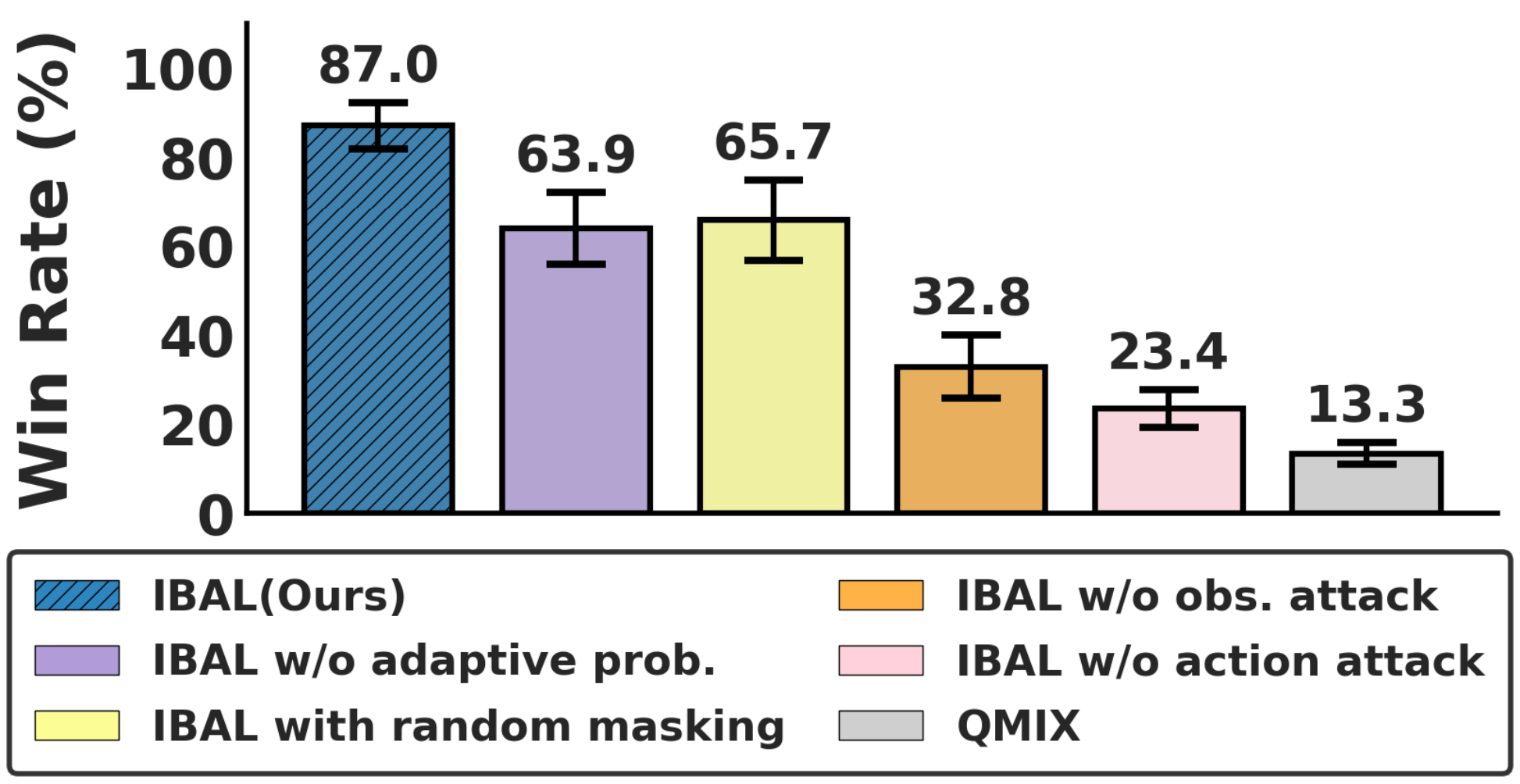}
    \caption{Component evaluation.}
    \label{fig:Component}
\end{figure}

\begingroup
\setlength{\aboverulesep}{0.15ex}
\setlength{\belowrulesep}{0.15ex}
\setlength{\cmidrulekern}{0pt}

\begin{table}[t!]
\vspace{-0.6em}
\centering
\footnotesize
\setlength{\tabcolsep}{4.2pt}
\renewcommand{\arraystretch}{1.18}
\caption{Ablation on observation attack variants.}
\resizebox{0.9\columnwidth}{!}{%
\begin{tabular}{@{}l l c@{}}
\toprule
\textbf{Scenario} & \textbf{Variant} & \textbf{Win rate (\%)} \\
\midrule

\multirow{3}{*}{\texttt{8m}}
& IBAL & \best{88.4 $\pm$ 3.30} \\
& IBAL with Gaussian noise & 78.1 $\pm$ 13.3 \\
& IBAL with FGSM & 38.5 $\pm$ 7.40 \\

\midrule

\multirow{3}{*}{\texttt{MMM}}
& IBAL & \best{88.7 $\pm$ 7.23} \\
& IBAL with Gaussian noise & 71.9 $\pm$ 22.1 \\
& IBAL with FGSM & 77.6 $\pm$ 3.60 \\

\bottomrule
\end{tabular}%
}
\label{tab:ablation_obs_noise}
\end{table}
% \vspace{-0.1in}
\endgroup
%%%%%%%%%%%%%%%%%%%%%%%%%%%%%%%%%%%%%%%%%%%%%%%%%%%%%%%%%%%%%%%%%%%
% \vspace{-0.1in}
\paragraph{Observation Attack Variants.}
We further analyze different observation attack variants applied to the selected dimensions $\boldsymbol{D}^{G_1}$. In the default IBAL, we apply zero-masking because it directly removes cross-group information. We compare this with Gaussian noise and FGSM-based observation attacks on the same selected dimensions, and evaluate these variants on \texttt{8m} and \texttt{MMM} under Dis-1. As shown in Table~\ref{tab:ablation_obs_noise}, the default IBAL with zero-masking achieves the strongest robustness. This suggests that completely removing selected cross-group information is more effective for inducing interaction breaking than adding noisy or adversarial signals, which may still preserve residual mutual information and thus be less effective at disrupting inter-agent coordination.

\paragraph{Maximum Group Size $K$.}
We analyze the effect of the maximum attacked group size $K$ for $G_1$. To avoid overly strong attacks, we restrict $K\le n/2$ so that $G_2$ remains at least as large as $G_1$. Fig.~\ref{fig:ablation_K} reports results on \texttt{2s3z} and \texttt{8m} under Dis-1, where the effect of $K$ is most pronounced. When $K=0$, no attack is applied, and performance matches Vanilla QMIX. As $K$ increases, robustness generally improves, but overly large $K$ can make the attack too severe and hinder learning a robust policy. We find that $K=1$ works best on \texttt{2s3z}, while \texttt{8m} remains stable up to $K=4$, which we use as the default setting.

% \vspace{-0.1in}

%%%%%%%%%%%%%%%%%%%%%%%%%%%%%%%%%%%%%%%%%%%%%%%%%%%%%%%%%%%%%%%%%%%

% \begin{figure}[t!]
%     \begin{center}
%     \includegraphics[width=1\linewidth]{figures/figure_6_v6.pdf}
%     % \vspace{-0.1in}
%     \caption{Component evaluation.}
%     \label{fig:Component}
%     %\vspace{0.1in}
%     \end{center}
%     \centering
%     \begin{subfigure}[t]{0.5\linewidth}
%         \centering
%         \includegraphics[width=\linewidth]{figures/figure_7_1_v3.pdf}
%         \caption{\texttt{2s3z}}
%         \label{fig:ablation_K_2s3z}
%     \end{subfigure}\hfill
%     \begin{subfigure}[t]{0.5\linewidth}
%         \centering
%         \includegraphics[width=\linewidth]{figures/figure_7_2_v3.pdf}
%         \caption{\texttt{8m}}
%         \label{fig:ablation_K_8m}
%     \end{subfigure}
%     \vspace{-0.05in}
%     \caption{Analysis on the maximum group size $K$.}
%     \label{fig:ablation_K}
%     % \vspace{-0.2in}
% \end{figure}

\section{Limitation}

% 자리 없으면 appendix A로. (2순위)
While IBAL achieves strong empirical performance, it has limitations. First, it introduces additional hyperparameters, such as the maximum group size $K$ and the masking budget $L$. In practice, we provide guideline ranges based on hyperparameter search, and we find performance is not overly sensitive within these ranges. Second, IBAL incurs extra computational overhead due to MI estimation. As analyzed in Appendix \ref{appendix:computational}, this overhead is moderate, and we find it reasonable given the corresponding performance gains.

\section{Conclusion}

In this paper, we proposed an interaction-breaking attack that minimizes cross-group influence under an MI criterion and introduced IBAL for learning policies robust to such disruptions. The attacker combines an observation attack that removes influential cross-group signals with an action attack that stochastically selects MI-minimizing actions, and we provide a theoretical justification for learning robust policies under the induced perturbed dynamics. Experiments show that IBAL improves robustness and generalization over prior methods across diverse attacks and perturbations.

\begin{figure}[t!]
    \centering
    \begin{subfigure}[t]{0.49\linewidth}
        \centering
        \includegraphics[width=\linewidth]{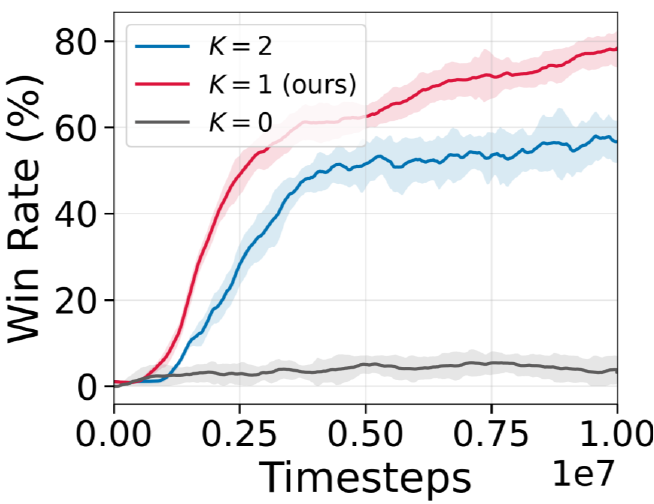}
        \caption{\texttt{2s3z}}
        \label{fig:ablation_K_2s3z}
    \end{subfigure}
    \hfill
    \begin{subfigure}[t]{0.49\linewidth}
        \centering
        \includegraphics[width=\linewidth]{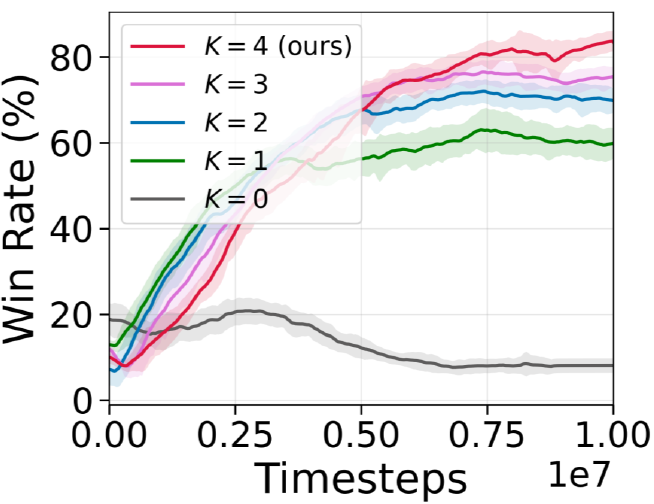}
        \caption{\texttt{8m}}
        \label{fig:ablation_K_8m}
    \end{subfigure}
    % \vspace{-0.1in}
    \caption{Analysis on the maximum group size $K$.}
    \label{fig:ablation_K}
    % \vspace{-0.1in}
\end{figure}

%%%%%%%%%%%%%%%%%%%%%%%%%%%%%%%%%%%%%%%%%%%%%%%%%%%%%%%%%%%%%%%%%%%

\newpage
\newpage

\section*{Acknowledgment}

This work was supported partly by the Institute of Information \& Communications Technology Planning \& Evaluation (IITP) grant funded by the Korea government (MSIT) (No. RS-2022-II220469,
Development of Core Technologies for Task-oriented Reinforcement Learning for Commercialization of Autonomous Drones), (IITP-2025-RS-2022-00156361, Innovative Human Resource Development for Local Intellectualization program), and
(No. RS-2020-II201336, Artificial Intelligence Graduate School Support (UNIST)), and partly by
the National Research Foundation of Korea (NRF) grant funded by the Korea government (MSIT)
(No. RS-2025-23523191, LLM-Based Multi-Agent Reinforcement Learning for End-to-End Large
Autonomous Swarm Control).

\section*{Impact Statement}

This paper presents work whose goal is to advance the field of Machine
Learning. There are many potential societal consequences of our work, none
which we feel must be specifically highlighted here.

% In the unusual situation where you want a paper to appear in the
% references without citing it in the main text, use \nocite
\nocite{langley00}

\bibliography{example_paper}
\bibliographystyle{icml2026}

%%%%%%%%%%%%%%%%%%%%%%%%%%%%%%%%%%%%%%%%%%%%%%%%%%%%%%%%%%%%%%%%%%%%%%%%%%%%%%%
%%%%%%%%%%%%%%%%%%%%%%%%%%%%%%%%%%%%%%%%%%%%%%%%%%%%%%%%%%%%%%%%%%%%%%%%%%%%%%%
% APPENDIX
%%%%%%%%%%%%%%%%%%%%%%%%%%%%%%%%%%%%%%%%%%%%%%%%%%%%%%%%%%%%%%%%%%%%%%%%%%%%%%%
%%%%%%%%%%%%%%%%%%%%%%%%%%%%%%%%%%%%%%%%%%%%%%%%%%%%%%%%%%%%%%%%%%%%%%%%%%%%%%%
\newpage
\appendix
\onecolumn

\section{Missing Proofs}

% Proof 설명 thm 4.2와 lemma 4.3의 증명을 제공

\subsection{Proof of Theorem~\ref{theorem1}}
\label{appendix_A.1}
To prove Theorem~\ref{theorem1}, we first express the state-value function $V_{\boldsymbol{\pi}_{\mathrm{adv}}\circ \boldsymbol{\pi}\circ \boldsymbol{f}_{\mathrm{adv}}}$ of the JA-Dec-POMDP in Bellman equation form in Lemma~\ref{lemma_1}.

\begin{lemma}
\label{lemma_1}
Given a JA-Dec-POMDP $\mathcal{M}^J=\langle \mathcal{N}, \mathcal{S}, \mathcal{A}, P, \Omega, O, R, \gamma, \boldsymbol{f}_{\mathrm{adv}}, \boldsymbol{\pi}_{\mathrm{adv}}\rangle$ and an ego policy $\boldsymbol{\pi}$, the state-value function $V_{\boldsymbol{\pi}_{\mathrm{adv}}\circ \boldsymbol{\pi}\circ \boldsymbol{f}_{\mathrm{adv}}}$ satisfies 
\begin{equation}
V_{\boldsymbol{\pi}_{\mathrm{adv}} \circ \boldsymbol{\pi} \circ \boldsymbol{f}_{\mathrm{adv}}}(s_t)
=
\mathbb{E}_{\tilde{\boldsymbol{a}}_t\sim \boldsymbol{\pi}_{\mathrm{adv}} \circ \boldsymbol{\pi} \circ \boldsymbol{f}_{\mathrm{adv}}}
\Bigl[
  \mathbb{E}_{s_{t+1} \sim P(\cdot | s_t,\tilde{\boldsymbol{a}}_t)}
  \bigl[
    R(s_t,\tilde{\boldsymbol{a}}_t,s_{t+1}) 
    + \gamma V_{\boldsymbol{\pi}_{\mathrm{adv}} \circ \boldsymbol{\pi} \circ \boldsymbol{f}_{\mathrm{adv}}}(s_{t+1})
  \bigr]
\Bigr].
\label{eq:lemma_1}
\end{equation}
% where $o_t^i := O(s_t,i)$, $\boldsymbol{o}_t := \{o_t^i\}_{i\in\mathcal{N}}$,
% $\tilde{\boldsymbol{o}}_t \sim \boldsymbol{f}_{\mathrm{adv}}(\cdot \mid \boldsymbol{o}_t)$,
% $\hat{\boldsymbol{a}}_t \sim \boldsymbol{\pi}(\cdot \mid \tilde{\boldsymbol{o}}_t)$, and
% $\tilde{\boldsymbol{a}}_t \sim \boldsymbol{\pi}_{\mathrm{adv}}(\cdot \mid s_t, \hat{\boldsymbol{a}}_t)$,
% so that $\tilde{\boldsymbol{a}}_t \sim \boldsymbol{\pi}_{\mathrm{adv}} \circ \boldsymbol{\pi} \circ \boldsymbol{f}_{\mathrm{adv}}(\cdot \mid s_t)$.
\end{lemma}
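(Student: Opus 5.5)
The plan is to unroll the definition of the value in the JA-Dec-POMDP, condition on the first time step, and recognize the tail of the return as the same value function evaluated at $s_{t+1}$.

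\textbf{Step 1: write the value as an expectation over trajectories.} By Definition~\ref{def:JA-Dec-POMDP}, a trajectory from $s_t$ is generated step by step. At each $l\ge t$ we compute $\boldsymbol{o}_l=\{O(s_l,i)\}_i$ and sample $\tilde{\boldsymbol{o}}_l\sim\boldsymbol{f}_{\mathrm{adv}}(\cdot\mid\boldsymbol{o}_l)$. We then sample $\hat{\boldsymbol{a}}_l\sim\boldsymbol{\pi}(\cdot\mid\tilde{\boldsymbol{o}}_l)$ and $\tilde{\boldsymbol{a}}_l\sim\boldsymbol{\pi}_{\mathrm{adv}}(\cdot\mid s_l,\hat{\boldsymbol{a}}_l)$, followed by $s_{l+1}\sim P(\cdot\mid s_l,\tilde{\boldsymbol{a}}_l)$. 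Marginalizing over $\tilde{\boldsymbol{o}}_l$ and $\hat{\boldsymbol{a}}_l$ defines the composed kernel
\[
(\boldsymbol{\pi}_{\mathrm{adv}}\circ\boldsymbol{\pi}\circ\boldsymbol{f}_{\mathrm{adv}})(\tilde{\boldsymbol{a}}\mid s)=\sum_{\tilde{\boldsymbol{o}},\hat{\boldsymbol{a}}}\boldsymbol{f}_{\mathrm{adv}}(\tilde{\boldsymbol{o}}\mid \boldsymbol{o}(s))\,\boldsymbol{\pi}(\hat{\boldsymbol{a}}\mid\tilde{\boldsymbol{o}})\,\boldsymbol{\pi}_{\mathrm{adv}}(\tilde{\boldsymbol{a}}\mid s,\hat{\boldsymbol{a}}),
\]
with integrals in place of sums for continuous spaces. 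The executed action at each step then depends only on the current state through this kernel. The joint law of $(s_t,\tilde{\boldsymbol{a}}_t,s_{t+1},\tilde{\boldsymbol{a}}_{t+1},\dots)$ is therefore a Markov chain with kernel $\kappa(\tilde{\boldsymbol{a}}\mid s)P(s'\mid s,\tilde{\boldsymbol{a}})$, and $V_{\boldsymbol{\pi}_{\mathrm{adv}}\circ\boldsymbol{\pi}\circ\boldsymbol{f}_{\mathrm{adv}}}(s_t)$ is the expected discounted sum of $R(s_l,\tilde{\boldsymbol{a}}_l,s_{l+1})$ under this chain.

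\textbf{Step 2: split off the first reward.} We write $\sum_{l\ge t}\gamma^{l-t}r_l=r_t+\gamma\sum_{l\ge t+1}\gamma^{l-t-1}r_l$ and use linearity of expectation. This is justified for bounded rewards and $\gamma<1$, which give absolute convergence and allow Fubini. We then apply the tower property, conditioning on $(\tilde{\boldsymbol{a}}_t,s_{t+1})$. The first term directly gives $\mathbb{E}_{\tilde{\boldsymbol{a}}_t}\mathbb{E}_{s_{t+1}}[R(s_t,\tilde{\boldsymbol{a}}_t,s_{t+1})]$.

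\textbf{Step 3: identify the tail with the value at $s_{t+1}$.} This is the main obstacle. We must argue that, conditional on $(s_t,\tilde{\boldsymbol{a}}_t,s_{t+1})$, the law of the future process depends only on $s_{t+1}$ and coincides with the law of a fresh rollout started at $s_{t+1}$. Here the Markov property of the chain in Step 1 is essential. It holds because $\boldsymbol{f}_{\mathrm{adv}}$ acts on $O(s_{t+1},\cdot)$ and $\boldsymbol{\pi}_{\mathrm{adv}}$ conditions on the current state, with the policy taken as a function of the current (perturbed) observation, as written in Definition~\ref{def:JA-Dec-POMDP}. If the ego policy were conditioned on a history $\tau$, I would instead augment the state with the history and run the same argument, which yields the stated equation with $s$ read as the augmented state. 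Granting this, the time-homogeneity of the kernel gives
\[
\mathbb{E}\Bigl[\sum_{l\ge t+1}\gamma^{l-t-1}r_l\,\Big|\,s_{t+1}\Bigr]=V_{\boldsymbol{\pi}_{\mathrm{adv}}\circ\boldsymbol{\pi}\circ\boldsymbol{f}_{\mathrm{adv}}}(s_{t+1}).
\]
Substituting this into Step 2 yields \eqref{eq:lemma_1}.
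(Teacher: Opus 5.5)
Your proposal is correct and follows the same route as the paper's proof: expand the definition of $V_{\boldsymbol{\pi}_{\mathrm{adv}}\circ\boldsymbol{\pi}\circ\boldsymbol{f}_{\mathrm{adv}}}(s_t)$, split off the first reward, identify the discounted tail with $\gamma V_{\boldsymbol{\pi}_{\mathrm{adv}}\circ\boldsymbol{\pi}\circ\boldsymbol{f}_{\mathrm{adv}}}(s_{t+1})$, and write the expectation explicitly over $\tilde{\boldsymbol{a}}_t$ and $s_{t+1}\sim P(\cdot\mid s_t,\tilde{\boldsymbol{a}}_t)$. The paper takes the tail identification for granted, whereas you justify it through the composed kernel and the Markov property, and you correctly note that a history-conditioned ego policy would require augmenting the state.
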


\begin{proof}

Starting from the definition of the state-value function $V(s_t)=\mathbb{E}_{\pi}\left[\sum_{l=t}^{\infty}\gamma^{l-t} r_l\,\middle|\, s_t\right]$, we can express the value induced by the composed policy $\boldsymbol{\pi}_{\mathrm{adv}} \circ \boldsymbol{\pi} \circ \boldsymbol{f}_{\mathrm{adv}}$ as follows:

\begin{align*}
V_{\boldsymbol{\pi}_{\mathrm{adv}} \circ \boldsymbol{\pi} \circ \boldsymbol{f}_{\mathrm{adv}}}(s_t)
&=
\mathbb{E}_{\boldsymbol{\pi}_{\mathrm{adv}} \circ \boldsymbol{\pi} \circ \boldsymbol{f}_{\mathrm{adv}}}
\left[
  \sum_{l=t}^{\infty} \gamma^{l-t} r_l
  \,|\,
  s_t
\right] \nonumber\\
&=
\mathbb{E}_{\boldsymbol{\pi}_{\mathrm{adv}} \circ \boldsymbol{\pi} \circ \boldsymbol{f}_{\mathrm{adv}}}
\Bigl[
  r_{t+1}
  + \gamma r_{t+2}+\gamma^2 r_{t+3}+\cdots
  \,|\,
  s_t
\Bigr] \nonumber\\
&=
\mathbb{E}_{\boldsymbol{\pi}_{\mathrm{adv}} \circ \boldsymbol{\pi} \circ \boldsymbol{f}_{\mathrm{adv}}}
\Bigl[
  r_{t+1}
  + \gamma V_{\boldsymbol{\pi}_{\mathrm{adv}} \circ \boldsymbol{\pi} \circ \boldsymbol{f}_{\mathrm{adv}}}(s_{t+1})
  \,|\,
  s_t
\Bigr] \\
&=
\mathbb{E}_{\tilde{\boldsymbol{a}}_t\sim\boldsymbol{\pi}_{\mathrm{adv}} \circ \boldsymbol{\pi} \circ \boldsymbol{f}_{\mathrm{adv}}}
\Bigl[
  \mathbb{E}_{s_{t+1} \sim P(\cdot | s_t,\tilde{\boldsymbol{a}}_t)}
  \bigl[
    R(s_t,\tilde{\boldsymbol{a}}_t,s_{t+1})
    + \gamma V_{\boldsymbol{\pi}_{\mathrm{adv}} \circ \boldsymbol{\pi} \circ \boldsymbol{f}_{\mathrm{adv}}}(s_{t+1})
  \bigr]
\Bigr].
\label{eq:lemma_1_proof}
\end{align*}
\end{proof}

Now, we prove Theorem 4.2 based on Lemma \ref{lemma_1}.

\begingroup
\renewcommand{\thetheorem}{4.2}
\begin{theorem}
Given a JA-Dec-POMDP $\mathcal{M}^J$ with joint attackers $\boldsymbol{f}_{\mathrm{adv}}$ and $\boldsymbol{\pi}_{\mathrm{adv}}$, there exists an induced Dec-POMDP
$\tilde{\mathcal{M}}=\langle \mathcal{N}, \tilde{\mathcal{S}}, \mathcal{A}, \tilde{P}, \Omega, \tilde{O}, \tilde{R}, \gamma \rangle$,
whose state space, transition probability, observation, and reward functions explicitly account for the joint attack. Specifically,
\begin{align}
\tilde{\boldsymbol{o}}_t:=\tilde{O}(s_t,i) &= \boldsymbol{f}_{\mathrm{adv}}\!\left(\cdot \mid O(s_t,i)\right),~~  
\hat{\boldsymbol{a}}_t \sim \boldsymbol{\pi}(\cdot|\tilde{\boldsymbol{o}}_t), \nonumber\\
\tilde{P}(\tilde{s}_{t+1}\mid \tilde{s}_t,\hat{\boldsymbol{a}}_t) &:= P(s_{t+1}\mid s_t, \hat{\boldsymbol{a}}_t)\cdot\boldsymbol{\pi}_{\mathrm{adv}}(\tilde{\boldsymbol{a}}_t\mid s_t,\hat{\boldsymbol{a}}_t),\nonumber \\
\tilde{R}(\tilde{s}_t,\hat{\boldsymbol{a}}_t,\tilde{s}_{t+1}) &:= R(s_t,\tilde{\boldsymbol{a}}_t,s_{t+1}),\nonumber
\end{align}
where $\tilde{s}_t := (s_t,\tilde{\boldsymbol{a}}_t)\in\tilde{\mathcal{S}}$ with $\tilde{\boldsymbol{a}}_t \sim \boldsymbol{\pi}_{\mathrm{adv}} \circ \boldsymbol{\pi} \circ \boldsymbol{f}_{\mathrm{adv}}$. Then, for any joint policy $\boldsymbol{\pi}$ and all $s_t\in\mathcal{S}$, the state value $\tilde{V}_{\boldsymbol{\pi}}(s_t)$ of $\boldsymbol{\pi}$ in the Dec-POMDP $\tilde{\mathcal{M}}$ is equivalent to the state value $V_{\boldsymbol{\pi}_{\mathrm{adv}}\circ \boldsymbol{\pi}\circ \boldsymbol{f}_{\mathrm{adv}}}(s_t)$ in the JA-Dec-POMDP $\mathcal{M}^J$.
\label{theorem1_appendix}
\end{theorem}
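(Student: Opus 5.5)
Our plan is to show that the value function of $\boldsymbol{\pi}$ in $\tilde{\mathcal{M}}$ satisfies the same Bellman fixed-point equation as $V_{\boldsymbol{\pi}_{\mathrm{adv}}\circ\boldsymbol{\pi}\circ\boldsymbol{f}_{\mathrm{adv}}}$ in Lemma~\ref{lemma_1}, and then to conclude equality from uniqueness of the fixed point.

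\textbf{Step 1: Bellman equation in $\tilde{\mathcal{M}}$.} In $\tilde{\mathcal{M}}$, agents receive $\tilde{O}(s_t,i)$, which is exactly $\tilde{\boldsymbol{o}}_t\sim\boldsymbol{f}_{\mathrm{adv}}(\cdot\mid\boldsymbol{o}_t)$, and select $\hat{\boldsymbol{a}}_t\sim\boldsymbol{\pi}(\cdot\mid\tilde{\boldsymbol{o}}_t)$. The augmented next state $\tilde{s}_{t+1}=(s_{t+1},\tilde{\boldsymbol{a}}_t)$ is drawn from $\tilde{P}$, and the reward $\tilde{R}$ depends only on $(s_t,\tilde{\boldsymbol{a}}_t,s_{t+1})$. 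Writing the standard one-step Bellman equation for $\tilde{V}_{\boldsymbol{\pi}}$ gives
\begin{align*}
\tilde{V}_{\boldsymbol{\pi}}(s_t)=\mathbb{E}_{\tilde{\boldsymbol{o}}_t\sim\boldsymbol{f}_{\mathrm{adv}},\,\hat{\boldsymbol{a}}_t\sim\boldsymbol{\pi}}\Bigl[\sum_{\tilde{\boldsymbol{a}}_t,s_{t+1}}\boldsymbol{\pi}_{\mathrm{adv}}(\tilde{\boldsymbol{a}}_t\mid s_t,\hat{\boldsymbol{a}}_t)\,P(s_{t+1}\mid s_t,\tilde{\boldsymbol{a}}_t)\bigl(R(s_t,\tilde{\boldsymbol{a}}_t,s_{t+1})+\gamma\tilde{V}_{\boldsymbol{\pi}}(s_{t+1})\bigr)\Bigr].
\end{align*}
Here the value depends only on the $s$-component of $\tilde{s}$, because the stored action $\tilde{\boldsymbol{a}}$ affects neither future observations nor future transitions.

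\textbf{Step 2: Marginalize the intermediate variables.} Summing over $\tilde{\boldsymbol{o}}_t$ and $\hat{\boldsymbol{a}}_t$ collapses the three sampling stages into the single composite kernel $\tilde{\boldsymbol{a}}_t\sim\boldsymbol{\pi}_{\mathrm{adv}}\circ\boldsymbol{\pi}\circ\boldsymbol{f}_{\mathrm{adv}}(\cdot\mid s_t)$. The right-hand side then has exactly the form of \eqref{eq:lemma_1}.

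\textbf{Step 3: Uniqueness of the fixed point.} Define the operator $\mathcal{T}V(s)=\mathbb{E}_{\tilde{\boldsymbol{a}}\sim\boldsymbol{\pi}_{\mathrm{adv}}\circ\boldsymbol{\pi}\circ\boldsymbol{f}_{\mathrm{adv}}}\mathbb{E}_{s'\sim P}[R+\gamma V(s')]$. It is a $\gamma$-contraction in the sup norm, assuming bounded rewards and $\gamma<1$. By Steps 1--2 and Lemma~\ref{lemma_1}, both $\tilde{V}_{\boldsymbol{\pi}}$ and $V_{\boldsymbol{\pi}_{\mathrm{adv}}\circ\boldsymbol{\pi}\circ\boldsymbol{f}_{\mathrm{adv}}}$ are fixed points of $\mathcal{T}$, so Banach's theorem gives $\tilde{V}_{\boldsymbol{\pi}}(s_t)=V_{\boldsymbol{\pi}_{\mathrm{adv}}\circ\boldsymbol{\pi}\circ\boldsymbol{f}_{\mathrm{adv}}}(s_t)$ for all $s_t$.

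\textbf{Main obstacle.} The delicate part is the bookkeeping of the theorem's transition kernel $\tilde{P}$. As written, it conditions $P$ on $\hat{\boldsymbol{a}}_t$, whereas the environment actually moves according to $\tilde{\boldsymbol{a}}_t$. I would interpret $\tilde{P}$ as the joint kernel over $(s_{t+1},\tilde{\boldsymbol{a}}_t)$: first sample $\tilde{\boldsymbol{a}}_t$ from $\boldsymbol{\pi}_{\mathrm{adv}}$, then sample $s_{t+1}$ from $P(\cdot\mid s_t,\tilde{\boldsymbol{a}}_t)$. Under this reading, $\tilde{R}$ is well defined because $\tilde{\boldsymbol{a}}_t$ is part of $\tilde{s}_{t+1}$. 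I would state this interpretation explicitly before carrying out Step 1.
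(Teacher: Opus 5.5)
Your proposal is correct and follows the same route as the paper. You write the one-step Bellman equation for $\tilde V_{\boldsymbol{\pi}}$ in $\tilde{\mathcal{M}}$, marginalize $\tilde{\boldsymbol{o}}_t$ and $\hat{\boldsymbol{a}}_t$ into the composite kernel $\boldsymbol{\pi}_{\mathrm{adv}}\circ\boldsymbol{\pi}\circ\boldsymbol{f}_{\mathrm{adv}}$ to match Lemma~\ref{lemma_1}, and conclude by uniqueness of the Bellman fixed point. Your explicit reading of $\tilde P$ (draw $\tilde{\boldsymbol{a}}_t$ first, then $s_{t+1}\sim P(\cdot\mid s_t,\tilde{\boldsymbol{a}}_t)$, with $\tilde s_{t+1}=(s_{t+1},\tilde{\boldsymbol{a}}_t)$) and your remark that $\tilde V_{\boldsymbol{\pi}}$ depends only on the $s$-component supply exactly what the paper's step (a) glosses over: that step writes $P(\cdot\mid s_t,\hat{\boldsymbol{a}}_t)$ and then silently switches to $P(\cdot\mid s_t,\tilde{\boldsymbol{a}}_t)$.
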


\begin{proof}
From the defined perturbed state space, transition probability, observation, and reward functions, we can express the state-value function $\tilde{V}_{\boldsymbol{\pi}}$ of the induced Dec-POMDP $\tilde{\mathcal{M}}$ in Bellman form as follows.
\begin{align}
\tilde{V}_{\boldsymbol{\pi}}(s_t):=\tilde{V}_{\boldsymbol{\pi}}(\tilde{s}_t)
&=\mathbb{E}_{\hat{\boldsymbol{a}}_t\sim \boldsymbol{\pi}(\cdot|\tilde{O}(s_t))}\left[\mathbb{E}_{\tilde{s}_{t+1}\sim \tilde{P}(\tilde{s}_{t+1}|\tilde{s}_t,\hat{\boldsymbol{a}}_t)}[\tilde{R}(\tilde{s}_t,\hat{\boldsymbol{a}}_t,\tilde{s}_{t+1})
+\gamma\tilde{V}_{\boldsymbol{\pi}}(s_{t+1})]\right]
 \nonumber\\
&\underset{(a)}{=}\mathbb{E}_{\tilde{\boldsymbol{o}}_t\sim \boldsymbol{f}_{\mathrm{adv}}(\tilde{\boldsymbol{o}}_t\mid O(s_t)),~\hat{\boldsymbol{a}}_t\sim \boldsymbol{\pi}(\cdot|\tilde{\boldsymbol{o}}_t)}\left[\mathbb{E}_{\tilde{\boldsymbol{a}}_t\sim\boldsymbol{\pi}_{\mathrm{adv}}(\cdot|s_t,\hat{\boldsymbol{a}}_t),~s_{t+1}\sim P(\cdot|s_t,\hat{\boldsymbol{a}}_t)}[R(s_t,\tilde{\boldsymbol{a}}_t,s_{t+1})
+\gamma\tilde{V}_{\boldsymbol{\pi}}(s_{t+1})]\right]
 \nonumber\\
&=\mathbb{E}_{\tilde{\boldsymbol{a}}_t\sim\boldsymbol{\pi}_{\mathrm{adv}} \circ \boldsymbol{\pi} \circ \boldsymbol{f}_{\mathrm{adv}}}
\Bigl[
  \mathbb{E}_{s_{t+1} \sim P(\cdot | s_t,\tilde{\boldsymbol{a}}_t)}
  \bigl[
    R(s_t,\tilde{\boldsymbol{a}}_t,s_{t+1})
    + \gamma \tilde{V}_{\boldsymbol{\pi}}(s_{t+1})
  \bigr]
\Bigr],
\end{align}
where (a) follows directly from the definitions of the perturbed components in the induced Dec-POMDP. \\

From Lemma \ref{lemma_1}, the Bellman equations for $\tilde{V}_{\boldsymbol{\pi}}$ and $V_{\boldsymbol{\pi}_{\mathrm{adv}}\circ \boldsymbol{\pi}\circ \boldsymbol{f}_{\mathrm{adv}}}$ coincide exactly. By the fixed-point theorem, the value function satisfying the Bellman equation is unique; hence, the two values are equivalent.

\end{proof}

%%%%%%%%%%%%%%%%%%%%%%%%%%%%%%%%%%%%%%%%%%%%%%%%%%%%%%%%%%%%%%%%%%%%%%%%%%%%%%%

\newpage
\subsection{Proof of Lemma~\ref{lemma1}}
\label{appendix_A.2}
\begingroup
\renewcommand{\thetheorem}{4.3}
\begin{lemma}
Given a masking attacker $\boldsymbol{m}^{G_1}$, the group-wise observation-level MI can be upper-bounded as
{\vspace{-.5em}\small\begin{align}
\mathcal{I}\!\bigl(
&\tilde{\boldsymbol{o}}_{t+1}^{G_1}; \boldsymbol{a}_t^{G_2}
\,\big|\,
\boldsymbol{a}_t^{G_1}, \boldsymbol{\tau}_t
\bigr)
\nonumber\\
\leq&
\sum_{i\in G_1}\sum_{d\notin D^i}\underbrace{\sum_{j\in G_2}
\mathcal{I}\!\bigl(
o_{d,t+1}^{i}; \boldsymbol{a}_t^{j}
\,\big|\,
\boldsymbol{a}_t^i, \boldsymbol{\tau}_t
\bigr)}_{\textit{\footnotesize dimension-wise MI}}+\;\mathcal{R}(G_1;G_2),
\label{eq:mi_decomp_bound}
\end{align}}
where $d_o$ denotes the observation dimension and $\mathcal{R}(G_1;G_2)$ denotes a group redundancy term.
\label{appendix_lemma1}
\end{lemma}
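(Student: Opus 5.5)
We work with the masked observation, expand it by the chain rule of mutual information, and then define the redundancy term $\mathcal{R}(G_1;G_2)$ as exactly the gap between the chained terms and the simple dimension-wise terms. The bound then holds with equality in a trivial sense and becomes an inequality once non-negative corrections are dropped.

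\textbf{Step 1: remove the masked coordinates.} For $d\in D^i$, the masked component $\tilde o^i_{d,t+1}=0$ is deterministic. By the data-processing inequality quoted before Definition~\ref{def2}, these coordinates contribute nothing, so
\[
\mathcal{I}\bigl(\tilde{\boldsymbol{o}}_{t+1}^{G_1};\boldsymbol{a}_t^{G_2}\mid \boldsymbol{a}_t^{G_1},\boldsymbol{\tau}_t\bigr)=\mathcal{I}\bigl(\{o^i_{d,t+1}\}_{i\in G_1,d\notin D^i};\boldsymbol{a}_t^{G_2}\mid \boldsymbol{a}_t^{G_1},\boldsymbol{\tau}_t\bigr).
\]

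\textbf{Step 2: chain rule over the unmasked coordinates.} Order the pairs $(i,d)$ with $i\in G_1$, $d\notin D^i$. The chain rule over these coordinates, and then over the agents $j\in G_2$ of $\boldsymbol{a}_t^{G_2}$, gives an exact sum of terms
\[
\mathcal{I}\bigl(o^i_{d,t+1};a^j_t\mid \boldsymbol{a}_t^{G_1},\boldsymbol{a}_t^{<j},\boldsymbol{o}^{<(i,d)}_{t+1},\boldsymbol{\tau}_t\bigr).
\]

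\textbf{Step 3: define the redundancy term.} Each term from Step 2 differs from the target dimension-wise term $\mathcal{I}(o^i_{d,t+1};a^j_t\mid a^i_t,\boldsymbol{\tau}_t)$ by an interaction-information quantity, which records how conditioning on the extra variables (the other $G_1$ actions, the preceding $G_2$ actions, and the preceding observation coordinates) changes the information. We set $\mathcal{R}(G_1;G_2)$ to be the sum of these differences, i.e.\ the multivariate interaction (redundancy/synergy) terms in the spirit of \cite{kubkowski2021gain}. Summing over $(i,d,j)$ then reproduces the right-hand side of the lemma exactly. To state a genuine inequality, we would split $\mathcal{R}$ into its positive and negative parts and keep only the positive part, so that the displayed expression upper-bounds the left-hand side.

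\textbf{Main obstacle.} The difficulty is making $\mathcal{R}$ meaningful rather than a tautological remainder. Interaction information can have either sign, so no clean subadditivity holds without assumptions. If the dimensions were conditionally independent given $(\boldsymbol{a}_t,\boldsymbol{\tau}_t)$, one could prove $\mathcal{I}(X_1,\dots;Y)\le\sum\mathcal{I}(X_k;Y)$ directly, and $\mathcal{R}$ would vanish. I would first try to prove that special case, and then present $\mathcal{R}$ as the correction capturing the dependence, expressed through total-correlation terms. The empirical claim that $\mathcal{R}$ is negligible (Appendix~\ref{redundancy_analysis}) is not part of this proof; it is checked separately.
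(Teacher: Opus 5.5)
Your proof is correct, and it takes a genuinely different route from the paper's.

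The paper proceeds in three steps:
\begin{enumerate}
\item It applies a M\"obius (inclusion--exclusion) expansion to write the left-hand side as $\sum_{i\in G_1}\sum_{j\in G_2}\sum_{d=1}^{d_o}\mathcal{I}(\tilde{o}^i_{d,t+1};a^j_t\mid \boldsymbol{a}^{G_1}_t,\boldsymbol{\tau}_t)+\mathcal{R}(G_1;G_2)$. Here $\mathcal{R}$ collects all higher-order interaction-information terms at the fixed conditioning $(\boldsymbol{a}^{G_1}_t,\boldsymbol{\tau}_t)$.
\item It invokes a ``conditioning relaxation'' $\mathcal{I}(X;Y\mid Z_1,Z_2)\le\mathcal{I}(X;Y\mid Z_1)$ to shrink the conditioning from $\boldsymbol{a}^{G_1}_t$ to $a^i_t$.
\item Only at the end does it discard the masked coordinates as constants.
\end{enumerate}
You instead remove the masked coordinates first and expand by the chain rule. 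Your $\mathcal{R}$ then absorbs both the interaction terms and the change of conditioning set, so the lemma holds with equality.

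What each route buys: the paper's route gives $\mathcal{R}$ a cleaner structural meaning as pure interaction information, which is what it leans on when citing \cite{kubkowski2021gain} for fast decay. However, its relaxation step does not hold in general, because conditioning can increase mutual information. For example, take $X,Y$ independent fair bits and $Z_2=X\oplus Y$; then $\mathcal{I}(X;Y)=0$ but $\mathcal{I}(X;Y\mid Z_2)=1$. Your route never needs that step. Moreover, your $\mathcal{R}$ coincides exactly with the quantity the paper actually measures in Appendix~\ref{redundancy_analysis}, namely the group-wise MI minus the $a^i_t$-conditioned dimension-wise sum. The price is that the bound becomes essentially definitional, but the paper pays it too, since its $\mathcal{R}$ is likewise ``all remaining terms.''

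Three minor points:
\begin{itemize}
\item The equality in Step 1 holds because $\tilde{\boldsymbol{o}}^{G_1}_{t+1}$ and the unmasked coordinates are functions of each other, i.e.\ data processing applied in both directions, not a single application of the inequality.
\item The positive-part device is unnecessary once you have an identity.
\item In your conditional-independence special case, the correction over observation dimensions equals minus a conditional total correlation, so it is nonpositive rather than zero.
\end{itemize}
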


\begin{proof}
It is known that multivariate mutual information admits the following Möbius expansion~\cite{matsuda2000physical, singha2018adaptive}. Let $\boldsymbol{X}=(X_1,\dots,X_n)$ be a multivariate vector, and let $Y$ and $Z$ denote random variables. Then the mutual information $\mathcal{I}(\boldsymbol{X};Y\mid Z)$ can be expanded as
\begin{align}
\mathcal{I}(\boldsymbol{X};Y\mid Z)
&=
\sum_{i=1}^n \mathcal{I}(X_i;Y\mid Z)
\;-\;
\sum_{i < j} \mathcal{I}(X_i;X_j;Y\mid Z)
\;+\;\cdots\;+\;
(-1)^{n+1} \mathcal{I}(X_1;\dots;X_n;Y\mid Z)
\nonumber\\
&=
\sum_{i=1}^n \mathcal{I}(X_i;Y\mid Z)
\;+\;\mathcal{R}(\boldsymbol{X};Y\mid Z),
\label{eq:mobius-mi}
\end{align}
where $\mathcal{I}(X_i;X_j;Y\mid Z)=\mathcal{I}(X_i;X_j\mid Z)-\mathcal{I}(X_i;X_j\mid Y,Z)$ denotes the \emph{interaction information} among ${X_i,~X_j,~Y}$ given $Z$, and $\mathcal{R}(\boldsymbol{X};Y\mid Z)$ is the redundancy term.

Then, the group-wise MI $\mathcal{I}\!\bigl(
\tilde{\boldsymbol{o}}_{t+1}^{G_1}; \boldsymbol{a}_t^{G_2}
\,\big|\,
\boldsymbol{a}_t^{G_1}, \boldsymbol{\tau}_t
\bigr)$ can be written as
\begin{align}
\mathcal{I}\bigl(
\tilde{\boldsymbol{o}}_{t+1}^{G_1}; \boldsymbol{a}_t^{G_2}
\,\big|\,
\boldsymbol{a}_t^{G_1}, \boldsymbol{\tau}_t
\bigr)
&\underset{(a)}{=}
\sum_{i\in G_1}\sum_{j\in G_2}\sum_{d=1}^{d_o}
\mathcal{I}\bigl(
\tilde{o}_{d,t+1}^{i}; a_t^{j}
\,\big|\,
\boldsymbol{a}^{G_1}, \boldsymbol{\tau}_t
\bigr)
+\mathcal{R}(G_1;G_2) \nonumber\\
&\underset{(b)}{\le} \sum_{i\in G_1}\sum_{j\in G_2}\sum_{d=1}^{d_o}
\mathcal{I}\bigl(
\tilde{o}_{d,t+1}^{i}; a_t^j
\,\big|\,
a_t^{i}, \boldsymbol{\tau}_t
\bigr)+\mathcal{R}(G_1;G_2)\nonumber\\
&=
\sum_{i\in G_1}\sum_{j\in G_2}
\Bigl(
\sum_{d\notin D^i}
\mathcal{I}\bigl(
\tilde{o}_{d,t+1}^{i}; a_t^j
\,\big|\,
a_t^{i}, \boldsymbol{\tau}_t
\bigr)
+
\sum_{d\in D^i}
\mathcal{I}\bigl(
\tilde{o}_{d,t+1}^{i}; a_t^j
\,\big|\,
a_t^{i}, \boldsymbol{\tau}_t
\bigr)
\Bigr)+\mathcal{R}(G_1;G_2) \nonumber\\
&\underset{(c)}{=}
\sum_{i\in G_1}\sum_{j\in G_2}
\Bigl(
\sum_{d\notin D^i}
\mathcal{I}\bigl(
o_{d,t+1}^{i}; a_t^j
\,\big|\,
a_t^{i}, \boldsymbol{\tau}_t
\bigr)
+
\sum_{d\in D^i}
\cancelto{0}{\mathcal{I}\bigl(
0; a_t^j
\,\big|\,
a_t^{i}, \boldsymbol{\tau}_t
\bigr)}
\Bigr)+\mathcal{R}(G_1;G_2) \nonumber\\
&=
\sum_{i\in G_1}\sum_{d\notin D^i}
\underbrace{\sum_{j\in G_2}
\mathcal{I}\bigl(
o_{d,t+1}^{i}; a_t^j
\,\big|\,
a_t^{i}, \boldsymbol{\tau}_t
\bigr)}_{\textit{\footnotesize dimension-wise MI}}+\mathcal{R}(G_1;G_2).
\label{eq:mi_mobius_decomp_group}
\end{align}
In~\eqref{eq:mi_mobius_decomp_group}, (a) follows directly by applying the Möbius expansion~\eqref{eq:mobius-mi} sequentially, decomposing across agents in $G_1$ and $G_2$ and across all observation dimensions in $G_1$, where the group redundancy term $\mathcal{R}(G_1;G_2)$ is defined as the sum of all terms that arise in the expansion except the dimension-wise MI terms. (b) follows from the conditioning relaxation property of mutual information, i.e., $\mathcal{I}(X;Y\mid Z_1,Z_2)\leq \mathcal{I}(X;Y\mid Z_1)$. Finally, for (c), given the masking dimensions $D^i$, we have $\tilde{o}_{d,t+1}^i=o_{d,t+1}^i$ for $d\notin D^i$ (unmasked dimensions) and $\tilde{o}_{d,t+1}^i=0$ for $d\in D^i$ (masked dimensions). Since $0$ is a deterministic constant, the corresponding mutual information satisfies $\mathcal{I}(0; a_t^j \mid a_t^{i},\boldsymbol{\tau}_t)=0$.
\end{proof} 

%%%%%%%%%%%%%%%%%%%%%%%%%%%%%%%%%%%%%%%%%%%%%%%%%%%%%%%%%%%%%%%%%%%%%%%%%%%%%%%
\newpage
\section{Implementation Details}
\label{Implementation Details}
We provide additional implementation details for our attacks and training pipeline. 
Section~\ref{appendix:MI} describes the MI estimation required for MI-based observation and action attacks. 
Section~\ref{appendix:imp_IBAL} presents the MARL implementation used to train IBAL under the Interaction-Breaking attack.

\subsection{Mutual Information Estimation}
\label{appendix:MI}
% \vspace{-0.1in}
\begin{figure*}[h!]
    \begin{center}
    \includegraphics[width=0.7\linewidth]{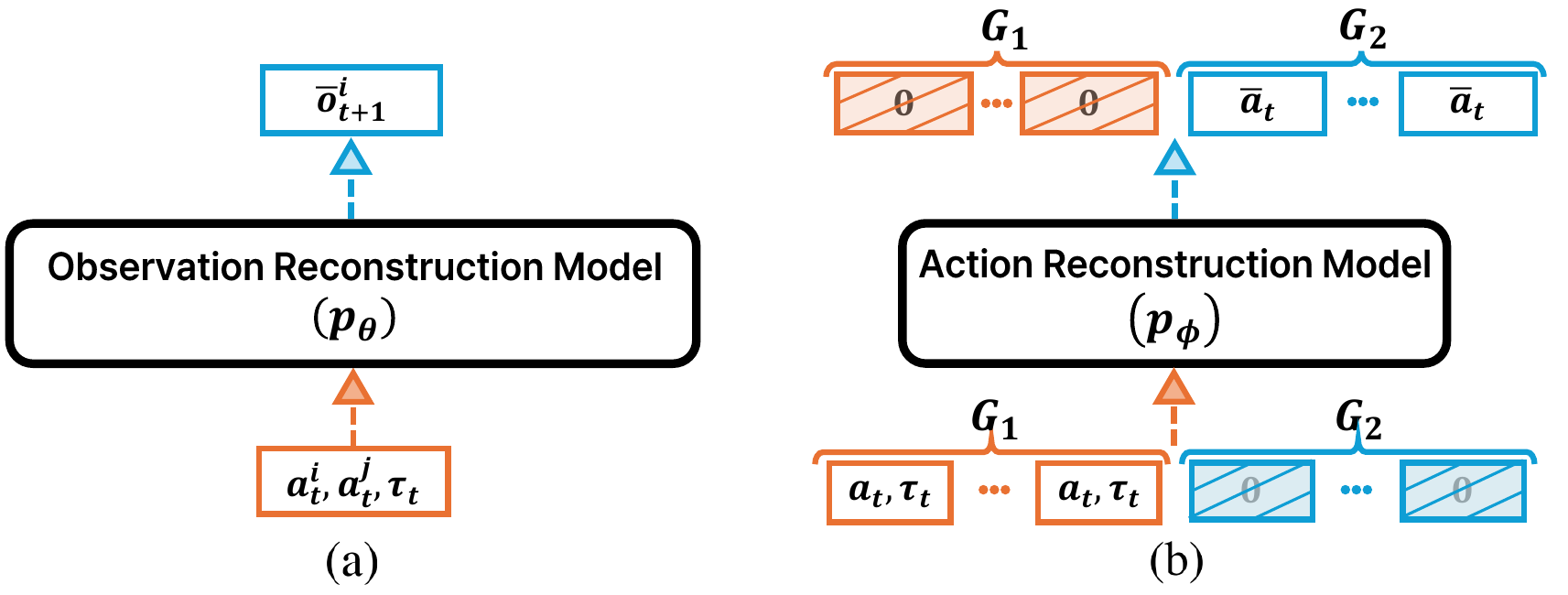}
    \vspace{-0.05in}
    \caption{Reconstruction models for MI estimation: (a) observation reconstruction model and (b) action reconstruction model.}
    \label{fig:recon_model}
    \vspace{-0.2in}
    \end{center}
\end{figure*}

The proposed joint interaction-breaking attackers
$\boldsymbol{f}_{\mathrm{adv}}^{\mathrm{IB}}$ and $\boldsymbol{\pi}_{\mathrm{adv}}^{\mathrm{IB}}$
requires estimating the associated MI terms.
In particular, Eq.~\eqref{eq:total_mi} decomposes the total interaction into
the action-level MI term and the observation-level MI term, both of which must be estimated to construct the attacker.
To this end, we build two MI estimators: one for the observation-level MI and the other for the action-level MI.

\paragraph{Observation-level MI Estimation.}
Following Definition~\ref{def2}, this requires estimating
$\mathcal{I}(o_{t+1}^{i}; a_t^j \mid a_t^i, \boldsymbol{\tau}_t)$.
However, directly computing this is generally intractable \cite{paninski2003estimation}.
Since our attacker aims to reduce this MI, we adopt the CLUB \cite{cheng2020club} upper bound and use its sample-based estimate as a surrogate for the observation-level MI.

Following CLUB, we upper-bound
\begin{align*}
\mathcal{I}\!\left(o_{t+1}^{i}; a_t^j \mid a_t^i, \boldsymbol{\tau}_t\right)
\;\le\;
\mathcal{I}_{\mathrm{CLUB}}\!\left(o_{t+1}^{i}; a_t^j \mid a_t^i, \boldsymbol{\tau}_t\right).
\end{align*}
We estimate $\mathcal{I}_{\mathrm{CLUB}}$ from samples.
Let $\mathcal{D}^+$ be a positive buffer containing aligned tuples
$\{(o_{t+1}^{i}, a_t^i, a_t^j, \boldsymbol{\tau}_t)\}$ drawn from the same transition. We construct negative pairs by replacing $a_t^j$ with an action sampled independently from a negative buffer $\mathcal{D}^-$ (e.g., sampling $a_t^j$ from other transitions) to break the dependence between $o_{t+1}^{i}$ and $a_t^j$. Given a batch
$\{(o_{t+1}^{i},a_t^i,a_t^j,\boldsymbol{\tau}_t)_b)\}_{b=1}^{B}\sim \mathcal{D}^+$
and negative indices $k_b$ uniformly sampled from $\{1,\dots,B\}$,
we approximate the CLUB:
\begin{align*}
\mathcal{I}_{\mathrm{CLUB}}\!\left(o_{t+1}^{i}; a_t^j \mid a_t^i, \boldsymbol{\tau}_t\right)
&\approx
\frac{1}{B}\sum_{b=1}^{B}
\log p_{\theta}\!\Bigl(
    (o_{t+1}^{i})_b
    \,\Big|\,
    (a_t^i,a_t^j,\boldsymbol{\tau}_t)_b
\Bigr) \\
&\quad-
\frac{1}{B}\sum_{b=1}^{B}
\log p_{\theta}\!\Bigl(
    (o_{t+1}^{i})_b
    \,\Big|\,
    (a_t^j)_{k_b}, (a_t^i,\boldsymbol{\tau}_t)_b
\Bigr).
\end{align*}
where $p_{\theta}$ is an agent-wise observation reconstruction model (Fig.~\ref{fig:recon_model}(a))
that predicts $o_{t+1}^{i}$ from $(a_t^i,a_t^j,\boldsymbol{\tau}_t)$.
We parameterize $p_{\theta}(o_{t+1}^{i}\mid a_t^i,a_t^j,\boldsymbol{\tau}_t)$ as a diagonal Gaussian
with mean $\mu_{\theta}$ and log-variance $\log v_{\theta}$, where the reconstructed observation is
$\bar{o}_{t+1}^{i} := \mu_{\theta}(a_t^i,a_t^j,\boldsymbol{\tau}_t)$.
We train the model by minimizing the negative log-likelihood:
\begin{align*}
\mathcal{L}(\theta)
&=
\mathbb{E}\Big[
-\log p_{\theta}\!\left(o_{t+1}^{i}\mid a_t^i,a_t^j,\boldsymbol{\tau}_t\right)
\Big].
\end{align*}

In practice, we do not estimate MI at every timestep. Instead, we periodically collect a batch of samples and compute the average agent-wise MI, which are then used by the observation attacker.
%%%%%%%%%%%%%%%%%%%%%%%%%%%%%%%%%%%%%%%%%%%%%%%%%%%%%%%%%%%%%%%%%%%%%%%%%%%%%%%
\newpage

\paragraph{Action-level MI Estimation.}
Following Definition~\ref{def3}, we estimate the action-level MI term
$\mathcal{I}\bigl(
  \boldsymbol{a}_t^{G_1};
  \boldsymbol{a}_t^{G_2}
  \,\big|\,
  \boldsymbol{\tau}_t
\bigr)$.
This conditional MI admits a closed-form KL characterization:
\begin{align*}
\mathcal{I}\!\bigl(
  \boldsymbol{a}_t^{G_1};
  \boldsymbol{a}_t^{G_2}
  \,\big|\,
  \boldsymbol{\tau}_t
\bigr)
=
D_{\mathrm{KL}}\Bigl(
  p(\boldsymbol{a}_t^{G_2}\mid \boldsymbol{a}_t^{G_1},\boldsymbol{\tau}_t)
  \,\Big\|\,
  p(\boldsymbol{a}_t^{G_2}\mid \boldsymbol{\tau}_t)
\Bigr),
\end{align*}
which requires estimating the conditional distributions
$p(\boldsymbol{a}_t^{G_2}\mid \boldsymbol{a}_t^{G_1},\boldsymbol{\tau}_t)$ and
$p(\boldsymbol{a}_t^{G_2}\mid \boldsymbol{\tau}_t)$.
We approximate these with an action reconstruction model
$p_{\phi}(\boldsymbol{a}_t^{G_2}\mid \boldsymbol{a}_t^{G_1},\boldsymbol{\tau}_t)$.
To avoid training separate models for each $(G_1,G_2)$ pair, we train a single model $p_\phi$
over the joint action $\boldsymbol{a}_t$ \cite{jaques2019social}, whose overall design is illustrated in Fig.~\ref{fig:recon_model}(b).
Concretely, we zero-mask the $G_2$ components in the input action and train $p_\phi$ to reconstruct the $G_2$ actions from $(\boldsymbol{a}_t^{G_1}, \boldsymbol{\tau}_t)$.
For training, we form the target by zero-masking the $G_1$ components in the output action so that the loss is computed only on $G_2$.
Let $\bar{\boldsymbol{a}}_t$ denote the reconstructed action, whose $G_2$ components $\bar{\boldsymbol{a}}_t^{G_2}$ are used as the reconstruction outputs.
Accordingly, we minimize the cross-entropy loss
\begin{align*}
\mathcal{L}(\phi)
=
\mathbb{E}\Bigl[
-\sum_{j\in G_2}
\log p_{\phi}\!\bigl(
a_t^{j}\mid \boldsymbol{a}_t^{G_1},\boldsymbol{\tau}_t
\bigr)
\Bigr].
\end{align*}
In practice, for each attacked agent $i\in G_1$, we counterfactually evaluate available actions and choose the one that minimizes our estimated action-level MI objective. In addition, Appendix~\ref{appendix:computational} shows the computational complexity comparison; despite the additional MI estimation, the overall training time increases only slightly compared to Vanilla QMIX.

\subsection{Detailed Implementation of IBAL Framework}
\label{appendix:imp_IBAL}
With the interaction-breaking attackers, we obtain the JA-Dec-POMDP $\mathcal{M}^J$ and its induced Dec-POMDP $\tilde{\mathcal{M}}$. As summarized by Theorem~\ref{theorem1}, IBAL trains CTDE-based MARL policies on $\tilde{\mathcal{M}}$ to improve robustness under interaction breaking in $\mathcal{M}^J$. In our implementation, we instantiate IBAL with the representative value-based CTDE algorithm \textbf{QMIX} \cite{rashid2020monotonic} by applying the interaction-breaking adversarial attacks.

QMIX learns decentralized policies by training individual agent $Q$-networks and combining them with a monotonic mixing network to estimate the joint action-value $Q^{\mathrm{tot}}$ under centralized training. IBAL follows the same $Q$-learning procedure as QMIX, but collects replay data from $\tilde{\mathcal{M}}$, where interaction-breaking perturbations are applied. We minimize the temporal-difference (TD) loss on the joint value:
\begin{equation*}
\mathcal{L}_{\mathrm{TD}}(\psi)
=
\mathbb{E}\Bigl[
\bigl(
r_t + \gamma \max_{\boldsymbol{a}'} Q^{\mathrm{tot}}_{\psi^{\text{targ}}}(s_{t+1},\boldsymbol{a}')
- Q^{\mathrm{tot}}_{\psi}(s_t,\boldsymbol{a}_t)
\bigr)^2
\Bigr],
\label{eq:td_loss}
\end{equation*}
where $\psi^{\text{targ}}$ denotes the target-network parameters updated via an exponential moving average of $\psi$.

%%%%%%%%%%%%%%%%%%%%%%%%%%%%%%%%%%%%%%%%%%%%%%%%%%%%%%%%%%%%%%%%%%%%%%%%%%%%%%%
\newpage

\section{Environmental Details}
\label{Environment Details}
This section provides detailed descriptions of the experimental environments used in our experiments, including SMAC \cite{samvelyan19smac} and SMACv2 \cite{ellis2023smacv2} in Appendix~\ref{app:smac} and Level-Based Foraging (LBF) \cite{papoudakis2020benchmarking} in Appendix~\ref{app:lbf}.

\subsection{The StarCraft Multi-Agent Challenge (SMAC \& SMACv2)}
\label{app:smac}

\vspace{-0.1in}
\begin{figure*}[h!]
    \begin{center}
    \includegraphics[width=0.8\linewidth]{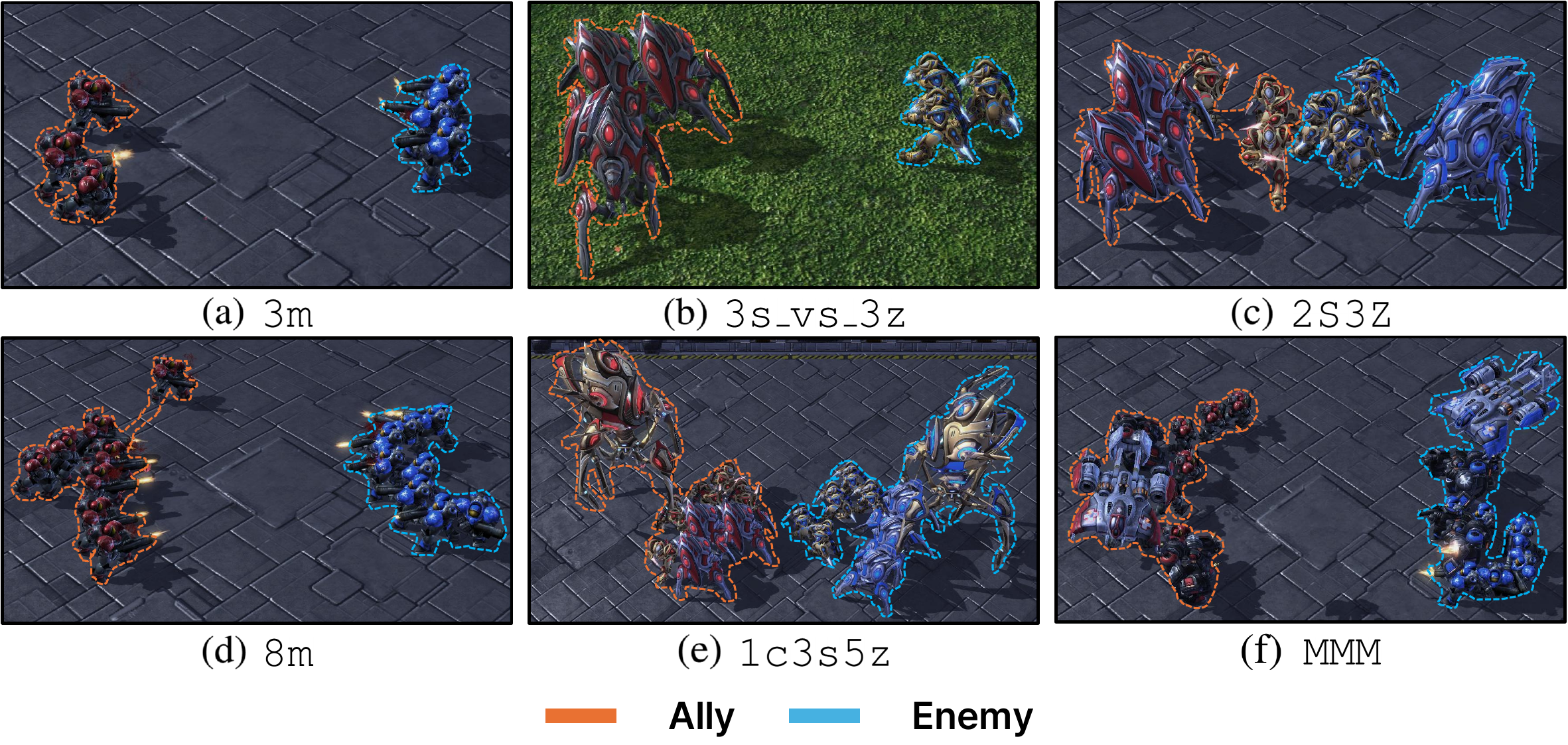}
    \vspace{-0.05in}
    \caption{Visualization of SMAC scenarios.}
    \label{fig:scenarios}
    \vspace{-0.15in}
    \end{center}
\end{figure*}

SMAC is a standard testbed for cooperative multi-agent reinforcement learning, designed around decentralized micromanagement in StarCraft II. In SMAC, each unit is controlled by an individual agent that makes decisions from its own partial, local observations without access to global state at execution time. The benchmark provides diverse combat settings, enabling systematic evaluation of coordination and robustness across scenarios.

We additionally consider SMACv2, which extends SMAC by introducing procedural variation into the scenarios. While SMAC uses fixed unit compositions and initial positions for each map, SMACv2 randomizes team compositions and spawn positions across episodes. This makes SMACv2 more stochastic and challenging, as agents must adapt their coordination strategies to varying battlefield configurations rather than relying on a fixed scenario structure.

\paragraph{Scenarios.} 
SMAC scenarios simulate combat engagements between an allied squad controlled by learning agents and an opposing squad driven by a scripted AI. Tasks vary in unit matchups, map layouts, and terrain structure, requiring micromanagement skills such as focus fire, kiting, and positional play around choke points and obstacles. An episode terminates when either side’s units are eliminated or when the time limit is reached, and the learning objective is to maximize the allied team’s expected return. Scenario specifications and unit compositions are summarized in \cref{tab:scenarios} and Fig.~\ref{fig:scenarios}.

\paragraph{State and Observation Spaces.} 
SMAC follows a centralized-training decentralized-execution (CTDE) protocol. At execution time, each agent receives a local observation under partial observability: it only reflects entities within a fixed sight range of 9 units and excludes global information. Concretely, the observation is composed of (i) movement-related features, (ii) per-enemy features such as relative position, distance, and combat-relevant attributes (e.g., health, shield, and unit type), (iii) per-ally features of nearby teammates in a similar form to per-enemy features, and (iv) the agent’s own features.
During centralized training, a global state is additionally available, which aggregates features of all allied and enemy units together with the last actions. The precise state and observation dimensions depend on the map and are summarized in \cref{tab:scenarios}.

\paragraph{Action Space.} 
Agents operate with a discrete action set that includes movement in the four directions (North, South, East, West), attacking an enemy, and unit-specific abilities such as healing (e.g., Medivacs). In addition, each agent can issue a \emph{stop} command or take a \emph{no-op} action, where \emph{no-op} is only available for dead units. The action-space size depends on the scenario and is given by \(n_{\text{actions}} = 6 + n_{\text{enemies}}\), where the constant \(6\) accounts for movement actions together with stop and no-op, and the \(n_{\text{enemies}}\) term corresponds to selecting which enemy to target when executing an attack. Since \(n_{\text{enemies}}\) varies across maps, the action-space size differs by scenario, as summarized in Table~\ref{tab:scenarios}.

\paragraph{Reward Function.} 
SMAC provides a shaped team reward, defined on each transition as $r_t = R(s_t,\boldsymbol{a}_t,s_{t+1})$.
It combines (i) damage dealt to enemies, (ii) a bonus for enemy kills, and (iii) a terminal win bonus:
\begin{align*}
r_t
&=
\sum_{e\in\mathcal{E}}\Bigl(\mathrm{Health}_{t}(e)-\mathrm{Health}_{t+1}(e)\Bigr)
\;+\;
\mathrm{Reward}_{\mathrm{kill}}\sum_{e\in\mathcal{E}}\mathbb{I}\!\left[\mathrm{Health}_{t+1}(e)=0\right]
\;+\;
\mathrm{Reward}_{\mathrm{win}}\cdot \mathbb{I}\!\left[\mathrm{win}_{t+1}=\mathrm{True}\right],
\end{align*}
where $\mathcal{E}$ is the set of enemy units and $\mathrm{Health}_{t}(e)$ denotes the health of enemy $e$ at time $t$. The indicator function $\mathbb{I}[\cdot]$ equals $1$ if the condition holds and $0$ otherwise.
We use the standard SMAC scaling with $\mathrm{Reward}_{\mathrm{kill}}=10$ and $\mathrm{Reward}_{\mathrm{win}}=200$.

% \newpage 

\begin{table}[h!]
    \centering
    \small
    \setlength{\tabcolsep}{5pt}
    \renewcommand{\arraystretch}{1.15}
    \caption{Configuration for each SMAC and SMACv2 scenario.}
    \begin{tabular}{l p{2.0cm} p{2.0cm} c c c}
    \toprule
    \textbf{Map} & \textbf{Ally Units} & \textbf{Enemy Units} & \textbf{State Dimension} & \textbf{Obs. Dimension} & \textbf{Num. of Actions} \\
    \midrule
    3m        & 3 Marines                         & 3 Marines                          & 48  & 30  & 9  \\
    \cmidrule(lr){1-6}
    3s\_vs\_3z & 3 Stalkers                        & 3 Zealots                          & 54  & 36  & 9  \\
    \cmidrule(lr){1-6}
    2s3z      & 2 Stalkers,\newline 3 Zealots      & 2 Stalkers,\newline 3 Zealots      & 120 & 80  & 11 \\
    \cmidrule(lr){1-6}
    8m        & 8 Marines                          & 8 Marines                          & 168 & 80  & 14 \\
    \cmidrule(lr){1-6}
    1c3s5z    & 1 Colossus,\newline 3 Stalkers,\newline 5 Zealots
              & 1 Colossus,\newline 3 Stalkers,\newline 5 Zealots  & 270 & 162 & 15 \\
    \cmidrule(lr){1-6}
    MMM       & 1 Medivac,\newline 2 Marauders,\newline 7 Marines
              & 1 Medivac,\newline 2 Marauders,\newline 7 Marines  & 290 & 160 & 16 \\
    \cmidrule(lr){1-6}
    terran\_5\_vs\_5   & 5 Terran units  & 5 Terran units  & 65 & 82 & 11 \\
    \cmidrule(lr){1-6}
    zerg\_5\_vs\_5     & 5 Zerg units    & 5 Zerg units    & 65 & 82 & 11 \\
    \cmidrule(lr){1-6}
    protoss\_5\_vs\_5  & 5 Protoss units & 5 Protoss units & 75 & 92 & 11 \\
    \bottomrule
    \end{tabular}
    \label{tab:scenarios}
\end{table}

\newpage 

\subsection{Level-Based Foraging (LBF)}
\label{app:lbf}

\begin{figure*}[h!]
    \begin{center}
    \includegraphics[width=0.3\linewidth]{figures/lbf.pdf}
    \vspace{-0.05in}
    \caption{Visualization of LBF.}
    \label{fig:lbf}
    \vspace{-0.15in}
    \end{center}
\end{figure*}

Level-Based Foraging (LBF) is a cooperative multi-agent grid-world benchmark in which agents collect food items distributed across the map. Each agent and food item has an associated level, and a food item can be collected only when the total level of the adjacent agents attempting to load it is at least as large as the food level. This structure requires agents to coordinate which food to approach and when to execute the loading action.

\paragraph{Scenarios.}
We use LBF scenarios of the form \texttt{Foraging-20x20-\{5p,10p,15p\}-6f-s3}, where agents operate on a \(20 \times 20\) grid with 6 food items and a sight range of 3. The number of agents varies across scenarios, allowing us to evaluate IBAL under different group sizes and coordination demands. An episode terminates when all food items are collected or when the time limit is reached.

\paragraph{State and Observation Spaces.}
Each agent receives a local observation restricted to entities within its sight range, including nearby agents, nearby food items, their positions, and their levels. Under CTDE, a global state containing information about all agents and food items is available during centralized training, while each agent acts only from its local observation during decentralized execution.

\paragraph{Action Space.}
Agents use a discrete action space consisting of movement actions, a \emph{load} action for collecting adjacent food, and a \emph{no-op} action. Successful collection requires sufficient joint level from the agents participating in the loading action, making coordination necessary when high-level food cannot be collected by a single agent.

\paragraph{Reward Function.}
LBF provides a cooperative team reward when food is successfully collected. The reward depends on the collected food item and is shared among agents in the cooperative setting. Thus, agents must learn to coordinate their movement and loading decisions to maximize the total collected reward within the episode.

%%%%%%%%%%%%%%%%%%%%%%%%%%%%%%%%%%%%%%%%%%%%%%%%%%%%%%%%%%%%%%%%%%%%%%%%%%%%%%%
\newpage
\section{Experimental Details}
\label{Experimental Details}
All experiments are run on a dedicated GPU machine with an NVIDIA GeForce RTX 3090 and an AMD EPYC 7513 (32 cores), using Ubuntu 20.04 and PyTorch.
Hyperparameter configurations are detailed in Appendix~\ref{appendix:hyperparameter},
and additional descriptions of adversarial attack methods and robust MARL baselines are provided in Appendix~\ref{appendix:other_method}.

\subsection{Hyperparameter Setup}
\label{appendix:hyperparameter}
All CTDE baselines use the default hyperparameter configuration from PyMARL \cite{samvelyan19smac}, and the shared CTDE settings are summarized in Table~\ref{tab:common_hyperparameters}. We only tune the additional hyperparameters introduced by the interaction-breaking attack and IBAL via parameter search; the selected values are reported in Table~\ref{tab:IBAL_hyperparameters}. 

Across all SMAC scenarios, we fix two parameters in the adaptive attack probability schedule: the growth rate $\alpha=1.1$, which controls how aggressively the attack probability is increased, and the success-rate threshold $\eta=0.8$, which determines when the probability is updated. These values provided the most stable training behavior in our experiments. Task-specific hyperparameters include the maximum group size $K$, which specifies the maximum number of agents in the group $G_1$ and thereby governs the diversity of the sampled $(G_1,G_2)$ partitions, and the masking budget $L$, defined as the per-agent size of the masked index set $\{D^i\}_{i\in G_1}$ in the MI-based observation attack, which controls the perturbation strength; we set $L \propto |G_2|$. We also tune the minimum attack probability, which lower-bounds the action attack intensity; we set the minimum attack probability to $1/K$. For evaluation, we fix the action attack probability to the minimum attack probability $1/K$ as well.  All task-specific hyperparameters are selected based on our ablation study, which is provided in Appendix~\ref{appendix:ablation}.

% 아래 table 맞춰서 수정. CTDE X, IBAL hyperparam 이라고 통일. IBAL에서 K, L 역할 설명.

\begin{table}[h]
\centering
\small
\setlength{\tabcolsep}{8pt}
\renewcommand{\arraystretch}{1.12}
\caption{Common hyperparameters of IBAL.}
\begin{tabular}{@{}p{0.325\textwidth} p{0.325\textwidth}@{}}
\toprule
\textbf{Hyperparameter} & \textbf{Value} \\
\midrule
Epsilon & 1.0 $\rightarrow$ 0.05 \\
Epsilon anneal time & 50000 timesteps \\
Gamma & 0.99 \\
Buffer size & 5000 episodes \\
Batch size & 32 episodes \\
Critic loss & MSE Loss \\
Critic learning rate & 0.0005 \\
Optimizer & RMSProp \\
Growth rate ($\alpha$) & 1.1 \\
Success rate threshold ($\eta$) & 0.8 \\
\bottomrule
\end{tabular}
\label{tab:common_hyperparameters}
\end{table}

\begin{table}[h!]
\centering
\small
\setlength{\tabcolsep}{6pt}
\renewcommand{\arraystretch}{1.12}

\caption{Task-specific hyperparameters of IBAL on SMAC tasks.}

\begin{tabular}{@{}p{0.28\textwidth} *{6}{c}@{}}
\toprule
\textbf{Scenario} & 3m & 3s\_vs\_3z & 2s3z & 8m & 1c3s5z & MMM \\
\midrule
Maximum group size ($K$) & 1 & 1 & 1 & 4 & 4 & 4 \\
Masking budget ($L/|G_2|$) & 5 & 5 & 8 & 5 & 8 & 8 \\
\bottomrule
\end{tabular}
\label{tab:IBAL_hyperparameters}
\end{table}

\begin{table}[h!]
\centering
\small
\setlength{\tabcolsep}{6pt}
\renewcommand{\arraystretch}{1.12}

\caption{Task-specific hyperparameters of IBAL on SMACv2 tasks.}

\begin{tabular}{@{}p{0.31\textwidth} *{3}{c}@{}}
\toprule
\textbf{Scenario} & Terran 5 vs 5 & Zerg 5 vs 5 & Protoss 5 vs 5 \\
\midrule
Maximum group size ($K$) & 1 & 1 & 1 \\
Masking budget ($L/|G_2|$) & 8 & 8 & 8 \\
\bottomrule
\end{tabular}
\label{tab:IBAL_hyperparameters_smacv2}
\end{table}

\begin{table}[h!]
\centering
\small
\setlength{\tabcolsep}{6pt}
\renewcommand{\arraystretch}{1.12}

\caption{Task-specific hyperparameters of IBAL on LBF tasks (Foraging-20x20-$n$p-6f-s3).}

\begin{tabular*}{0.67\textwidth}{@{\extracolsep{\fill}}lccc@{}}
\toprule
\textbf{Scenario} & 5p & 10p & 15p \\
\midrule
Maximum group size ($K$) & 2 & 5 & 7 \\
Masking budget ($L/|G_2|$) & 3 & 3 & 3 \\
\bottomrule
\end{tabular*}
\label{tab:IBAL_hyperparameters_lbf}
\end{table}

%%%%%%%%%%%%%%%%%%%%%%%%%%%%%%%%%%%%%%%%%%%%%%%%%%%%%%%%%%%%%%%%%%%%%%%%%%%%%%%
\newpage

\subsection{Detailed Description for Other Baselines}
\label{appendix:other_method}

We consider several adversarial perturbation schemes and the corresponding robust baselines trained against them.
Unless otherwise stated, each robust baseline is implemented by training QMIX under the associated attack within the PyMARL framework (\href{https://github.com/oxwhirl/pymarl}{https://github.com/oxwhirl/pymarl}).

\textbf{Natural (Nat.) \& Vanilla QMIX}: 
The no-attack setting where agents act on clean observations and execute their sampled actions without perturbations; Vanilla QMIX denotes the QMIX policy trained under this natural setting.

\textbf{Rand. \& Rand-Obs / Rand-Act}: 
In the random observation attack, at each timestep we select an agent $i$ uniformly at random and inject Gaussian noise to its observation,
\begin{equation*}
\tilde{o}_t^{i}=o_t^{i}+\epsilon_t,\quad \epsilon_t\sim\mathcal{N}(\mathbf{0},\,\sigma^2\mathbf{I}),\ \sigma=0.1.
\end{equation*}
while in the random action attack, with probability $30\%$ we sample one agent $i$ uniformly and force a random legal action,
\begin{equation*}
\tilde{a}_t^{i}\sim \mathrm{Unif}(\mathcal{A}_t^{i}).
\end{equation*}
Rand.\ denotes the combined setting where both perturbations are applied simultaneously, and Rand-Obs and Rand-Act denote QMIX policies trained under the random observation and random action attacks, respectively.

\textbf{FGSM}~\cite{goodfellow2014explaining}:
FGSM is a gradient-based observation attack. At each timestep, we sample an agent $i$ uniformly and perturb its observation in the direction that reduces its action-value $Q^i$,
\begin{equation*}
\tilde{o}_t^{i}=o_t^{i}-\epsilon \cdot \mathrm{sign}\Bigl(\nabla_{o_t^{i}} Q^{i}(\tau_t^{i},a_t^{i})\Bigr),
\quad \epsilon=0.1.
\end{equation*}
FGSM (robust baseline) denotes QMIX trained under this FGSM observation attack.

\textbf{ATLA}~\cite{zhang2021robust}:
ATLA trains an RL-based observation attacker that generates bounded perturbations to minimize the team return.
Following ATLA, we learn a PPO-based attacker that outputs an additive noise $\epsilon_t^{i}$ and applies it to a randomly chosen agent $i$:
\begin{equation*}
\tilde{o}_t^{i}=o_t^{i}+\epsilon_t^{i}.
\end{equation*}
The attacker is trained with PPO using the negative reward ($-r_t$). ATLA (robust baseline) denotes QMIX trained against the learned ATLA observation attacker.

\textbf{ERNIE}~\cite{bukharin2023robust}:
ERNIE improves robustness via adversarial regularization that encourages Lipschitz-like behavior of each agent $k$'s policy under bounded observation perturbations:
\begin{equation*}
R_{\boldsymbol{\pi}}(o_k; \theta_k) = \max_{\|\delta\| \leq \epsilon} D\!\left(\pi_{\theta_k}(o_k + \delta), \pi_{\theta_k}(o_k)\right),
\end{equation*}
where $D$ is a divergence measure (e.g., KL-divergence). ERNIE casts robust training in a Stackelberg-game form and provides scalable variants such as mean-field MARL.
We evaluate using its official implementation: \href{https://github.com/abukharin3/ERNIE}{https://github.com/abukharin3/ERNIE}.

\textbf{ROMANCE}~\cite{yuan2023robust}:
ROMANCE improves robustness by generating a diverse set of auxiliary adversarial attackers through an evolutionary mechanism.
It optimizes a combined objective that trades off attack strength and diversity:
\begin{equation*}
L_{\text{adv}}(\phi) = \frac{1}{n_p} \sum_{j=1}^{n_p} L_{\text{opt}}(\phi_j) - \alpha L_{\text{div}}(\phi),
\end{equation*}
where $L_{\text{opt}}$ reduces the ego-system return and $L_{\text{div}}$ promotes diversity via Jensen--Shannon divergence.
We evaluate using its official implementation:  \href{https://github.com/zzq-bot/ROMANCE}{https://github.com/zzq-bot/ROMANCE}.

\textbf{WALL}~\cite{lee2025wolfpack}:
Wolfpack Attack is a coordinated action-perturbation framework inspired by wolf hunting strategies: it first targets an initial agent, then selects follow-up agents (e.g., those exhibiting the largest policy change) to amplify disruption, and uses a model-based procedure to identify timesteps where the attack is most critical. WALL denotes a robust QMIX policy trained against Wolfpack Attack.
We evaluate using its official implementation:  \href{https://github.com/sunwoolee0504/WALL}{https://github.com/sunwoolee0504/WALL}.
%%%%%%%%%%%%%%%%%%%%%%%%%%%%%%%%%%%%%%%%%%%%%%%%%%%%%%%%%%%%%%%%%%%%%%%%%%%%%%%
\newpage
\section{Additional Experiments}
\label{Additional_Experiments}
This section presents full results of performance comparisons. Appendix~\ref{appendix:performance_attack} reports full performance tables with standard deviations under adversarial attacks and provides learning curves evaluated against an unseen interaction-breaking attack over training timesteps. Appendix~\ref{appendix:performance_nonparam} covers full performance tables with standard deviations under non-parametric perturbations. Appendix~\ref{appendix:additional_benchmark_results} further provides additional performance comparison results on LBF and SMACv2. Appendix~\ref{appendix:other_ctde_algorithm_results} reports additional comparisons with other CTDE algorithms, particularly MAPPO.

\subsection{Full Performance Comparison Result under Adversarial Attack}
\label{appendix:performance_attack}
Table~\ref{tab:performance_appendix_1}, ~\ref{tab:performance_appendix_2} reports full results with standard deviations under adversarial attacks, where IBAL achieves consistently higher win rates and generalizes better across observation and action attacks. While baselines such as FGSM and WALL often perform well on their own attacks, their performance drops sharply under the proposed interaction-breaking attack, whereas IBAL remains robust. Fig.~\ref{fig:performance_appendix} reports training learning curves, where each method is trained under its own standard setup and periodically evaluated under an unseen interaction-breaking attack. Most baselines show little improvement under this evaluation despite learning in their respective training regimes, indicating that robustness does not carry over when cooperation collapses, whereas IBAL steadily learns to counteract the disruption.

\begingroup
\setlength{\aboverulesep}{0.15ex}
\setlength{\belowrulesep}{0.15ex}
\setlength{\cmidrulekern}{0pt}

\begin{table*}[h]
\centering
\footnotesize
\setlength{\tabcolsep}{4.2pt}
\renewcommand{\arraystretch}{1.18}
\caption{Success rates (\%) under various adversarial attacks on SMAC scenarios: \texttt{3m}, \texttt{3s\_vs\_3z}, and \texttt{2s3z}. \textbf{Bold} indicates the best and \underline{underlining} indicates the second-best.}
\resizebox{\textwidth}{!}{%
\begin{tabular}{@{}l l *{6}{c} @{\hspace{6pt}} c@{}}
\toprule
\multirow{2}{*}{Scenario}
& \multirow{2}{*}{Model}
& \multicolumn{6}{c}{Attack}
& \multirow{2}{*}{Mean} \\
\cmidrule(lr){3-8}
& & Nat. & Rand. & FGSM & EGA & Wolf. & Ours & \\
\midrule

\multirow{9}{*}{\texttt{3m}}
& Vanilla QMIX & 96.0 $\pm$ 3.12 & 45.0 $\pm$ 7.00 & 76.3 $\pm$ 5.86 & 62.5 $\pm$ 14.3 & 38.8 $\pm$ 8.46 & 0.00 $\pm$ 0.00 & 53.1 $\pm$ 3.20 \\
& Rand-Obs & 96.9 $\pm$ 2.64 & 48.7 $\pm$ 4.93 & 86.0 $\pm$ 0.94 & 54.2 $\pm$ 19.1 & 44.8 $\pm$ 4.77 & 0.00 $\pm$ 0.00 & 55.4 $\pm$ 3.41 \\
& Rand-Act & 96.8 $\pm$ 2.95 & 42.3 $\pm$ 0.58 & 79.7 $\pm$ 8.08 & 83.3 $\pm$ 3.61 & 43.8 $\pm$ 3.12 & 0.00 $\pm$ 0.00 & 57.6 $\pm$ 1.64 \\
& FGSM & 94.7 $\pm$ 0.80 & 56.7 $\pm$ 2.52 & \best{96.3 $\pm$ 2.08} & 45.8 $\pm$ 11.0 & 46.9 $\pm$ 11.3 & 1.00 $\pm$ 1.80 & 56.9 $\pm$ 2.70 \\
& ATLA & 97.2 $\pm$ 1.54 & \second{58.0 $\pm$ 1.00} & 86.3 $\pm$ 4.04 & 45.8 $\pm$ 6.51 & 45.6 $\pm$ 1.66 & 0.00 $\pm$ 0.00 & 55.5 $\pm$ 1.34 \\
& ERNIE & 96.9 $\pm$ 1.27 & 46.3 $\pm$ 9.24 & 85.0 $\pm$ 11.1 & 41.7 $\pm$ 9.55 & 36.5 $\pm$ 9.02 & \second{5.20 $\pm$ 9.02} & 51.9 $\pm$ 3.59 \\
& ROMANCE & 97.1 $\pm$ 2.89 & 49.7 $\pm$ 2.08 & 72.0 $\pm$ 4.58 & \second{88.5 $\pm$ 3.61} & 41.0 $\pm$ 13.1 & 1.00 $\pm$ 1.80 & 58.2 $\pm$ 2.49 \\
& WALL & \second{97.2 $\pm$ 3.06} & 48.3 $\pm$ 6.81 & 70.0 $\pm$ 13.0 & 86.7 $\pm$ 10.2 & \best{83.3 $\pm$ 7.87} & 0.00 $\pm$ 0.00 & \second{64.2 $\pm$ 3.29} \\
& IBAL (ours) & \best{98.6 $\pm$ 0.97} & \best{62.3 $\pm$ 4.51} & \second{87.1 $\pm$ 4.89} & \best{89.0 $\pm$ 1.44} & \second{70.1 $\pm$ 7.88} & \best{78.3 $\pm$ 1.84} & \best{80.9 $\pm$ 1.77} \\
% \addlinespace[0.35em]
\midrule

\multirow{9}{*}{\texttt{3s\_vs\_3z}}
& Vanilla QMIX & \second{99.2 $\pm$ 0.55} & 61.2 $\pm$ 3.40 & 67.8 $\pm$ 18.1 & 71.9 $\pm$ 7.21 & 66.7 $\pm$ 7.86 & 6.20 $\pm$ 5.41 & 62.2 $\pm$ 3.91 \\
& Rand-Obs & 98.4 $\pm$ 0.78 & 86.0 $\pm$ 5.29 & 78.0 $\pm$ 3.46 & 87.1 $\pm$ 5.74 & 89.0 $\pm$ 2.53 & 11.4 $\pm$ 6.53 & 75.0 $\pm$ 1.64 \\
& Rand-Act & 97.5 $\pm$ 1.72 & 76.0 $\pm$ 3.46 & 61.7 $\pm$ 11.6 & 75.4 $\pm$ 5.77 & 91.7 $\pm$ 1.80 & 10.4 $\pm$ 3.61 & 68.0 $\pm$ 2.69 \\
& FGSM & 98.5 $\pm$ 1.50 & 87.3 $\pm$ 0.58 & \best{98.8 $\pm$ 0.72} & 68.8 $\pm$ 16.2 & 90.6 $\pm$ 5.41 & 24.0 $\pm$ 3.61 & 78.7 $\pm$ 2.91 \\
& ATLA & 98.3 $\pm$ 0.64 & 81.7 $\pm$ 11.7 & 92.0 $\pm$ 2.00 & 66.9 $\pm$ 27.7 & 96.1 $\pm$ 2.68 & 14.6 $\pm$ 3.61 & 74.9 $\pm$ 4.83 \\
& ERNIE & 98.6 $\pm$ 0.58 & 85.7 $\pm$ 5.51 & 75.0 $\pm$ 14.8 & 85.4 $\pm$ 12.6 & 93.8 $\pm$ 3.13 & 7.30 $\pm$ 3.61 & 74.6 $\pm$ 3.66 \\
& ROMANCE & 98.4 $\pm$ 1.69 & 88.3 $\pm$ 1.73 & 84.7 $\pm$ 4.62 & 92.0 $\pm$ 5.57 & 97.2 $\pm$ 1.37 & \second{39.2 $\pm$ 4.02} & 83.3 $\pm$ 1.69 \\
& WALL & 98.8 $\pm$ 1.00 & \best{90.0 $\pm$ 3.61} & 90.7 $\pm$ 1.53 & \second{96.2 $\pm$ 2.20} & \second{98.8 $\pm$ 0.17} & 26.2 $\pm$ 18.1 & \second{83.5 $\pm$ 3.08} \\
& IBAL (ours) & \best{99.5 $\pm$ 0.53} & \second{89.0 $\pm$ 2.52} & \second{92.3 $\pm$ 1.53} & \best{97.8 $\pm$ 1.08} & \best{98.9 $\pm$ 1.74} & \best{91.7 $\pm$ 3.61} & \best{94.8 $\pm$ 0.85} \\
% \addlinespace[0.35em]
\midrule

\multirow{9}{*}{\texttt{2s3z}}
& Vanilla QMIX & 97.6 $\pm$ 0.65 & 75.0 $\pm$ 7.21 & 58.3 $\pm$ 7.37 & 53.8 $\pm$ 4.51 & 39.2 $\pm$ 6.88 & 3.10 $\pm$ 0.00 & 54.7 $\pm$ 2.41 \\
& Rand-Obs & 98.2 $\pm$ 1.48 & 80.0 $\pm$ 1.00 & 89.3 $\pm$ 2.52 & 53.1 $\pm$ 10.8 & 52.1 $\pm$ 10.0 & 5.20 $\pm$ 6.50 & 63.0 $\pm$ 3.06 \\
& Rand-Act & 96.1 $\pm$ 2.65 & 79.0 $\pm$ 6.93 & 77.3 $\pm$ 3.21 & 68.7 $\pm$ 5.47 & 62.6 $\pm$ 14.2 & 4.10 $\pm$ 1.79 & 64.7 $\pm$ 2.97 \\
& FGSM & 96.5 $\pm$ 2.55 & 85.7 $\pm$ 3.51 & \best{95.7 $\pm$ 2.31} & 62.5 $\pm$ 14.3 & 61.4 $\pm$ 13.0 & 6.20 $\pm$ 5.41 & 68.0 $\pm$ 3.77 \\
& ATLA & 96.6 $\pm$ 5.21 & 80.3 $\pm$ 2.08 & 89.7 $\pm$ 4.16 & 54.2 $\pm$ 18.0 & 53.1 $\pm$ 5.42 & 9.40 $\pm$ 8.27 & 63.9 $\pm$ 4.65 \\
& ERNIE & 97.6 $\pm$ 3.35 & 82.3 $\pm$ 6.66 & 74.7 $\pm$ 9.29 & 63.5 $\pm$ 4.77 & 49.2 $\pm$ 6.88 & 4.20 $\pm$ 1.80 & 61.9 $\pm$ 2.94 \\
& ROMANCE & 97.9 $\pm$ 1.69 & 82.0 $\pm$ 4.36 & 76.7 $\pm$ 1.53 & 83.3 $\pm$ 10.0 & 65.0 $\pm$ 3.48 & 13.3 $\pm$ 7.22 & 69.7 $\pm$ 2.52 \\
& WALL & \second{98.9 $\pm$ 0.35} & \second{91.7 $\pm$ 0.58} & 86.7 $\pm$ 8.51 & \second{87.5 $\pm$ 0.98} & \best{92.7 $\pm$ 4.78} & \second{13.5 $\pm$ 1.80} & \second{78.9 $\pm$ 1.93} \\
& IBAL (ours) & \best{99.5 $\pm$ 1.75} & \best{93.3 $\pm$ 0.58} & \second{89.8 $\pm$ 1.00} & \best{89.8 $\pm$ 3.44} & \second{83.4 $\pm$ 4.91} & \best{78.7 $\pm$ 3.51} & \best{89.1 $\pm$ 1.21} \\

\bottomrule
\end{tabular}%
}
\label{tab:performance_appendix_1}
\end{table*}
\endgroup

%%%%%%%%%%%%%%%%%%%%%%%%%%%%%%%%%%%%%%%%%%%%%%%%%%%%%%%%%%%%%%%%%%%%%%%%%%%%%%%
\newpage

\begingroup
\setlength{\aboverulesep}{0.15ex}
\setlength{\belowrulesep}{0.15ex}
\setlength{\cmidrulekern}{0pt}

\begin{table*}[h]
\centering
\footnotesize
\setlength{\tabcolsep}{4.2pt}
\renewcommand{\arraystretch}{1.18}
\caption{Success rates (\%) under various adversarial attacks on SMAC scenarios: \texttt{8m}, \texttt{1c3s5z}, and \texttt{MMM}.}
\resizebox{\textwidth}{!}{%
\begin{tabular}{@{}l l *{6}{c} @{\hspace{6pt}} c@{}}
\toprule
\multirow{2}{*}{Scenario}
& \multirow{2}{*}{Model}
& \multicolumn{6}{c}{Attack}
& \multirow{2}{*}{Mean} \\
\cmidrule(lr){3-8}
& & Nat. & Rand. & FGSM & EGA & Wolf. & Ours & \\
\midrule

\multirow{9}{*}{\texttt{8m}}
& Vanilla QMIX & 92.8 $\pm$ 1.88 & 83.0 $\pm$ 2.45 & 75.7 $\pm$ 3.47 & 70.8 $\pm$ 16.0 & 14.6 $\pm$ 4.48 & 7.30 $\pm$ 3.61 & 57.3 $\pm$ 3.08 \\
& Rand-Obs     & 98.0 $\pm$ 1.06 & 85.6 $\pm$ 5.41 & 88.8 $\pm$ 3.12 & 74.0 $\pm$ 13.0 & 29.2 $\pm$ 3.61 & 6.20 $\pm$ 6.25 & 63.6 $\pm$ 3.39 \\
& Rand-Act     & 97.6 $\pm$ 1.19 & 84.0 $\pm$ 1.57 & 79.1 $\pm$ 3.83 & 85.2 $\pm$ 6.41  & 28.1 $\pm$ 10.8 & 6.20 $\pm$ 8.25 & 63.4 $\pm$ 2.90 \\
& FGSM         & 98.0 $\pm$ 1.06 & 90.7 $\pm$ 3.13 & \best{96.5 $\pm$ 2.64} & 86.4 $\pm$ 5.09 & 41.7 $\pm$ 4.78 & 10.4 $\pm$ 1.80 & 70.3 $\pm$ 1.68 \\
& ATLA         & 92.7 $\pm$ 3.61 & 80.0 $\pm$ 13.0 & 90.3 $\pm$ 2.31 & 74.0 $\pm$ 3.61 & 39.6 $\pm$ 4.77 & \second{15.6 $\pm$ 12.5} & 65.4 $\pm$ 3.61 \\
& ERNIE        & 96.9 $\pm$ 3.12 & 84.4 $\pm$ 3.13 & 90.7 $\pm$ 5.77 & 83.4 $\pm$ 6.66 & 26.7 $\pm$ 4.43 & 6.20 $\pm$ 5.41 & 64.2 $\pm$ 2.78 \\
& ROMANCE      & \second{98.1 $\pm$ 1.00} & 89.8 $\pm$ 3.45 & 90.9 $\pm$ 4.48 & 90.0 $\pm$ 3.81 & 33.3 $\pm$ 5.73 & 11.0 $\pm$ 11.0 & 68.8 $\pm$ 2.79 \\
& WALL         & 96.5 $\pm$ 5.12 & \second{92.2 $\pm$ 1.86} & 92.7 $\pm$ 4.77 & \second{93.7 $\pm$ 3.21} & \best{85.5 $\pm$ 1.95} & 13.5 $\pm$ 7.86 & \second{79.0 $\pm$ 2.15} \\
& IBAL (ours)  & \best{99.3 $\pm$ 1.15} & \best{94.5 $\pm$ 3.38} & \second{95.6 $\pm$ 4.68} & \best{97.6 $\pm$ 1.22} & \second{79.0 $\pm$ 4.00} & \best{88.4 $\pm$ 3.32} & \best{92.4 $\pm$ 1.33} \\

\midrule

% =========================
% 1c3s5z
% =========================
\multirow{9}{*}{\texttt{1c3s5z}}
& Vanilla QMIX & 98.8 $\pm$ 0.75 & 84.7 $\pm$ 2.08 & 81.7 $\pm$ 2.89 & 83.8 $\pm$ 0.97 & 52.7 $\pm$ 7.22 & 18.8 $\pm$ 8.27 & 70.7 $\pm$ 1.95 \\
& Rand-Obs     & 96.8 $\pm$ 3.03 & 89.0 $\pm$ 1.00 & 86.3 $\pm$ 0.58 & 86.8 $\pm$ 1.16 & 56.1 $\pm$ 9.33 & 7.30 $\pm$ 6.50 & 70.9 $\pm$ 1.88 \\
& Rand-Act     & 98.3 $\pm$ 1.23 & 90.0 $\pm$ 5.20 & 83.3 $\pm$ 0.58 & 86.5 $\pm$ 3.61 & 64.8 $\pm$ 3.63 & 6.30 $\pm$ 5.42 & 71.5 $\pm$ 1.34 \\
& FGSM         & 98.6 $\pm$ 1.51 & 92.7 $\pm$ 2.52 & 97.0 $\pm$ 1.00 & 88.5 $\pm$ 2.91 & 67.6 $\pm$ 7.54 & 6.30 $\pm$ 3.12 & 75.8 $\pm$ 1.45 \\
& ATLA         & 96.9 $\pm$ 0.01 & 94.0 $\pm$ 0.00 & 94.7 $\pm$ 0.58 & 84.4 $\pm$ 3.13 & 73.1 $\pm$ 4.10 & \second{24.0 $\pm$ 10.0} & 77.1 $\pm$ 0.86 \\
& ERNIE        & 94.8 $\pm$ 6.37 & 93.3 $\pm$ 4.04 & 93.0 $\pm$ 0.00 & 88.5 $\pm$ 1.81 & 80.6 $\pm$ 11.9 & 17.7 $\pm$ 3.62 & 77.2 $\pm$ 2.46 \\
& ROMANCE      & \best{99.1 $\pm$ 0.65} & 94.3 $\pm$ 0.58 & 91.0 $\pm$ 2.65 & 89.6 $\pm$ 4.77 & 89.2 $\pm$ 2.01 & 12.5 $\pm$ 5.41 & 79.2 $\pm$ 1.12 \\
& WALL         & 98.9 $\pm$ 0.95 & \second{96.3 $\pm$ 3.06} & \second{96.7 $\pm$ 1.53} & \second{93.8 $\pm$ 3.13} & \second{95.8 $\pm$ 1.81} & 18.8 $\pm$ 6.25 & \second{83.4 $\pm$ 1.51} \\
& IBAL (ours)  & \second{99.0 $\pm$ 1.80} & \best{97.3 $\pm$ 2.52} & \best{98.3 $\pm$ 2.89} & \best{93.9 $\pm$ 3.03} & \best{98.9 $\pm$ 1.95} & \best{91.3 $\pm$ 3.80} & \best{96.4 $\pm$ 1.12} \\

\midrule

% =========================
% MMM
% =========================
\multirow{9}{*}{\texttt{MMM}}
& Vanilla QMIX & \second{98.6 $\pm$ 1.51} & 81.3 $\pm$ 9.07 & 83.3 $\pm$ 5.51 & 65.6 $\pm$ 16.5 & 29.5 $\pm$ 21.7 & 0.00 $\pm$ 0.00 & 59.0 $\pm$ 5.59 \\
& Rand-Obs     & 96.5 $\pm$ 2.64 & 93.0 $\pm$ 2.00 & 93.7 $\pm$ 0.46 & 80.2 $\pm$ 4.78 & 59.9 $\pm$ 8.33 & 0.00 $\pm$ 0.00 & 70.6 $\pm$ 1.71 \\
& Rand-Act     & 97.5 $\pm$ 1.11 & 93.7 $\pm$ 1.53 & 90.7 $\pm$ 1.53 & 85.4 $\pm$ 3.61 & 66.6 $\pm$ 12.6 & 0.00 $\pm$ 0.00 & 72.3 $\pm$ 2.14 \\
& FGSM         & 98.0 $\pm$ 1.06 & 92.3 $\pm$ 1.53 & \best{98.0 $\pm$ 2.00} & 73.8 $\pm$ 12.8 & 57.9 $\pm$ 9.48 & 7.30 $\pm$ 10.0 & 71.2 $\pm$ 2.94 \\
& ATLA         & 97.5 $\pm$ 3.29 & 94.3 $\pm$ 1.53 & 95.7 $\pm$ 3.51 & 85.4 $\pm$ 6.50 & 72.3 $\pm$ 4.19 & 3.10 $\pm$ 5.41 & 74.7 $\pm$ 1.74 \\
& ERNIE        & 98.0 $\pm$ 1.06 & \second{94.7 $\pm$ 1.53} & 94.3 $\pm$ 2.08 & 69.8 $\pm$ 14.8 & 41.2 $\pm$ 12.9 & 2.10 $\pm$ 3.61 & 66.3 $\pm$ 3.93 \\
& ROMANCE      & 94.8 $\pm$ 3.61 & 92.7 $\pm$ 6.81 & 90.3 $\pm$ 10.0 & 88.5 $\pm$ 11.8 & 57.2 $\pm$ 15.5 & 4.20 $\pm$ 1.80 & 62.9 $\pm$ 3.63 \\
& WALL         & 95.9 $\pm$ 4.57 & 93.0 $\pm$ 10.4 & 96.0 $\pm$ 5.29 & \second{92.4 $\pm$ 9.69} & \best{95.6 $\pm$ 2.96} & \second{13.5 $\pm$ 18.3} & \second{81.1 $\pm$ 4.62} \\
& IBAL (ours)  & \best{99.7 $\pm$ 0.58} & \best{98.3 $\pm$ 0.58} & \second{97.0 $\pm$ 1.00} & \best{92.7 $\pm$ 5.01} & \second{84.4 $\pm$ 3.97} & \best{88.7 $\pm$ 7.23} & \best{93.5 $\pm$ 1.62} \\

\bottomrule
\end{tabular}%
}
\label{tab:performance_appendix_2}
\end{table*}
\endgroup

\begin{figure*}[h!]
\vspace{-0.1in}
    \centering
    % Row 1
    \begin{subfigure}[t]{0.28\linewidth}
        \centering
        \includegraphics[width=\linewidth]{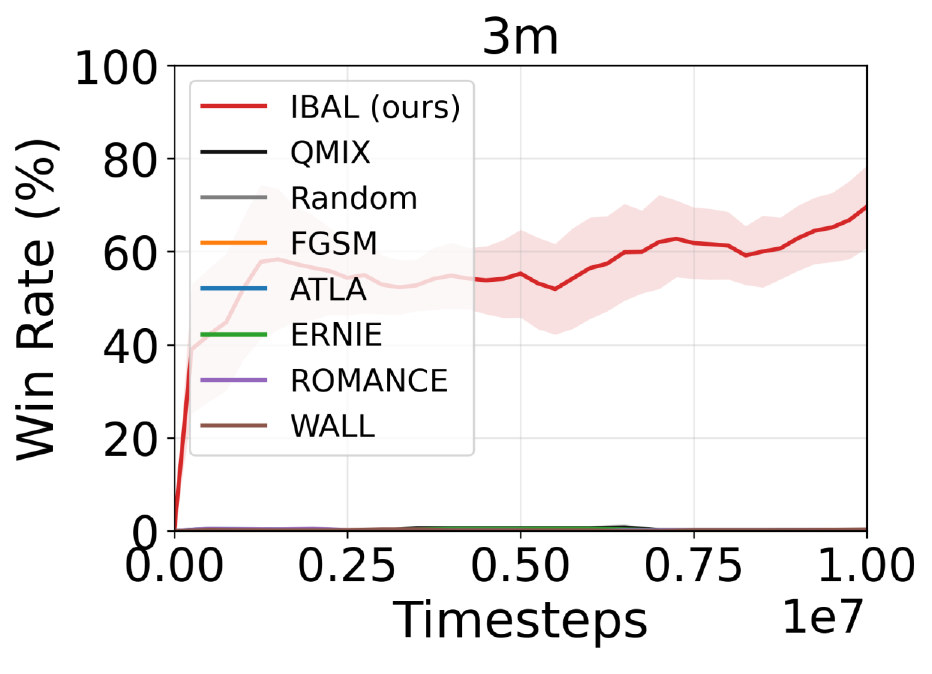}
        \vspace{-0.3in}
        \caption{\texttt{3m}}
        \label{fig:perf_unseen_3m}
    \end{subfigure}
    \begin{subfigure}[t]{0.28\linewidth}
        \centering
        \includegraphics[width=\linewidth]{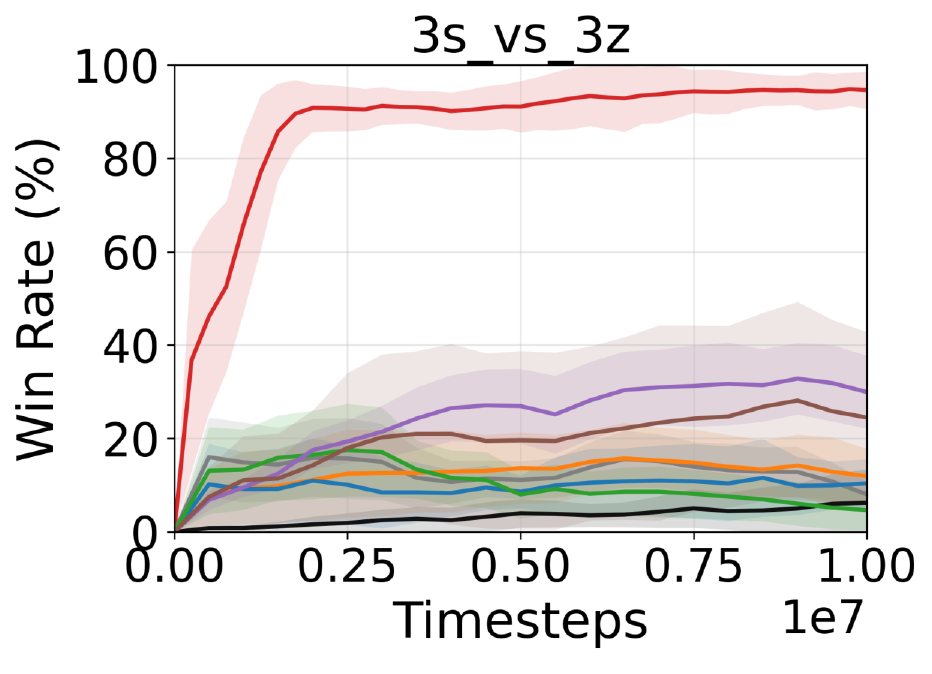}
        \vspace{-0.3in}
        \caption{\texttt{3s\_vs\_3z}}
        \label{fig:perf_unseen_3s_vs_3z}
    \end{subfigure}
    \begin{subfigure}[t]{0.28\linewidth}
        \centering
        \includegraphics[width=\linewidth]{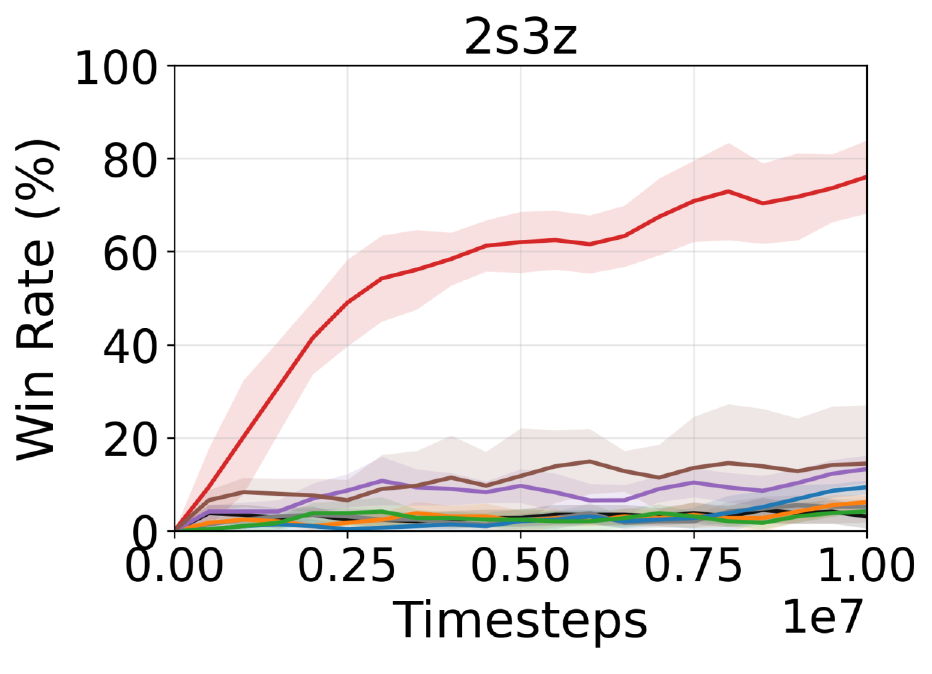}
        \vspace{-0.3in}
        \caption{\texttt{2s3z}}
        \label{fig:perf_unseen_2s3z}
    \end{subfigure}

    % \vspace{0.08in}

    % Row 2
    \begin{subfigure}[t]{0.28\linewidth}
        \centering
        \includegraphics[width=\linewidth]{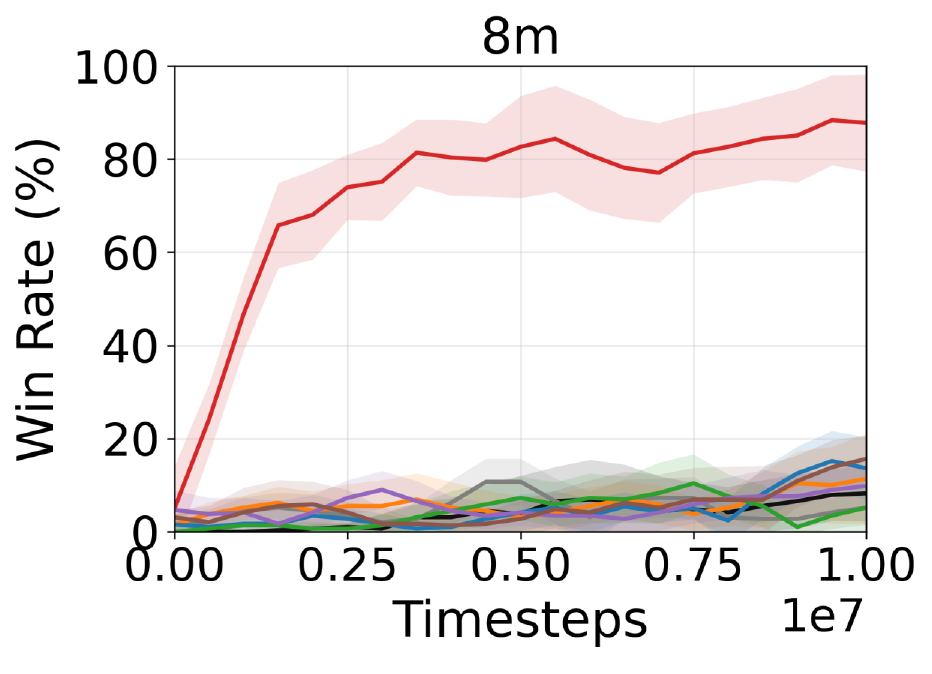}
        \vspace{-0.3in}
        \caption{\texttt{8m}}
        \label{fig:perf_unseen_8m}
    \end{subfigure}
    \begin{subfigure}[t]{0.28\linewidth}
        \centering
        \includegraphics[width=\linewidth]{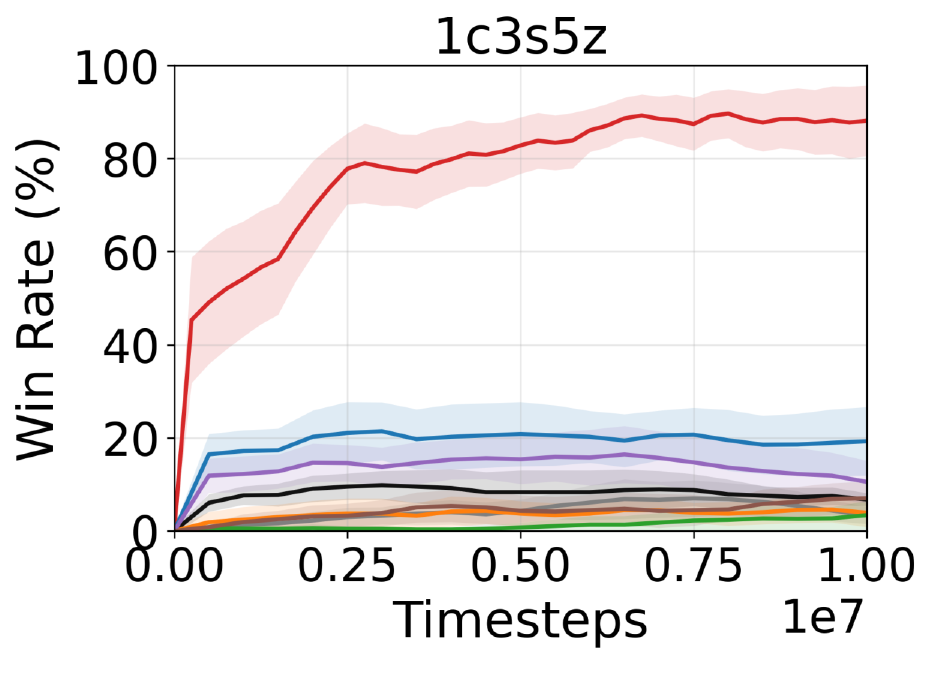}
        \vspace{-0.3in}
        \caption{\texttt{1c3s5z}}
        \label{fig:perf_unseen_1c3s5z}
    \end{subfigure}
    \begin{subfigure}[t]{0.28\linewidth}
        \centering
        \includegraphics[width=\linewidth]{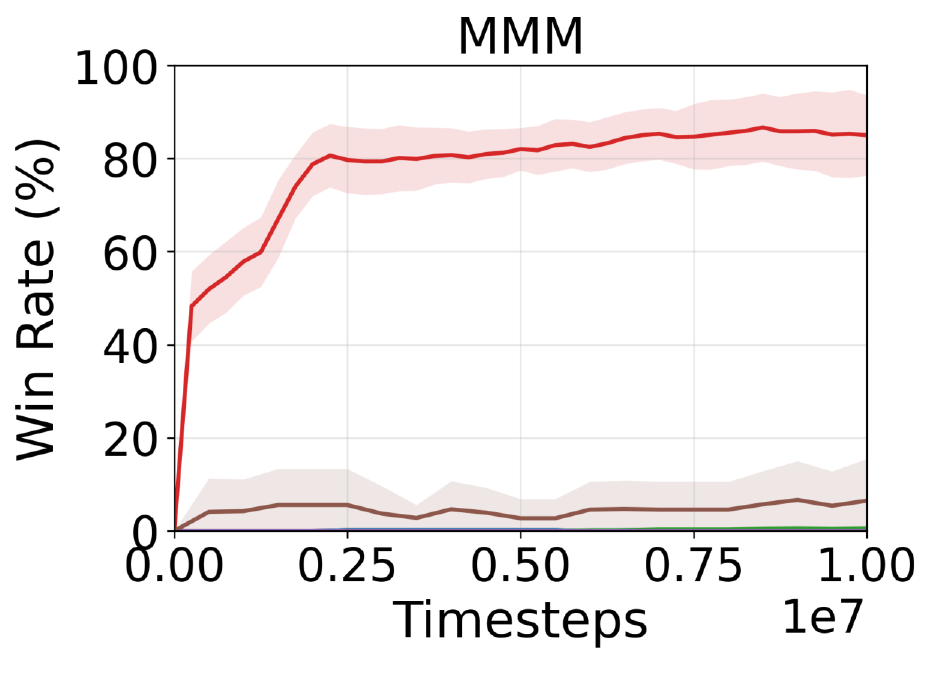}
        \vspace{-0.3in}
        \caption{\texttt{MMM}}
        \label{fig:perf_unseen_MMM}
    \end{subfigure}

    \vspace{-0.05in}
    \caption{Performance comparison under \textit{unseen} interaction-breaking attack.}
    \label{fig:performance_appendix}
    \vspace{-0.2in}
\end{figure*}

%%%%%%%%%%%%%%%%%%%%%%%%%%%%%%%%%%%%%%%%%%%%%%%%%%%%%%%%%%%%%%%%%%%%%%%%%%%%%%%
\newpage

\subsection{Full Performance Comparison Result under Non-Parametric Perturbations}
\label{appendix:performance_nonparam}
Table~\ref{tab:performance_nonparam_1},~\ref{tab:performance_nonparam_2} report the full results with standard deviations under non-parametric perturbations, including Dis-$\ell$ with $\ell\in\{1,2\}$ and HP-$h$ with $h\in\{10,15,20\}$. For the \texttt{3m}, \texttt{3s\_vs\_3z}, \texttt{2s3z}, and \texttt{8m} scenarios, we report results under Dis-1 only, since no method can solve the Dis-2 setting on these maps. For the remaining scenarios, we report both Dis-1 and Dis-2. Across these settings, IBAL consistently outperforms all baselines. In Dis-$\ell$, most methods degrade markedly when ally units are disabled, whereas IBAL remains effective, suggesting more resilient coordination under partial interaction failures. A similar trend holds for HP-$h$, where IBAL achieves substantially higher win rates across all thresholds. Overall, these results indicate that training under diverse interaction-breaking conditions improves robustness to non-parametric perturbations.

\begingroup
\setlength{\aboverulesep}{0.15ex}
\setlength{\belowrulesep}{0.15ex}
\setlength{\cmidrulekern}{0pt}

\begin{table*}[h!]
\centering
\footnotesize
\setlength{\tabcolsep}{4.2pt}
\renewcommand{\arraystretch}{1.18}
\caption{Success rates (\%) under non-parametric perturbations on SMAC scenarios: \texttt{3m}, \texttt{3s\_vs\_3z}, \texttt{2s3z}, and  \texttt{8m}.}
\resizebox{0.80\textwidth}{!}{%
\begin{tabular}{@{}l l *{4}{c} @{\hspace{6pt}} c@{}}
\toprule
\multirow{2}{*}{Scenario}
& \multirow{2}{*}{Model}
& \multicolumn{4}{c}{Attack}
& \multirow{2}{*}{Mean} \\
\cmidrule(lr){3-6}
& & Dis-1 & HP-10 & HP-15 & HP-20 & \\
\midrule

% =========================
% 3m
% =========================
\multirow{9}{*}{\texttt{3m}}
& Vanilla QMIX & 0.00 $\pm$ 0.00 & 56.3 $\pm$ 6.43 & 62.0 $\pm$ 2.00 & 1.70 $\pm$ 2.08 & 30.5 $\pm$ 2.13 \\
& Rand-Obs     & 0.00 $\pm$ 0.00 & 62.0 $\pm$ 10.4 & 70.3 $\pm$ 9.75 & \best{6.00 $\pm$ 9.54} & 34.1 $\pm$ 7.42 \\
& Rand-Act     & 0.00 $\pm$ 0.00 & 41.7 $\pm$ 11.0 & 66.7 $\pm$ 9.02 & 0.00 $\pm$ 0.00 & 27.1 $\pm$ 7.74 \\
& FGSM         & 1.00 $\pm$ 1.80 & 70.7 $\pm$ 6.81 & \second{81.2 $\pm$ 6.25} & 4.30 $\pm$ 2.52 & \second{39.3 $\pm$ 4.34} \\
& ATLA         & 0.00 $\pm$ 0.00 & \best{77.3 $\pm$ 0.58} & 78.1 $\pm$ 10.8 & \second{4.70 $\pm$ 0.58} & 39.0 $\pm$ 2.95 \\
& ERNIE        & \second{3.10 $\pm$ 5.41} & 62.7 $\pm$ 6.43 & 74.0 $\pm$ 11.0 & 1.30 $\pm$ 0.58 & 35.9 $\pm$ 5.70 \\
& ROMANCE      & 0.00 $\pm$ 0.00 & 64.7 $\pm$ 9.07 & 70.8 $\pm$ 23.7 & 3.30 $\pm$ 3.06 & 34.7 $\pm$ 8.94 \\
& WALL         & 0.00 $\pm$ 0.00 & 70.0 $\pm$ 16.1 & 67.7 $\pm$ 7.86 & 4.00 $\pm$ 2.65 & 35.4 $\pm$ 6.65 \\
& IBAL (ours)  & \best{25.0 $\pm$ 3.71} & \second{76.0 $\pm$ 5.20} & \best{83.8 $\pm$ 5.22} & 1.70 $\pm$ 2.07 & \best{46.6 $\pm$ 4.58} \\
\midrule

% =========================
% 3s_vs_3z
% =========================
\multirow{9}{*}{\texttt{3s\_vs\_3z}}
& Vanilla QMIX & 5.20 $\pm$ 6.50 & 89.0 $\pm$ 1.00 & 76.7 $\pm$ 11.59 & 2.10 $\pm$ 3.61 & 43.2 $\pm$ 5.68 \\
& Rand-Obs     & 22.1 $\pm$ 2.75 & 96.3 $\pm$ 2.31 & 88.1 $\pm$ 2.14 & 29.2 $\pm$ 7.22 & 58.9 $\pm$ 3.60 \\
& Rand-Act     & 12.5 $\pm$ 5.41 & 90.7 $\pm$ 8.08 & 91.3 $\pm$ 2.17 & 30.2 $\pm$ 3.62 & 56.2 $\pm$ 4.82 \\
& FGSM         & 25.0 $\pm$ 16.5 & 95.8 $\pm$ 5.03 & 96.2 $\pm$ 1.57 & 41.2 $\pm$ 10.3 & 64.6 $\pm$ 8.37 \\
& ATLA         & 3.10 $\pm$ 3.12 & 98.2 $\pm$ 0.76 & 96.4 $\pm$ 2.32 & 43.8 $\pm$ 5.41 & 60.9 $\pm$ 2.90 \\
& ERNIE        & 6.20 $\pm$ 10.8 & 98.0 $\pm$ 0.97 & 92.1 $\pm$ 4.02 & 45.4 $\pm$ 25.9 & 60.4 $\pm$ 10.4 \\
& ROMANCE      & \second{55.4 $\pm$ 3.55} & 98.0 $\pm$ 2.65 & 96.8 $\pm$ 1.75 & 57.1 $\pm$ 27.9 & \second{76.1 $\pm$ 8.96} \\
& WALL         & 37.5 $\pm$ 19.0 & \second{98.8 $\pm$ 0.75} & \second{97.8 $\pm$ 1.59} & \second{68.8 $\pm$ 11.3} & 75.7 $\pm$ 8.16 \\
& IBAL (ours)  & \best{93.5 $\pm$ 2.94} & \best{99.7 $\pm$ 0.58} & \best{99.9 $\pm$ 0.10} & \best{81.7 $\pm$ 6.13} & \best{93.7 $\pm$ 2.44} \\
\midrule

% =========================
% 2s3z
% =========================
\multirow{9}{*}{\texttt{2s3z}}
& Vanilla QMIX & 0.00 $\pm$ 0.00 & 80.7 $\pm$ 10.1 & 46.2 $\pm$ 19.8 & 33.0 $\pm$ 6.08 & 39.9 $\pm$ 9.95 \\
& Rand-Obs     & 3.10 $\pm$ 3.12 & 84.3 $\pm$ 6.51 & 58.3 $\pm$ 17.2 & 28.0 $\pm$ 5.00 & 43.4 $\pm$ 7.21 \\
& Rand-Act     & 2.10 $\pm$ 1.79 & 81.0 $\pm$ 5.20 & 46.9 $\pm$ 5.40 & 33.3 $\pm$ 11.0 & 40.8 $\pm$ 5.78 \\
& FGSM         & 1.00 $\pm$ 1.80 & 84.0 $\pm$ 2.00 & 62.5 $\pm$ 8.27 & 32.0 $\pm$ 1.73 & 44.9 $\pm$ 3.45 \\
& ATLA         & 4.20 $\pm$ 7.22 & 82.0 $\pm$ 1.73 & 54.2 $\pm$ 14.4 & 28.0 $\pm$ 9.64 & 42.1 $\pm$ 8.25 \\
& ERNIE        & 0.00 $\pm$ 0.00 & 86.3 $\pm$ 0.58 & 63.5 $\pm$ 3.61 & 28.3 $\pm$ 4.73 & 44.5 $\pm$ 1.98 \\
& ROMANCE      & \second{8.30 $\pm$ 3.61} & 83.7 $\pm$ 2.52 & 60.4 $\pm$ 17.2 & 41.7 $\pm$ 9.45 & 48.5 $\pm$ 8.95 \\
& WALL         & 7.30 $\pm$ 6.50 & \second{88.3 $\pm$ 2.31} & \second{75.0 $\pm$ 8.27} & \best{60.7 $\pm$ 5.03} & \second{57.8 $\pm$ 5.52} \\
& IBAL (ours)  & \best{81.7 $\pm$ 4.51} & \best{92.7 $\pm$ 6.03} & \best{78.0 $\pm$ 4.36} & \second{59.3 $\pm$ 16.3} & \best{77.9 $\pm$ 7.79} \\
\midrule

% =========================
% 8m
% =========================
\multirow{9}{*}{\texttt{8m}}
& Vanilla QMIX & 13.3 $\pm$ 1.44 & 54.3 $\pm$ 6.03 & 52.5 $\pm$ 12.1 & 0.00 $\pm$ 0.00 & 30.0 $\pm$ 4.89 \\
& Rand-Obs     & 10.4 $\pm$ 1.75 & 42.3 $\pm$ 3.06 & 43.8 $\pm$ 3.12 & 0.00 $\pm$ 0.00 & 24.1 $\pm$ 1.98 \\
& Rand-Act     & 22.3 $\pm$ 6.88 & 34.0 $\pm$ 13.1 & 41.2 $\pm$ 20.1 & 0.00 $\pm$ 0.00 & 24.4 $\pm$ 10.1 \\
& FGSM         & 21.9 $\pm$ 6.25 & 58.0 $\pm$ 17.1 & 64.6 $\pm$ 20.1 & 1.00 $\pm$ 1.73 & 36.4 $\pm$ 11.3 \\
& ATLA         & 5.20 $\pm$ 1.80 & 45.7 $\pm$ 13.3 & 50.0 $\pm$ 16.2 & 0.00 $\pm$ 0.00 & 25.2 $\pm$ 7.83 \\
& ERNIE        & 8.30 $\pm$ 7.22 & 32.7 $\pm$ 17.2 & 53.1 $\pm$ 9.38 & 0.00 $\pm$ 0.00 & 23.5 $\pm$ 8.46 \\
& ROMANCE      & 18.8 $\pm$ 15.6 & 62.0 $\pm$ 12.1 & 58.3 $\pm$ 15.7 & 0.70 $\pm$ 0.58 & 35.9 $\pm$ 11.0 \\
& WALL         & \second{25.4 $\pm$ 13.1} & \second{66.7 $\pm$ 14.3} & \second{70.8 $\pm$ 10.0} & \second{1.70 $\pm$ 2.08} & \second{41.1 $\pm$ 9.38} \\
& IBAL (ours)  & \best{87.0 $\pm$ 5.17} & \best{92.3 $\pm$ 3.79} & \best{89.8 $\pm$ 7.36} & \best{20.0 $\pm$ 4.36} & \best{72.3 $\pm$ 5.17} \\

\bottomrule
\end{tabular}%
}
\label{tab:performance_nonparam_1}
\end{table*}
\endgroup

%%%%%%%%%%%%%%%%%%%%%%%%%%%%%%%%%%%%%%%%%%%%%%%%%%%%%%%%%%%%%%%%%%%%%%%%%%%%%%%
\newpage

\begingroup
\setlength{\aboverulesep}{0.15ex}
\setlength{\belowrulesep}{0.15ex}
\setlength{\cmidrulekern}{0pt}

\begin{table*}[h!]
\centering
\footnotesize
\setlength{\tabcolsep}{4.2pt}
\renewcommand{\arraystretch}{1.18}
\caption{Success rates (\%) under non-parametric perturbations on SMAC scenarios: \texttt{1c3s5z} and \texttt{MMM}.}
\resizebox{0.9\textwidth}{!}{%
\begin{tabular}{@{}l l *{5}{c} @{\hspace{6pt}} c@{}}
\toprule
\multirow{2}{*}{Scenario}
& \multirow{2}{*}{Model}
& \multicolumn{5}{c}{Attack}
& \multirow{2}{*}{Mean} \\
\cmidrule(lr){3-7}
& & Dis-1 & Dis-2 & HP-10 & HP-15 & HP-20 & \\
\midrule

% =========================
% 1c3s5z
% =========================
\multirow{9}{*}{\texttt{1c3s5z}}
& Vanilla QMIX & 80.8 $\pm$ 3.73 & 35.4 $\pm$ 10.9 & 85.3 $\pm$ 4.51 & \second{84.3 $\pm$ 4.51} & 66.3 $\pm$ 4.62 & 70.4 $\pm$ 5.67 \\
& Rand-Obs     & 81.2 $\pm$ 8.27 & 42.0 $\pm$ 17.5 & 83.7 $\pm$ 1.16 & 77.0 $\pm$ 7.81 & 67.0 $\pm$ 10.4 & 70.2 $\pm$ 9.04 \\
& Rand-Act     & 80.6 $\pm$ 10.5 & 26.3 $\pm$ 6.62 & 85.7 $\pm$ 2.89 & 70.0 $\pm$ 8.66 & 61.5 $\pm$ 8.19 & 64.8 $\pm$ 7.38 \\
& FGSM         & 80.2 $\pm$ 11.8 & 32.3 $\pm$ 14.0 & 88.0 $\pm$ 5.00 & 82.7 $\pm$ 9.24 & \second{67.9 $\pm$ 11.2} & 70.2 $\pm$ 10.2 \\
& ATLA         & 85.4 $\pm$ 4.77 & 39.6 $\pm$ 24.3 & 90.3 $\pm$ 1.16 & 78.3 $\pm$ 6.51 & 62.0 $\pm$ 8.00 & 71.1 $\pm$ 8.94 \\
& ERNIE        & \second{87.6 $\pm$ 3.01} & 29.5 $\pm$ 6.95 & 91.0 $\pm$ 6.93 & 79.3 $\pm$ 14.4 & 47.7 $\pm$ 45.2 & 67.0 $\pm$ 15.3 \\
& ROMANCE      & 86.5 $\pm$ 7.22 & 33.3 $\pm$ 7.22 & 88.0 $\pm$ 3.61 & 72.7 $\pm$ 13.8 & 58.7 $\pm$ 5.77 & 67.8 $\pm$ 7.84 \\
& WALL         & 80.2 $\pm$ 6.50 & \second{50.0 $\pm$ 13.6} & \second{92.0 $\pm$ 2.65} & 81.7 $\pm$ 10.8 & 61.7 $\pm$ 16.8 & \second{73.1 $\pm$ 12.1} \\
& IBAL (ours)  & \best{99.0 $\pm$ 1.80} & \best{71.0 $\pm$ 4.77} & \best{93.3 $\pm$ 3.22} & \best{86.7 $\pm$ 7.51} & \best{83.3 $\pm$ 4.93} & \best{86.7 $\pm$ 4.45} \\
\midrule

% =========================
% MMM
% =========================
\multirow{9}{*}{\texttt{MMM}}
& Vanilla QMIX & 51.0 $\pm$ 4.77 & 1.00 $\pm$ 1.80 & 87.7 $\pm$ 3.79 & 74.0 $\pm$ 8.54 & 39.6 $\pm$ 16.0 & 50.2 $\pm$ 6.99 \\
& Rand-Obs     & 71.9 $\pm$ 6.25 & 8.30 $\pm$ 4.77 & 93.0 $\pm$ 3.61 & \second{88.3 $\pm$ 5.50} & 65.4 $\pm$ 11.2 & 65.4 $\pm$ 5.65 \\
& Rand-Act     & \second{81.2 $\pm$ 8.27} & 10.4 $\pm$ 3.61 & 91.7 $\pm$ 5.13 & 84.0 $\pm$ 6.08 & 58.3 $\pm$ 7.22 & 65.1 $\pm$ 5.47 \\
& FGSM         & 67.3 $\pm$ 7.94 & 3.10 $\pm$ 3.12 & \best{95.0 $\pm$ 1.73} & 77.0 $\pm$ 9.85 & 52.1 $\pm$ 14.1 & 58.9 $\pm$ 7.35 \\
& ATLA         & 65.6 $\pm$ 11.3 & 8.30 $\pm$ 9.02 & 91.7 $\pm$ 6.66 & 83.7 $\pm$ 8.96 & 51.0 $\pm$ 15.4 & 60.1 $\pm$ 10.3 \\
& ERNIE        & 69.8 $\pm$ 14.4 & 7.30 $\pm$ 3.61 & 89.3 $\pm$ 7.37 & 77.0 $\pm$ 16.5 & 49.0 $\pm$ 13.0 & 58.5 $\pm$ 11.0 \\
& ROMANCE      & 71.9 $\pm$ 9.38 & 13.5 $\pm$ 7.86 & 91.0 $\pm$ 8.18 & 79.3 $\pm$ 3.21 & 61.5 $\pm$ 11.8 & 63.4 $\pm$ 8.89 \\
& WALL         & 77.1 $\pm$ 19.1 & \second{27.1 $\pm$ 4.77} & 90.0 $\pm$ 16.5 & 83.7 $\pm$ 21.5 & \second{82.3 $\pm$ 12.6} & \second{72.1 $\pm$ 14.9} \\
& IBAL (ours)  & \best{92.7 $\pm$ 2.31} & \best{58.1 $\pm$ 8.72} & \second{94.3 $\pm$ 2.52} & \best{89.7 $\pm$ 3.21} & \best{86.7 $\pm$ 1.44} & \best{84.3 $\pm$ 3.24} \\

\bottomrule
\end{tabular}%
}
\label{tab:performance_nonparam_2}
\end{table*}
\endgroup

\newpage 

\subsection{Performance Comparison Results on Additional Benchmarks}
\label{appendix:additional_benchmark_results}

\paragraph{Level-Based Foraging.}
Table~\ref{tab:lbf_additional_results} reports additional success-rate comparisons on Level-Based Foraging (LBF) using the \texttt{Foraging-20x20-np-6f-s3} scenario, where \(n\) denotes the number of agents and we evaluate 5, 10, and 15-agent settings. Existing attack methods have only limited effects on LBF, as most baselines still maintain relatively high success rates under random and FGSM attacks. In contrast, the proposed interaction-breaking attack is substantially more damaging because it directly disrupts inter-agent coordination. IBAL remains robust not only under the proposed attack but also across other attack settings, consistently achieving strong performance across different numbers of agents. These results suggest that interaction-breaking adversarial learning transfers effectively to cooperative foraging tasks beyond SMAC.

\begingroup
\setlength{\aboverulesep}{0.15ex}
\setlength{\belowrulesep}{0.15ex}
\setlength{\cmidrulekern}{0pt}

\begin{table*}[h!]
\centering
\footnotesize
\setlength{\tabcolsep}{4.2pt}
\renewcommand{\arraystretch}{1.18}
\caption{Success rates (\%) on Level-Based Foraging (LBF) under various attack settings.}
\resizebox{0.6\textwidth}{!}{%
\begin{tabular}{@{}l l *{4}{c}@{}}
\toprule
\multirow{2}{*}{Scenario}
& \multirow{2}{*}{Model}
& \multicolumn{4}{c}{Attack} \\
\cmidrule(lr){3-6}
& & Nat. & Rand. & FGSM & Ours \\
\midrule

\multirow{6}{*}{\texttt{5p}}
& Vanilla QMIX & \second{80.0 $\pm$ 1.9} & 74.0 $\pm$ 1.4 & 69.0 $\pm$ 6.4 & 48.0 $\pm$ 4.9 \\
& Rand-Obs     & 80.0 $\pm$ 9.8 & \second{77.0 $\pm$ 0.7} & 72.0 $\pm$ 7.1 & 47.0 $\pm$ 1.4 \\
& Rand-Act     & 80.0 $\pm$ 5.8 & 77.0 $\pm$ 3.3 & 72.0 $\pm$ 1.8 & 46.0 $\pm$ 5.7 \\
& FGSM         & 77.0 $\pm$ 6.9 & 73.0 $\pm$ 3.7 & \second{77.0 $\pm$ 3.0} & \second{57.0 $\pm$ 3.5} \\
& ATLA         & 73.0 $\pm$ 3.4 & 72.0 $\pm$ 2.6 & 72.0 $\pm$ 7.1 & 55.0 $\pm$ 1.4 \\
& IBAL (ours)  & \best{83.0 $\pm$ 1.9} & \best{81.0 $\pm$ 0.9} & \best{78.0 $\pm$ 2.0} & \best{70.0 $\pm$ 2.1} \\

\midrule

\multirow{6}{*}{\texttt{10p}}
& Vanilla QMIX & \best{97.0 $\pm$ 2.1} & 92.0 $\pm$ 3.0 & 89.0 $\pm$ 4.9 & 56.0 $\pm$ 2.3 \\
& Rand-Obs     & 93.0 $\pm$ 1.1 & 94.0 $\pm$ 1.6 & 90.0 $\pm$ 2.9 & 55.0 $\pm$ 0.1 \\
& Rand-Act     & 95.0 $\pm$ 0.6 & 92.0 $\pm$ 0.6 & 90.0 $\pm$ 3.7 & 63.0 $\pm$ 4.2 \\
& FGSM         & 92.0 $\pm$ 1.8 & 94.0 $\pm$ 4.2 & \second{94.0 $\pm$ 0.3} & \second{76.0 $\pm$ 1.0} \\
& ATLA         & 94.0 $\pm$ 0.7 & \second{94.0 $\pm$ 1.4} & 93.0 $\pm$ 0.1 & 50.0 $\pm$ 1.7 \\
& IBAL (ours)  & \second{96.0 $\pm$ 1.4} & \best{97.0 $\pm$ 0.5} & \best{95.0 $\pm$ 0.5} & \best{95.0 $\pm$ 0.4} \\

\midrule

\multirow{6}{*}{\texttt{15p}}
& Vanilla QMIX & 96.0 $\pm$ 1.9 & 93.0 $\pm$ 1.5 & 92.0 $\pm$ 0.5 & 54.0 $\pm$ 3.9 \\
& Rand-Obs     & 96.0 $\pm$ 1.3 & 96.0 $\pm$ 0.1 & 96.0 $\pm$ 1.3 & \second{67.0 $\pm$ 8.2} \\
& Rand-Act     & \best{99.0 $\pm$ 0.1} & \second{97.0 $\pm$ 0.1} & 96.0 $\pm$ 1.4 & 67.0 $\pm$ 9.2 \\
& FGSM         & \second{99.0 $\pm$ 0.2} & 97.0 $\pm$ 0.7 & \best{98.0 $\pm$ 2.0} & 65.0 $\pm$ 1.5 \\
& ATLA         & 98.0 $\pm$ 0.8 & 94.0 $\pm$ 0.7 & 96.0 $\pm$ 0.3 & 66.0 $\pm$ 0.6 \\
& IBAL (ours)  & 99.0 $\pm$ 1.1 & \best{99.0 $\pm$ 1.4} & \second{98.0 $\pm$ 2.1} & \best{96.0 $\pm$ 0.2} \\

\bottomrule
\end{tabular}%
}
\label{tab:lbf_additional_results}
\end{table*}
\endgroup

\newpage 

\paragraph{SMACv2.}
Table~\ref{tab:smacv2_additional_results} reports additional win-rate comparisons on three SMACv2 scenarios, \texttt{terran\_5\_vs\_5}, \texttt{zerg\_5\_vs\_5}, and \texttt{protoss\_5\_vs\_5}, which introduce greater stochasticity than SMAC. Due to this increased stochasticity, all methods generally achieve lower natural performance than in the original SMAC scenarios, indicating that SMACv2 poses a more challenging coordination problem even without explicit attacks. Nevertheless, IBAL achieves the highest natural performance across all three SMACv2 scenarios, suggesting that interaction-breaking adversarial learning improves robustness not only against explicit attacks but also against stochastic variations in the environment. Under the proposed interaction-breaking attack, baseline methods suffer severe performance drops, whereas IBAL consistently preserves substantially higher win rates. These results indicate that IBAL learns policies that are less dependent on brittle inter-agent interactions and therefore remain more reliable both under attacks and in highly stochastic cooperative combat scenarios.

\begingroup
\setlength{\aboverulesep}{0.15ex}
\setlength{\belowrulesep}{0.15ex}
\setlength{\cmidrulekern}{0pt}

\begin{table*}[h!]
\centering
\footnotesize
\setlength{\tabcolsep}{4.2pt}
\renewcommand{\arraystretch}{1.18}
\caption{Win rates (\%) on SMACv2 scenarios under various attack settings.}
\resizebox{0.75\textwidth}{!}{%
\begin{tabular}{@{}l l *{4}{c}@{}}
\toprule
\multirow{2}{*}{Scenario}
& \multirow{2}{*}{Model}
& \multicolumn{4}{c}{Attack} \\
\cmidrule(lr){3-6}
& & Nat. & Rand. & FGSM & Ours \\
\midrule

\multirow{6}{*}{\texttt{terran\_5\_vs\_5}}
& Vanilla QMIX & 60.9 $\pm$ 2.20  & 55.6 $\pm$ 0.900  & 39.1 $\pm$ 2.20  & 10.9 $\pm$ 6.60 \\
& Rand-Obs     & 64.1 $\pm$ 6.60  & \second{66.2 $\pm$ 3.70} & 48.4 $\pm$ 2.20  & 12.5 $\pm$ 4.40 \\
& Rand-Act     & \second{71.9 $\pm$ 4.40} & 62.3 $\pm$ 5.00  & 50.0 $\pm$ 8.80  & \second{26.6 $\pm$ 2.20} \\
& FGSM         & 60.9 $\pm$ 2.20  & 60.9 $\pm$ 15.5 & \second{57.8 $\pm$ 2.20} & 17.2 $\pm$ 11.0 \\
& ATLA         & 65.9 $\pm$ 8.40  & 57.8 $\pm$ 11.0 & 56.3 $\pm$ 8.80  & 20.9 $\pm$ 1.30 \\
& IBAL (ours)  & \best{78.1 $\pm$ 4.40} & \best{72.2 $\pm$ 4.90} & \best{64.7 $\pm$ 3.10} & \best{48.8 $\pm$ 2.50} \\

\midrule

\multirow{6}{*}{\texttt{zerg\_5\_vs\_5}}
& Vanilla QMIX & 45.3 $\pm$ 11.0 & 24.7 $\pm$ 4.90  & 21.9 $\pm$ 4.40  & 11.3 $\pm$ 1.80 \\
& Rand-Obs     & \second{48.4 $\pm$ 2.20} & \second{45.3 $\pm$ 11.0} & 24.9 $\pm$ 4.40  & 15.4 $\pm$ 9.10 \\
& Rand-Act     & 35.9 $\pm$ 2.20  & 43.8 $\pm$ 8.80  & 29.7 $\pm$ 2.20  & 18.7 $\pm$ 4.40 \\
& FGSM         & 40.6 $\pm$ 4.50  & 42.8 $\pm$ 1.40  & \second{40.6 $\pm$ 17.8} & \second{26.5 $\pm$ 2.10} \\
& ATLA         & 42.2 $\pm$ 2.20  & 36.5 $\pm$ 1.40  & 26.6 $\pm$ 6.60  & 16.5 $\pm$ 2.10 \\
& IBAL (ours)  & \best{51.6 $\pm$ 11.1} & \best{46.9 $\pm$ 4.40} & \best{42.2 $\pm$ 15.5} & \best{33.1 $\pm$ 2.80} \\

\midrule

\multirow{6}{*}{\texttt{protoss\_5\_vs\_5}}
& Vanilla QMIX & 54.7 $\pm$ 20.8 & 37.5 $\pm$ 8.90  & 39.1 $\pm$ 15.5 & 5.4 $\pm$ 1.20 \\
& Rand-Obs     & \second{62.2 $\pm$ 4.90} & \second{70.3 $\pm$ 11.0} & 35.9 $\pm$ 11.1 & \second{21.9 $\pm$ 4.40} \\
& Rand-Act     & 59.4 $\pm$ 8.80  & 65.6 $\pm$ 4.40  & 45.3 $\pm$ 15.5 & 18.4 $\pm$ 0.400 \\
& FGSM         & 56.9 $\pm$ 17.7 & 48.4 $\pm$ 6.60  & \best{70.3 $\pm$ 2.20} & 7.8 $\pm$ 2.20 \\
& ATLA         & 59.4 $\pm$ 13.3 & 51.6 $\pm$ 2.20  & 43.8 $\pm$ 2.60  & 4.6 $\pm$ 2.30 \\
& IBAL (ours)  & \best{77.2 $\pm$ 2.20} & \best{75.0 $\pm$ 4.40} & \second{61.9 $\pm$ 8.00} & \best{50.4 $\pm$ 2.00} \\

\bottomrule
\end{tabular}%
}
\label{tab:smacv2_additional_results}
\end{table*}
\endgroup

%%%%%%%%%%%%%%%%%%%%%%%%%%%%%%%%%%%%%%%%%%%%%%%%%%%%%%%%%%%%%%%%%%%%%%%%%%%%%%%
\newpage

\subsection{Performance Comparison Results on Other CTDE Algorithms}
\label{appendix:other_ctde_algorithm_results}

Table~\ref{tab:mappo_ctde_results} reports additional performance comparisons using MAPPO as another CTDE backbone. Although the main experiments use QMIX, IBAL does not rely on value information or the IGM property for constructing attacks, unlike many existing attack-based robust MARL methods. Therefore, IBAL is conceptually applicable to a broader class of MARL algorithms beyond value-decomposition methods. The results show that MAPPO+IBAL maintains strong performance and robustness across all tested scenarios, especially under the proposed interaction-breaking attack, where other MAPPO-based baselines suffer severe performance degradation. This supports that the effectiveness of IBAL is not tied to a specific MARL algorithm and can also improve robustness for policy-gradient-based CTDE methods such as MAPPO.

\begingroup
\setlength{\aboverulesep}{0.15ex}
\setlength{\belowrulesep}{0.15ex}
\setlength{\cmidrulekern}{0pt}

\begin{table*}[h!]
\centering
\footnotesize
\setlength{\tabcolsep}{4.2pt}
\renewcommand{\arraystretch}{1.18}
\caption{Win rates (\%) with MAPPO-based CTDE algorithms under various attack settings.}
\resizebox{0.6\textwidth}{!}{%
\begin{tabular}{@{}l l *{3}{c}@{}}
\toprule
\multirow{2}{*}{Scenario}
& \multirow{2}{*}{Model}
& \multicolumn{3}{c}{Attack} \\
\cmidrule(lr){3-5}
& & Nat. & Rand. & Ours \\
\midrule

\multirow{6}{*}{\texttt{3s\_vs\_3z}}
& MAPPO           & 98.0 $\pm$ 2.8 & 75.4 $\pm$ 4.4 & 4.7 $\pm$ 2.2 \\
& MAPPO+Rand-Obs  & \second{99.0 $\pm$ 1.4} & 74.5 $\pm$ 5.0 & 1.6 $\pm$ 2.2 \\
& MAPPO+Rand-Act  & 97.4 $\pm$ 0.8 & 82.8 $\pm$ 6.6 & 4.7 $\pm$ 2.2 \\
& MAPPO+FGSM      & \best{99.0 $\pm$ 1.4} & \second{95.0 $\pm$ 7.1} & \second{17.2 $\pm$ 11.0} \\
& MAPPO+ATLA      & 95.5 $\pm$ 2.4 & 82.8 $\pm$ 6.6 & 0.0 $\pm$ 0.0 \\
& MAPPO+IBAL      & 98.5 $\pm$ 2.1 & \best{99.0 $\pm$ 1.4} & \best{97.4 $\pm$ 0.8} \\

\midrule

\multirow{6}{*}{\texttt{2s3z}}
& MAPPO           & 97.9 $\pm$ 2.7 & 81.8 $\pm$ 8.1 & 11.3 $\pm$ 1.4 \\
& MAPPO+Rand-Obs  & 98.4 $\pm$ 2.2 & 84.4 $\pm$ 4.4 & 16.8 $\pm$ 6.0 \\
& MAPPO+Rand-Act  & 98.0 $\pm$ 2.8 & 89.1 $\pm$ 2.2 & 20.9 $\pm$ 1.3 \\
& MAPPO+FGSM      & \second{99.5 $\pm$ 0.6} & \second{95.9 $\pm$ 1.2} & \second{45.3 $\pm$ 2.2} \\
& MAPPO+ATLA      & 99.5 $\pm$ 0.6 & 95.3 $\pm$ 2.2 & 26.6 $\pm$ 11.0 \\
& MAPPO+IBAL      & \best{100.0 $\pm$ 0.0} & \best{96.9 $\pm$ 0.1} & \best{95.3 $\pm$ 2.2} \\

\midrule

\multirow{6}{*}{\texttt{8m}}
& MAPPO           & 98.5 $\pm$ 0.7 & 76.6 $\pm$ 2.2 & 4.7 $\pm$ 6.6 \\
& MAPPO+Rand-Obs  & 97.4 $\pm$ 0.8 & 97.5 $\pm$ 3.5 & \second{25.0 $\pm$ 4.4} \\
& MAPPO+Rand-Act  & 95.9 $\pm$ 1.3 & \second{98.1 $\pm$ 0.1} & 18.6 $\pm$ 4.6 \\
& MAPPO+FGSM      & \second{99.0 $\pm$ 1.4} & 95.3 $\pm$ 2.2 & 23.4 $\pm$ 6.6 \\
& MAPPO+ATLA      & \best{100.0 $\pm$ 0.0} & 92.2 $\pm$ 2.2 & 18.8 $\pm$ 17.7 \\
& MAPPO+IBAL      & 98.3 $\pm$ 2.5 & \best{98.4 $\pm$ 2.2} & \best{95.0 $\pm$ 2.8} \\

\bottomrule
\end{tabular}%
}
\label{tab:mappo_ctde_results}
\end{table*}
\endgroup

%%%%%%%%%%%%%%%%%%%%%%%%%%%%%%%%%%%%%%%%%%%%%%%%%%%%%%%%%%%%%%%%%%%%%%%%%%%%%%%
\newpage

\section{Additional Analyses for IBAL}
\label{appendix:additional_analysis}

This section provides additional analyses and ablation studies of IBAL. Appendix~\ref{redundancy_analysis} analyzes the group redundancy term introduced in Lemma~\ref{appendix_lemma1} on SMAC environments. Appendix~\ref{appendix:IBAL_disabled} examines IBAL's behavior under the disabled setting. Appendix~\ref{appendix:ablation} presents further ablation studies on the masking budget $L$ and the minimum attack probability. Finally, Appendix~\ref{appendix:computational} discusses computational complexity.

\subsection{Analysis of Group Redundancy}
\label{redundancy_analysis}
Lemma~\ref{appendix_lemma1} uses a \emph{group redundancy term} to decompose a observation-level MI term. We empirically find that this redundancy is negligible in SMAC. This indicates that the dimension-wise MI closely approximates the corresponding group-wise MI in practice, thereby supporting the validity of MI-based dimension selection. We focus on \texttt{3m} and \texttt{3s\_vs\_3z}, where the number of agents is small and MI estimation is comparatively tractable.

Following Appendix~\ref{appendix_A.2} and \eqref{eq:mi_mobius_decomp_group}, the group redundancy term is defined as
\begin{align*}
\mathcal{R}(G_1;G_2)
=
\underbrace{
\mathcal{I}\!\left(
\tilde{\boldsymbol{o}}_{t+1}^{G_1}; \boldsymbol{a}_t^{G_2}
\,\big|\,
\boldsymbol{a}_t^{G_1}, \boldsymbol{\tau}_t
\right)
}_{\textbf{Group-wise MI}}
-
\underbrace{
\sum_{i\in G_1}\sum_{j\in G_2}\sum_{d=1}^{d_o}
\mathcal{I}\!\left(
\tilde{o}_{d,t+1}^{i}; \boldsymbol{a}_t^{j}
\,\big|\,
\boldsymbol{a}_t^i, \boldsymbol{\tau}_t
\right)
}_{\textbf{Individual MI}}.
% \label{eq:pid_upperbound}
\end{align*}

Fig.~\ref{fig:MI_pid} plots the per-timestep values of the group redundancy, the group-wise MI, and the individual MI. We observe that (a) the group redundancy stays close to zero and its magnitude is substantially smaller than the MI values in (b) and (c), suggesting that it has negligible impact on dimension-wise MI selection. Overall, these results empirically support the validity of decomposing the observation-level MI term into dimension-wise components in practice.

\begin{figure*}[h]
    \centering
    \begin{subfigure}[t]{0.33\linewidth}
        \centering
        \includegraphics[width=\linewidth]{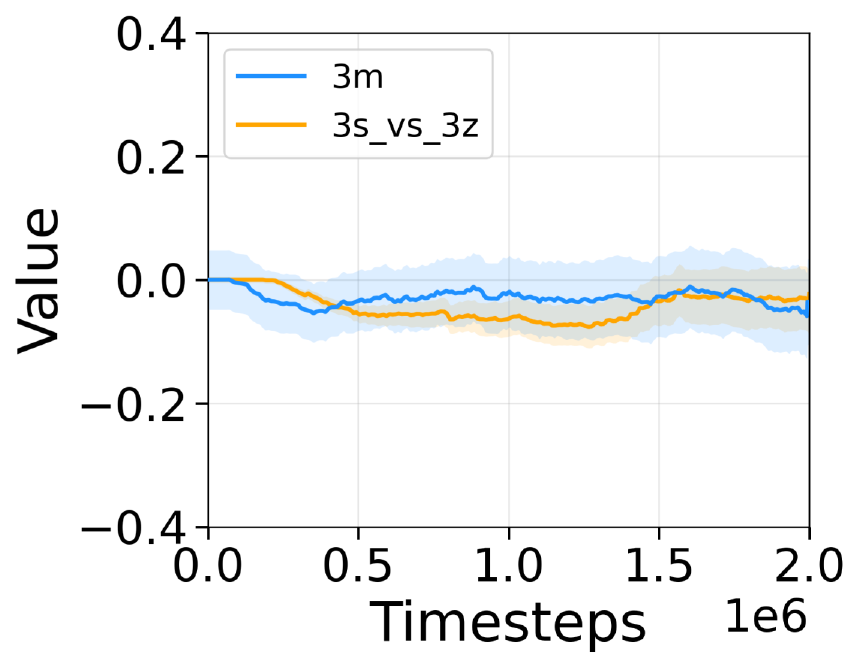}
        % \vspace{-0.25in}
        \caption{Group redundancy}
    \end{subfigure}
    \begin{subfigure}[t]{0.33\linewidth}
        \centering
        \includegraphics[width=\linewidth]{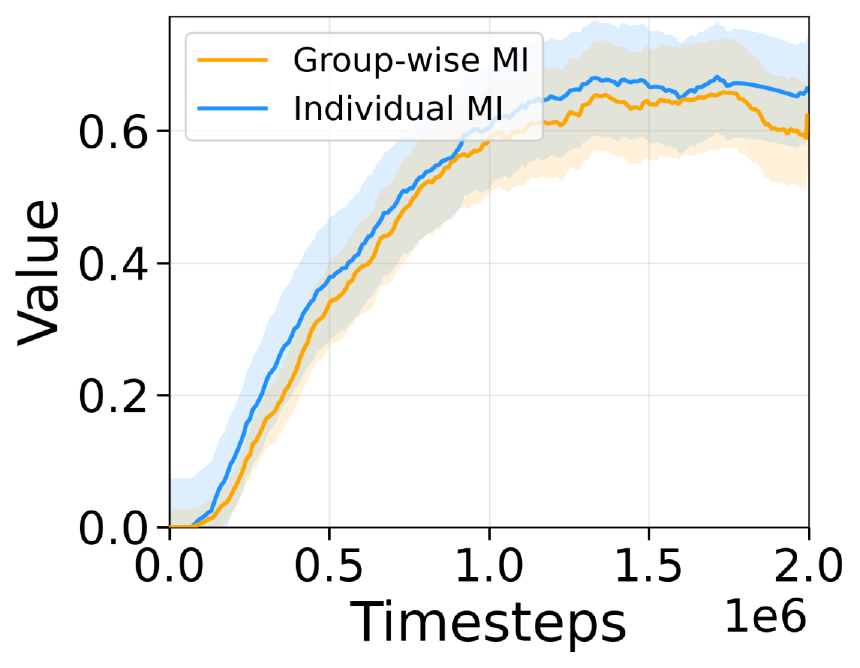}
        % \vspace{-0.25in}
        \caption{\texttt{3m}}
    \end{subfigure}
    \begin{subfigure}[t]{0.33\linewidth}
        \centering
        \includegraphics[width=\linewidth]{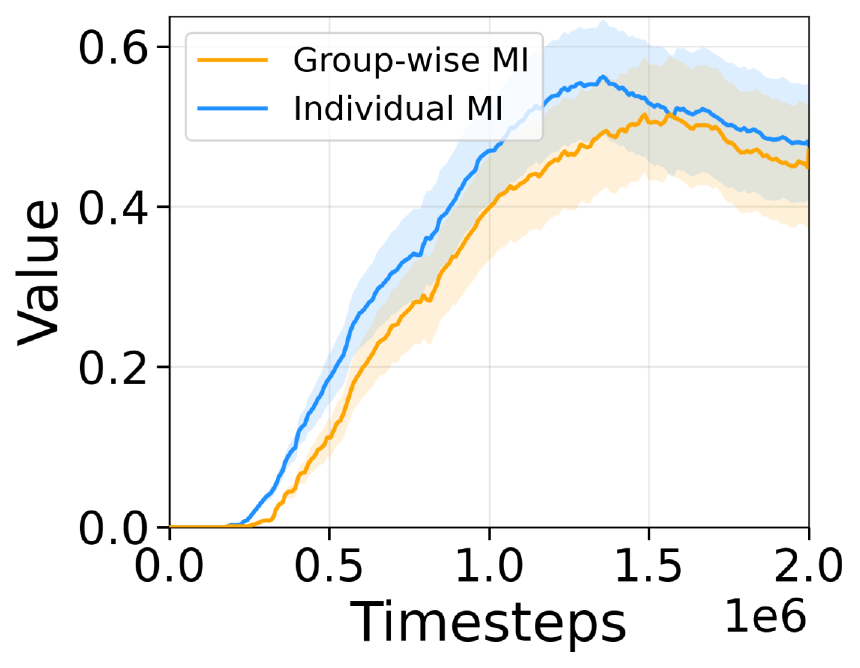}
        % \vspace{-0.25in}
        \caption{\texttt{3s\_vs\_3z}}
    \end{subfigure}
    % \vspace{-0.05in}
    \caption{Comparison results: (a) group redundancy on \texttt{3m} and \texttt{3s\_vs\_3z}, and group-wise/individual MI on (b) \texttt{3m} and \texttt{3s\_vs\_3z}}
    \label{fig:MI_pid}
    \vspace{-0.2in}
\end{figure*}

%graph 왼쪽 axis 전부 value로 변경

\newpage
\subsection{Trajectory Analysis under the Disabled Setting}
\label{appendix:IBAL_disabled}
\begin{figure*}[h!]
    \begin{center}
    \includegraphics[width=0.95\linewidth]{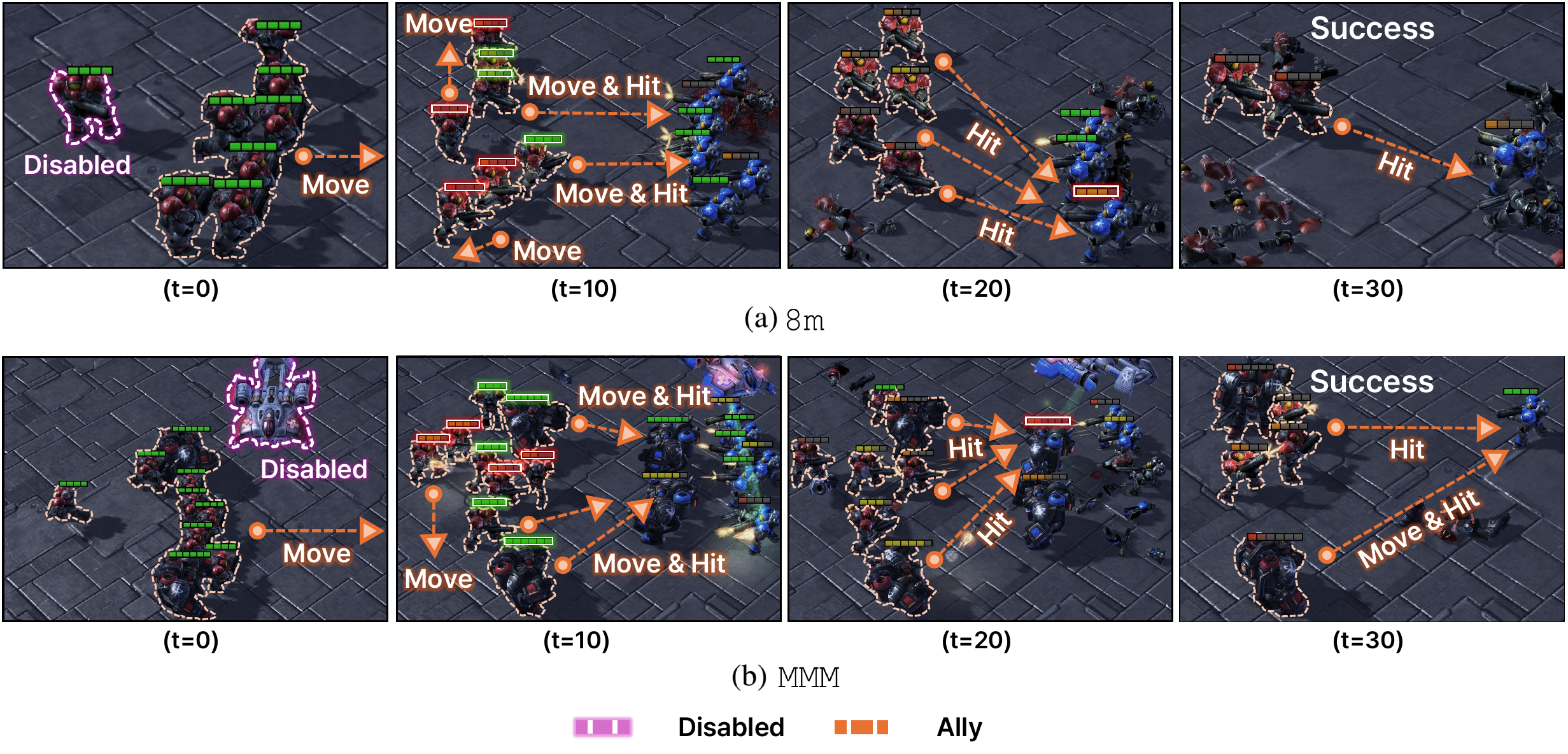}
    % \vspace{-0.05in}
    \caption{Trajectory analysis of IBAL under Dis-1 setting on \texttt{8m} and \texttt{MMM} scenarios.}
    \label{fig:additional_visualization}
    % \vspace{-0.2in}
    \end{center}
\end{figure*}

In the main paper, we analyzed how agents respond to the proposed interaction-breaking attack that suppresses cross-group coordination. 
Here, we further examine IBAL under non-parametric perturbations, focusing on Dis-1 in \texttt{8m} and \texttt{MMM}. 

Under Dis-1, one agent becomes disabled and remains stationary, contributing no further actions to cooperation. Despite this failure, IBAL reorganizes the remaining agents into a new coordination pattern in \texttt{8m}: at $t=10$, low-health agents fall back while a healthier ally holds the front line; by $t=20$, the surviving agents concentrate fire on a single target and still complete the task. A similar adaptation appears in \texttt{MMM}. Disabling the medivac removes healing and creates a critical disadvantage, yet IBAL again exhibits coordinated re-formation among the remaining agents: low-health units retreat to preserve survivability while healthier units maintain the front, agents either focus-fire an enemy unit or prioritize the enemy medivac to regain combat advantage and succeed. Overall, these behaviors indicate that IBAL learns resilient group-wise coordination that can be re-formed even when a key contributor is removed, maintaining robustness not only to interaction-breaking attacks but also to non-parametric perturbations such as Dis-$\ell$.

\newpage 

\subsection{Additional Ablation Studies}
\label{appendix:ablation}
In this section, we provide additional ablation studies on the masking budget $L$ and the minimum attack probability in the \texttt{2s3z} and \texttt{8m} environments, under Dis-1 where the effects of these ablations are most pronounced.

\paragraph{Masking Budget $L$.} 
We analyze the effect of the masking budget $L$, defined as a per-agent budget $|D^i|=L$ for the MI-based observation attacker, where $D^i$ denotes the set of masked observation dimensions for agent $i$. For attacked agents $i\in G_1$, we set $L \propto |G_2|$ because $G_1$ masks information about $G_2$, and the amount of cross-group information to block increases as the number of agents in $G_2$ grows. Fig.~\ref{fig:ablation_L} shows results.
When $L=0$, no observation attack is applied. With a small budget, the attacker cannot sufficiently reduce cross-group MI, resulting in weaker interaction breaking and limited robustness gains. In contrast, an overly large budget masks excessive information from observations, making the perturbation too severe and hindering robust policy learning. Overall, performance is maximized at an intermediate masking budget. We find that $L=8\times|G_2|$ works best on \texttt{2s3z}, while $L=5\times|G_2|$ is optimal on \texttt{8m}, which we use as the default setting.

\begin{figure*}[h!]
    \centering
    \begin{subfigure}[t]{0.38\linewidth}
        \centering
        \includegraphics[width=\linewidth]{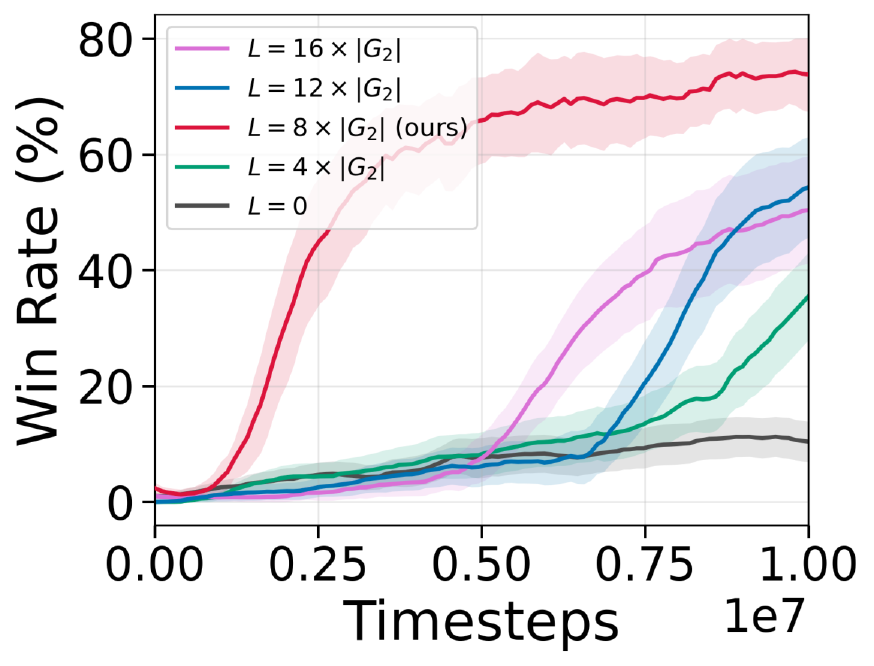}
        % \vspace{-0.25in}
        \caption{\texttt{2s3z}}
    \end{subfigure}
    \begin{subfigure}[t]{0.38\linewidth}
        \centering
        \includegraphics[width=\linewidth]{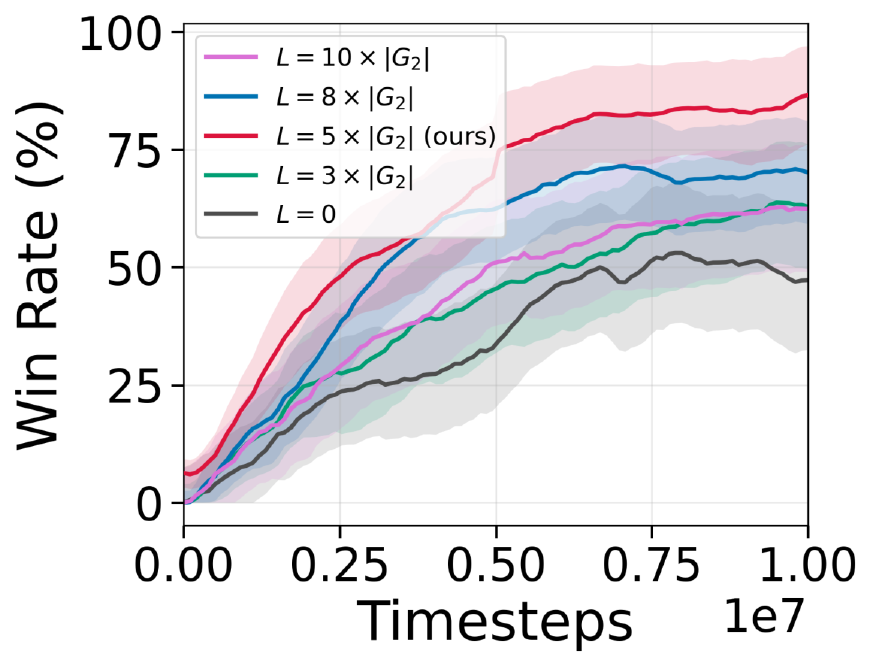}
        % \vspace{-0.25in}
        \caption{\texttt{8m}}
    \end{subfigure}
    \vspace{-0.05in}
    \caption{Ablation study for the masking budget $L$.}
    \label{fig:ablation_L}
    \vspace{-0.2in}
\end{figure*}

\paragraph{Minimum Attack Probability $P_{\mathrm{act}}^{\min}$.}
We ablate the minimum attack probability $P_{\mathrm{act}}^{\min}$ in our scheduling scheme, where
$P_{\mathrm{act}}\sim \mathrm{Unif}\!\left(P_{\mathrm{act}}^{\min},\,P_{\mathrm{act}}^{\max}\right)$ is sampled during training.
In this ablation, we use $K=1$ for \texttt{2s3z} and $K=4$ for \texttt{8m}; thus, we sweep $P_{\mathrm{act}}^{\min}$ up to the maximum probability for each environment.
As shown in Fig.~\ref{fig:ablation_Pactmin}, when $P_{\mathrm{act}}^{\min}$ is too small, action perturbations are rarely applied, yielding insufficient interaction breaking; consequently, robustness improves more slowly.
In contrast, an overly large $P_{\mathrm{act}}^{\min}$ makes action perturbations excessively frequent, leading to overly disruptive training dynamics and degraded policy learning.
Across attacks and scenarios, we find that $P_{\mathrm{act}}^{\min}=1/K$ performs best on average, and we adopt $1/K$ as the default across all environments, which corresponds to attacking one agent on average.

\begin{figure*}[h]
    \centering
    \begin{subfigure}[t]{0.38\linewidth}
        \centering
        \includegraphics[width=\linewidth]{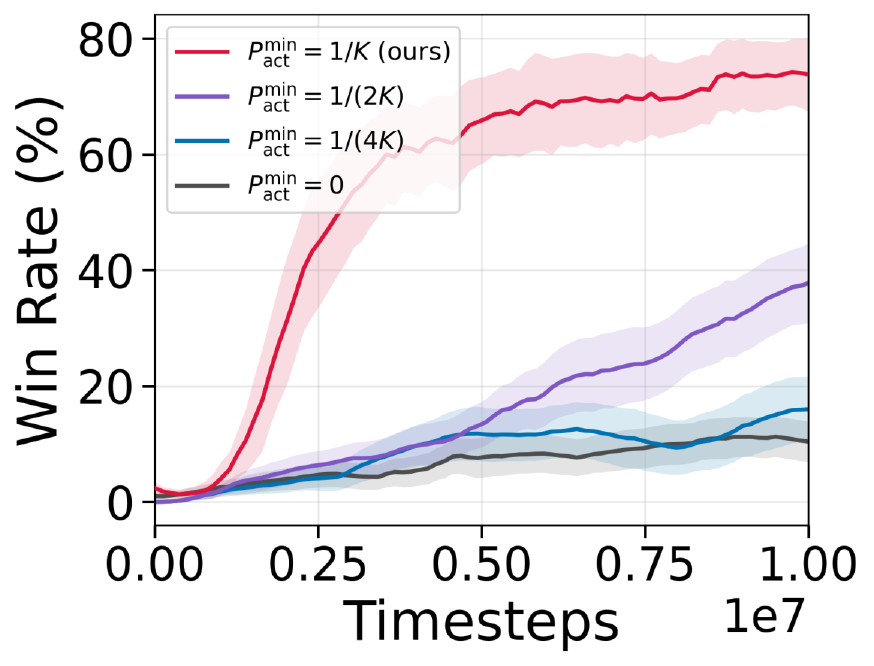}
        % \vspace{-0.25in}
        \caption{\texttt{2s3z}}
    \end{subfigure}
    \begin{subfigure}[t]{0.38\linewidth}
        \centering
        \includegraphics[width=\linewidth]{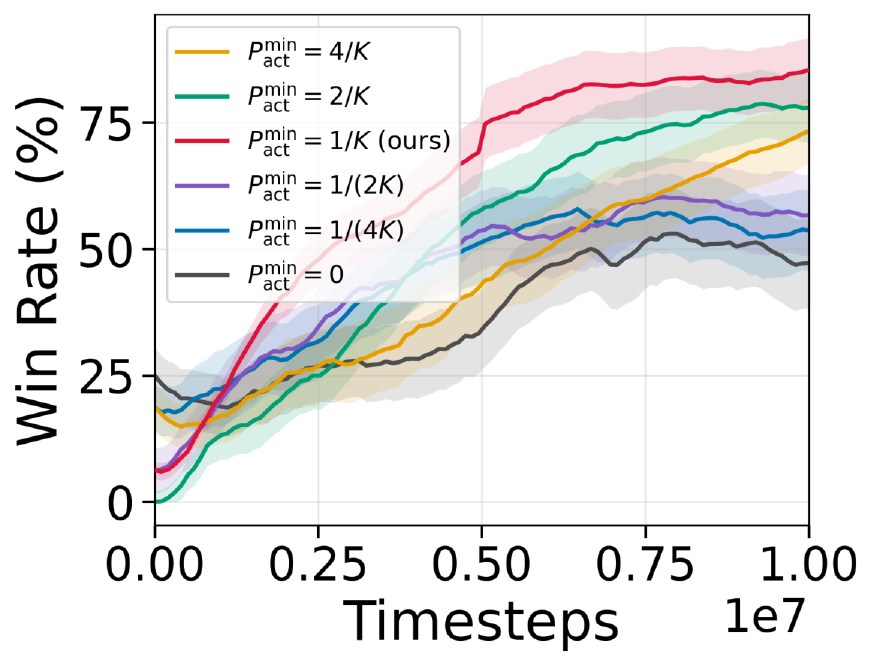}
        % \vspace{-0.25in}
        \caption{\texttt{8m}}
    \end{subfigure}
    \vspace{-0.05in}
    \caption{Ablation study for the minimum attack probability $P_{\mathrm{act}}^{\min}$.}
    \label{fig:ablation_Pactmin}
    \vspace{-0.2in}
\end{figure*}
%%%%%%%%%%%%%%%%%%%%%%%%%%%%%%%%%%%%%%%%%%%%%%%%%%%%%%%%%%%%%%%%%%%%%%%%%%%%%%%
\newpage
\subsection{Computational Complexity Analysis}
\label{appendix:computational}
We report the total training time required to reach 10M environment steps in SMAC, summarized in Table~\ref{tab:training_time}.
All times are reported in hours and minutes (h, m).
Compared to Vanilla QMIX, our method incurs additional overhead due to mutual-information estimation via observation and action reconstruction models; nevertheless, IBAL is only about $18\%$ slower than QMIX on average.
In contrast, methods that require training an RL-based adversarial attacker (ATLA, ROMANCE) or using a Transformer model (WALL) are substantially more expensive, requiring about $91\%$, $80\%$, and $80\%$ more training time than IBAL on average, respectively.
Overall, while IBAL is slightly slower than Vanilla QMIX, it achieves consistently stronger robustness with a much lower training-time overhead than ATLA, ROMANCE, and WALL. We also observe that training becomes slower when moving from \texttt{3m} to \texttt{MMM}, as the number of agents increases and the resulting scalability burden raises the computational cost across all methods.

\newcommand{\hm}[2]{\makebox[5.2em][r]{$#1\,\mathrm{h}\ #2\,\mathrm{m}$}}

\begin{table*}[h!]
\centering
\small
\setlength{\tabcolsep}{5pt}
\renewcommand{\arraystretch}{1.15}
\caption{Total training time comparison across various SMAC scenarios (h: hour, m: minute).}
\resizebox{0.9\textwidth}{!}{%
\begin{tabular}{l|cccc|c}
\toprule
\diagbox[width=10.5em,height=2.6em]{Model}{Scenario}
& \texttt{3m} & \texttt{2s3z} & \texttt{8m} & \texttt{MMM} & Mean \\
\midrule
Vanilla QMIX & \hm{36}{34} & \hm{42}{13} & \hm{52}{54} & \hm{57}{34} & \hm{47}{19} \\
FGSM         & \hm{46}{55} & \hm{52}{58} & \hm{54}{40} & \hm{63}{35} & \hm{54}{32} \\
ATLA         & \hm{100}{52} & \hm{95}{47} & \hm{129}{12} & \hm{99}{04} & \hm{106}{14} \\
ERNIE        & \hm{41}{30} & \hm{49}{00} & \hm{55}{12} & \hm{58}{10} & \hm{50}{58} \\
ROMANCE      & \hm{53}{48} & \hm{105}{24} & \hm{110}{53} & \hm{130}{10} & \hm{100}{04} \\
WALL         & \hm{90}{03} & \hm{97}{15} & \hm{103}{17} & \hm{110}{42} & \hm{100}{19} \\
IBAL (ours)  & \hm{48}{12} & \hm{49}{53} & \hm{59}{00} & \hm{65}{40} & \hm{55}{42} \\
\bottomrule
\end{tabular}%
}
\label{tab:training_time}
\end{table*}

%%%%%%%%%%%%%%%%%%%%%%%%%%%%%%%%%%%%%%%%%%%%%%%%%%%%%%%%%%%%%%%%%%%%%%%%%%%%%%%
%%%%%%%%%%%%%%%%%%%%%%%%%%%%%%%%%%%%%%%%%%%%%%%%%%%%%%%%%%%%%%%%%%%%%%%%%%%%%%%

\end{document}